\documentclass[twoside,11pt]{article}

\usepackage{jmlr2e}
\usepackage{mlmodern}

\hypersetup{
  citecolor=blue,
  linkcolor=black,  
  colorlinks=true,
  urlcolor=blue
}

\usepackage{commath}
\usepackage{microtype} 
\usepackage{mathtools}
\usepackage{thmtools}
\usepackage{cleveref}
\usepackage{bm}

\usepackage{pgfplots}
\usepgfplotslibrary{groupplots}
\pgfplotsset{compat=1.18}

\usepackage{scrextend}

\usepackage{wrapfig}

\usepackage[edges]{forest}
\usepackage{makecell}

\usepackage{placeins}

\definecolor{bluegraph}{rgb}{0.0, 0.33, 0.71}

\numberwithin{equation}{section}
\numberwithin{theorem}{section} 

\newcommand{\qed}{\hfill\BlackBox\\[2mm]}

\renewcommand{\b}[1]{\mathsf{#1}}
\newcommand{\T}{\intercal}

\DeclarePairedDelimiterX\Set[2]{\lbrace}{\rbrace}%
{ #1 \,:\, #2 }                                         
\DeclarePairedDelimiterX\inprod[2]{\langle}{\rangle}%
{ #1 , #2 }                                             

\newcommand{\R}{\mathbb{R}} 
\newcommand{\N}{\mathbb{N}} 
\newcommand{\C}{\mathbb{C}} 

\newcommand{\GP}{\textup{GP}}
\newcommand{\V}{\mathbb{V}}

\newcommand{\fGP}{\mathsf{f}_\GP}

\jmlrheading{}{2026}{}{}{}{}{Toni Karvonen and Chris Oates}

\ShortHeadings{Why not to use the Gaussian kernel}{Karvonen and Oates}
\firstpageno{1}

\begin{document}

\title{Why not to use the Gaussian kernel}
\author{\name Toni Karvonen \email toni.karvonen@lut.fi \\
       \addr School of Engineering Sciences \\
       Lappeenranta--Lahti University of Technology LUT \\
       Yliopistonkatu 34, 53850 Lappeenranta, Finland
       \\ \\
  \name Chris J. Oates \email chris.oates@ncl.ac.uk \\
       \addr School of Mathematics, Statistics \& Physics \\
       Newcastle University \\
       Newcastle upon Tyne, NE1 7RU, United Kingdom}       
\editor{}

\maketitle

\begin{abstract}
  Kernels measure similarity or correlation in tasks such as regression and classification. 
  The Gaussian kernel, other names of which include squared exponential and radial basis function kernel, is one of the most popular in Gaussian process regression. 
  We argue that the Gaussian kernel is best avoided and should never be used as a default. 
  The argument rests on two results demonstrating that the Gaussian kernel is extremely brittle.
  First, the Gaussian kernel gives rise to a conditional variance that is unrealistically small.
  If the variance is used to quantify predictive uncertainty, catastrophic overconfidence is almost inevitable.
  Second, a small variance goes hand in hand with numerical ill-conditioning, so that to use the Gaussian kernel in practice requires tricks such as nugget terms that effectively modify the underlying regression or classification model. 
  These problems are caused by the unnatural smoothness of the Gaussian kernel, a fact we are far from the first to take notice of.
  The problem is not the Gaussian form itself but the analyticity of the kernel: Our argument is more broadly that analytic kernels are best avoided.  
  For stationary kernels analyticity is essentially equivalent to an exponential decay of the spectral density.  
\end{abstract}

\begin{keywords}
  Gaussian kernel, squared exponential kernel, Gaussian process regression, uncertainty quantification
\end{keywords}

\section{Introduction}

Kernels measure similarity or correlation between inputs and underpin many popular methods for tasks such as regression, classification, principal component analysis, and space-filling design.
Although a variety of needs have bred a great variety of kernels, some have always remained more popular than others.
The purpose of this article is to make a case against the persistent use of the Gaussian kernel in Gaussian process regression.
Let $\ell > 0$ be a lengthscale parameter.
In regression, the \emph{Gaussian kernel} (see Figure~\ref{fig:kernels-intro})
\begin{equation} \label{eq:gaussian-intro}
  k(x, y) = \exp\bigg( \! - \frac{ \norm[0]{x - y}_2^2}{2 \ell^2} \bigg)
\end{equation}
measures the correlation of inputs $x, y \in \R^d$.
Also known as squared exponential, exponentiated quadratic, and radial basis function kernel, the Gaussian is probably the most popular kernel for Gaussian processes and kernel methods in machine learning.
Its popularity goes back to the very introduction of Gaussian processes to machine learning~\citep{WilliamsRasmussen1996, Neal1998}.
The Gaussian kernel is \emph{infinitely smooth}, which is to say that one can differentiate it arbitrarily many times.
In fact, it is much more than infinitely smooth: It is \emph{analytic} with infinite radius of convergence, which means that its Taylor series developed at any point converges on the whole of $\R^d$.
It is well recognised that the Gaussian kernel is popular despite severe flaws it has:

\begin{quote}
  ``Stein~[1999] argues that such strong smoothness assumptions are unrealistic for modelling many physical processes, and recommends the Matérn class [...]. However, the squared exponential is probably the most widely-used kernel within the kernel machines field.'' --- \citet[p.\@~83]{RasmussenWilliams2006}
\end{quote}

\noindent \Cref{sec:apology} collects a number of reasons that we believe have contributed to the enduring popularity of the Gaussian kernel.

\begin{center}
  \begin{figure}[t]    
    \includegraphics[width=\textwidth]{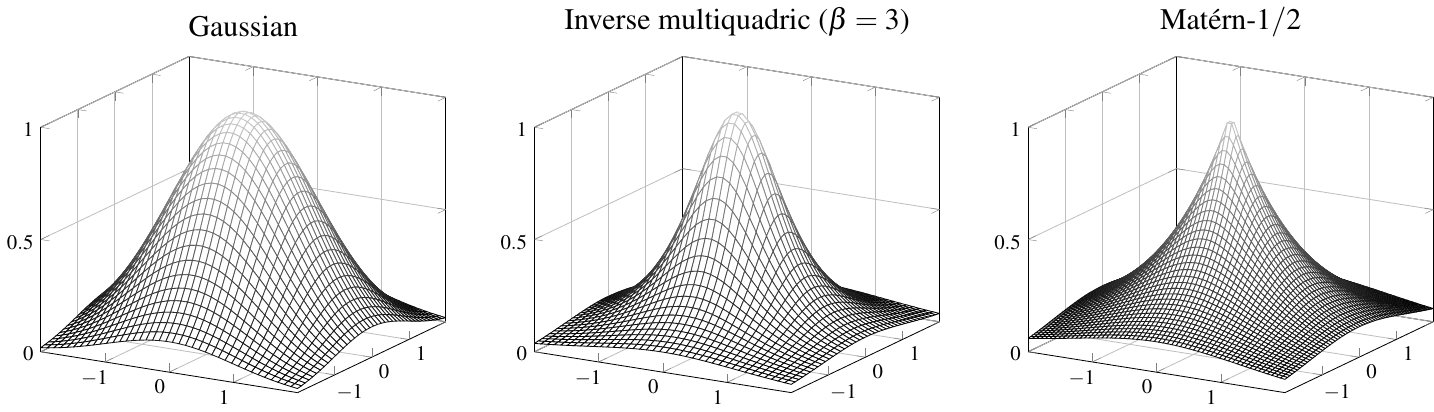}    
    \caption{Three kernels on $\R^2$ that appear in this article: The Gaussian kernel in~\eqref{eq:gaussian-intro}, the inverse multiquadric kernel in~\eqref{eq:multiquadric}, and a Matérn kernel in~\eqref{eq:matern} with $\nu = 1/2$. The plots depict kernel translates at the origin, $k(0, \cdot)$. All kernels use lengthscale $\ell = 1$.}
    \label{fig:kernels-intro}
  \end{figure}    
\end{center}

\subsection{Contributions}

Even though it has seen three decades of successful usage in machine learning (and much more in other fields), we argue that the Gaussian kernel should not be used in Gaussian process regression.
As is evident from the quote above and further citations below, we are far from the first to make this argument.
However, no comprehensive case against the Gaussian relying on mathematically rigorous results on its failure in practical scenarios has been made.
This article makes such a case.
We do not purport to demonstrate that the Gaussian kernel never works; there is ample evidence to show that it often does.
Rather, we argue that, in the absence of extremely strong prior information, one should never \emph{default} to the Gaussian kernel in applications.
At the time of writing, this recommendation is violated by many popular software packages, such as \texttt{GPy}~\citep{gpy2014} and \texttt{scikit-learn}~\citep{scikit-learn}. 
By using the Gaussian kernel one makes, knowingly or not, an analyticity assumption that the function being modelled is completely determined by its values in the neighbourhood of any given point.
This assumption is rarely, if ever, true.
While this alone might be (and has been) enough to convince some to shy away from the Gaussian kernel, the actual argument we make rests on two conspicuous implications of analyticity.

\paragraph{Problem 1: Overconfident uncertainty quantification.} Gaussian processes provide means for prediction at unseen input locations and quantification of predictive uncertainty via conditional mean and variance functions (see \Cref{sec:gps}).
The magnitude of the variance at a given point informs the user on the accuracy of the conditional mean. 
If the variance is small at a point $x$, the mean is understood to be a good approximation to the underlying latent function that generated the data.
Typically it is reasonable to expect the variance at $x$ to be small only if the dataset contains input points that lie close to $x$.
\emph{This is not what happens when the Gaussian kernel is used.}
Figure~\ref{fig:intro-variance} plots the conditional variance over $[-2, 2]^2 \subset \R^2$ for three different kernels when data are available only on the small sub-domain $[1, 2]^2 \subset \R^2$.
For the finitely smooth Matérn-$1/2$ and Matérn-$3/2$ kernels [see~\eqref{eq:matern}] the variance behaves as expected, increasing rapidly as one moves away from the sub-domain that contains the input points.
In contrast, for the Gaussian kernel the variance is much smaller on $[-2, 2]^2$ and essentially zero on $[0, 1]^2$, meaning that the model assigns little to no uncertainty for predictions made anywhere on the domain.
If the Gaussian kernel is used to quantify uncertainty in predictions when the domain of interest is not sufficiently covered by the input points, there is a real risk of overconfident uncertainty quantification that could prove disastrous in downstream tasks or in safety-critical applications.
The behaviour of the variance associated to the Gaussian kernel that is summarised in Figure~\ref{fig:intro-variance} is explained and quantified in Theorems~\ref{thm:interpolation-1d}, \ref{thm:interpolation}, and~\ref{thm:regression}.
Estimation of the lengthscale parameter $\ell$ can provide no remedy: Shrinking $\ell$ as more data are obtained slows down the rate of decay of the variance \emph{uniformly across the domain}. If, under a fixed lengthscale, the variance tends to zero everywhere even though all data are concentrated on a small sub-domain, it does so also when the lengthscale is estimated.
Remark~\ref{remark:parameter-estimation} expands upon hyperparameter estimation.

\begin{center}
  \begin{figure}[t]    
    \includegraphics[width=0.3\textwidth]{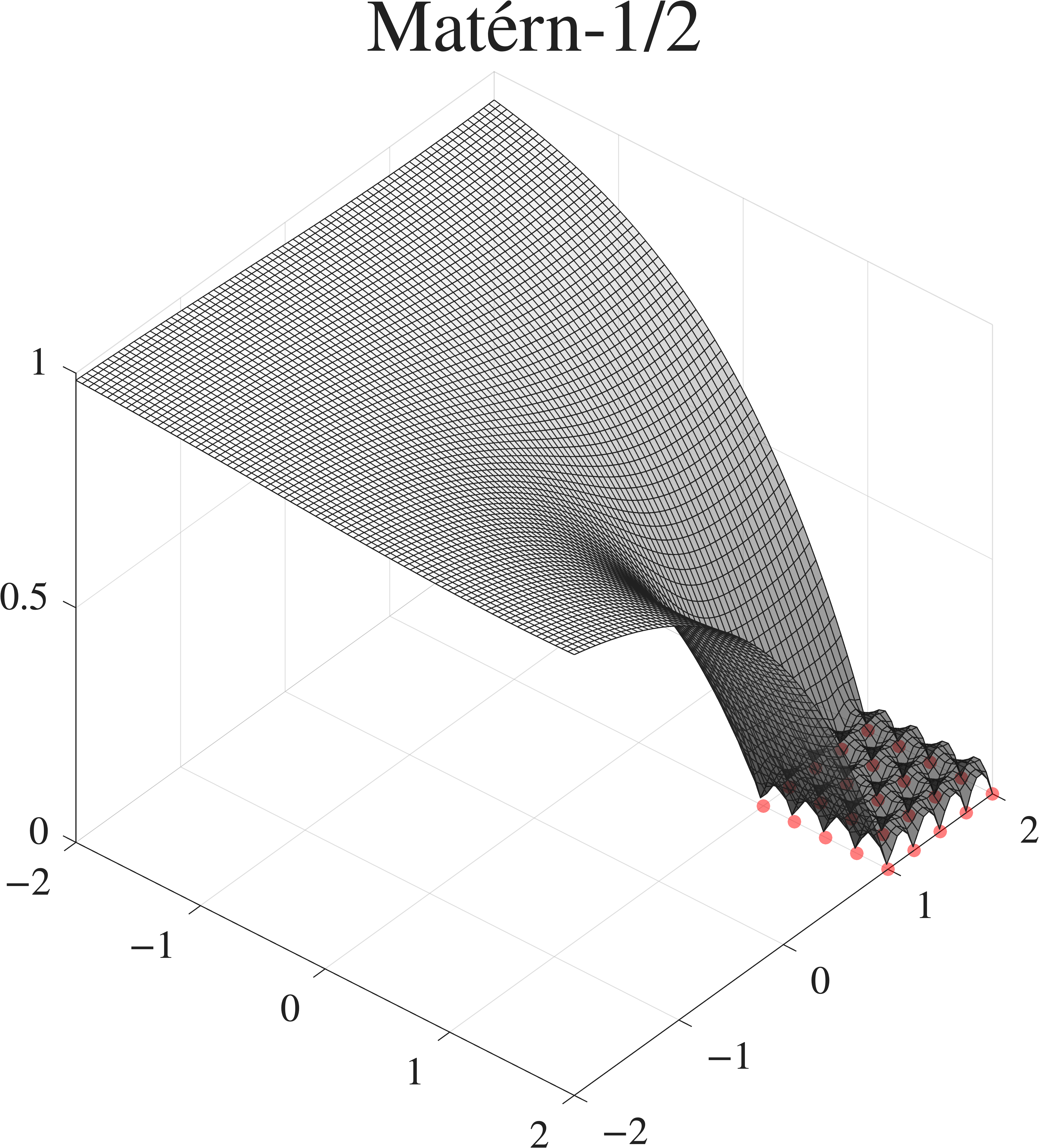}
    \hspace{0.2cm}
    \includegraphics[width=0.3\textwidth]{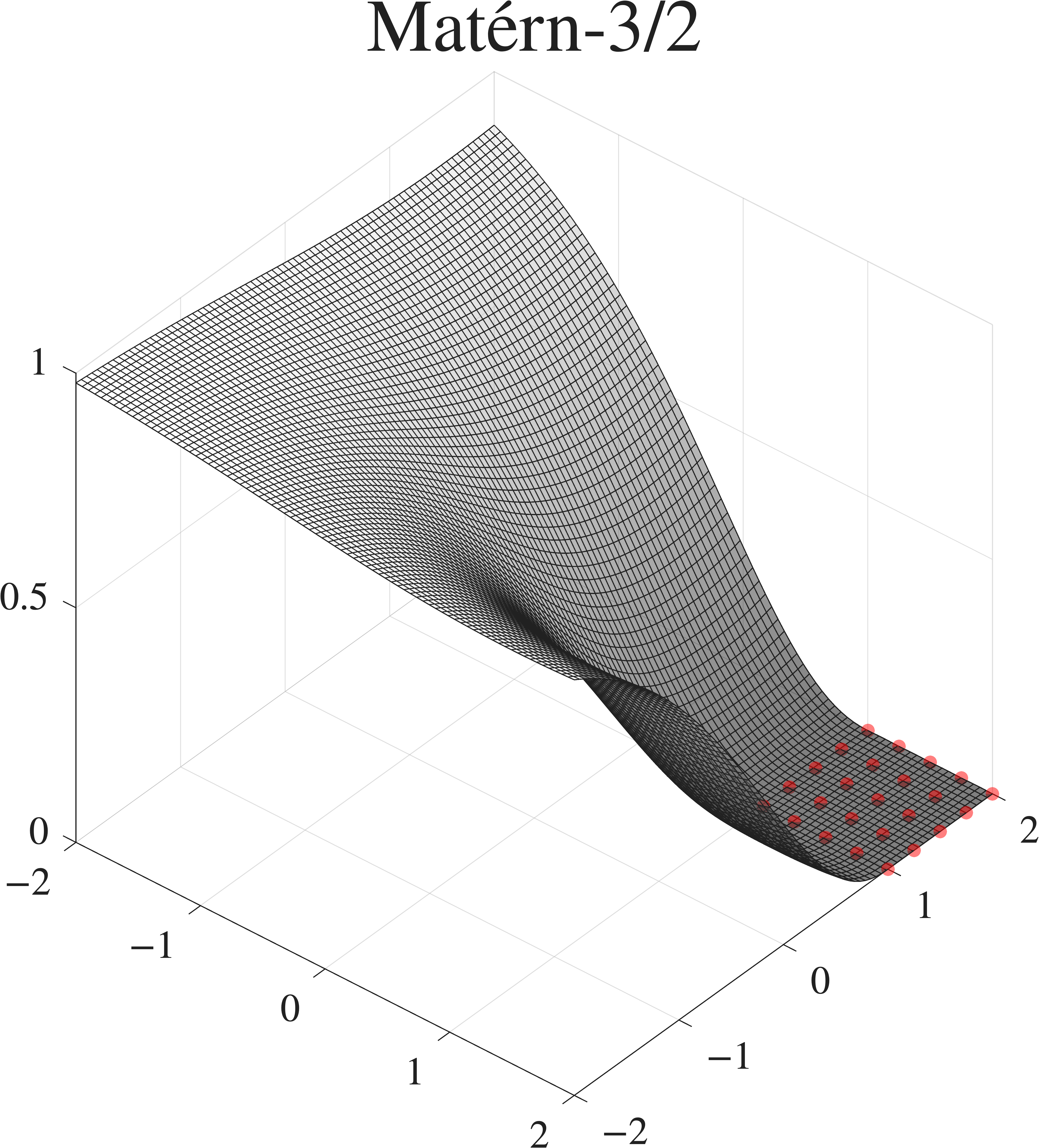}
    \hspace{0.2cm}
    \includegraphics[width=0.3\textwidth]{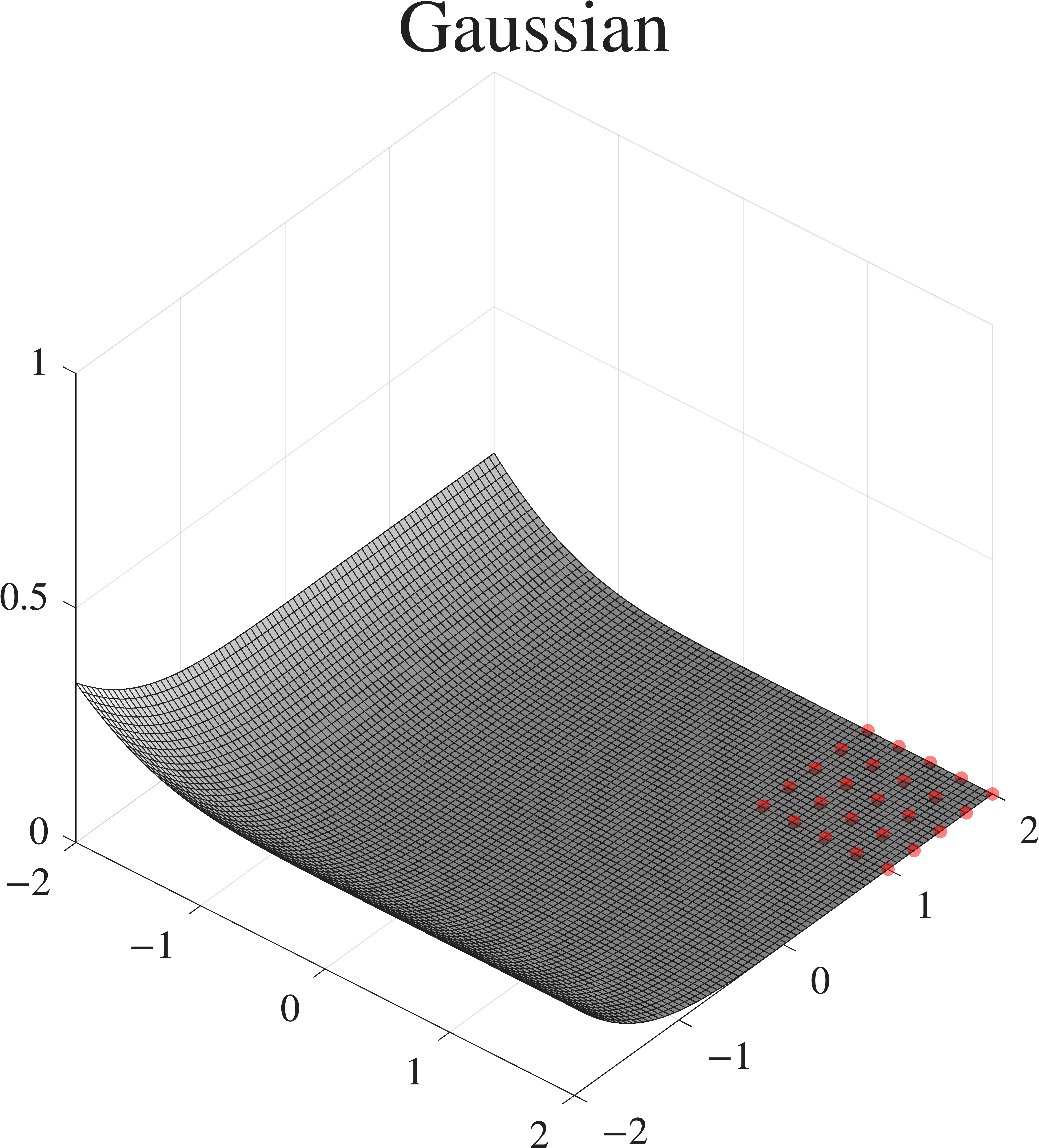}
    \caption{Conditional variance in~\eqref{eq:gp-regression-var} in the interpolatory setting (i.e., $\sigma = 0$) for two Matérn kernels defined in~\eqref{eq:matern} and the Gaussian kernel in~\eqref{eq:gaussian-intro} over the domain $D = [-2, 2]^2 \subset \R^2$.
    All kernels have lengthscale $\ell = 2$. The input points $X_n = \{x_1, \ldots, x_n\}$ (red) form a grid on the sub-domain $[1, 2]^2 \subset [-2, 2]^2$. The Gaussian kernel is covered by Theorem~\ref{thm:interpolation}; how Matérns behave is explained by Theorem~\ref{thm:matern}.}    
    \label{fig:intro-variance}
  \end{figure}    
\end{center}

\paragraph{Problem 2: Numerical ill-conditioning.}
To implement Gaussian process interpolation and many other kernel-based regression and classification methods one must solve a linear system defined by the kernel matrix 
\begin{equation*}
  \b{K}_n = (k(x_i, x_j))_{i,j=1}^n \in \R^{n \times n},
\end{equation*}
where $x_i$ are the input points.
If $k$ is the Gaussian kernel, solving the linear system is practically impossible unless $n$ is very small.
Figure~\ref{fig:intro-condition-number} plots the condition number of $\b{K}_n$ as a function of $n$ for four different kernels.
For the Gaussian kernel the condition number blows up \emph{super-exponentially fast}, which implies that Gaussian process interpolation cannot be implemented in practice. 
To combat numerical ill-conditioning one can then introduce a nugget term, a technique that is equivalent to assuming that the data are noisy.
For example, \texttt{GPy} defaults to the nugget $\sigma^2 = 10^{-8}$ and \texttt{GPyTorch}~\citep{gardner2018gpytorch} to $\sigma^2 = 10^{-6}$.
Opinions differ on whether or not nuggets ought to be used (in particular, see \citealp{GramacyLee2012}).
Regardless, what seems irrefutable to us is that applying a nugget just to make sure that a more or less arbitrarily chosen model can be implemented stably is not the correct approach to take.
That is, if one has little prior information to choose the model with, defaulting to the Gaussian kernel introduces completely unnecessary numerical difficulties that could have been largely avoided by defaulting to a less smooth kernel, such as a Matérn.
It is counterproductive to use the Gaussian kernel if in doing so one must, by introducing a nugget, modify the model.
One could as well start with a numerically more stable kernel.
Ill-conditioning results for the Gaussian kernel are given in Theorem~\ref{thm:ill-conditioning-general} and Corollary~\ref{thm:ill-conditioning-gaussian}.
\\

\begin{figure}[t]
  \begin{center}
  \includegraphics[width=0.8\textwidth]{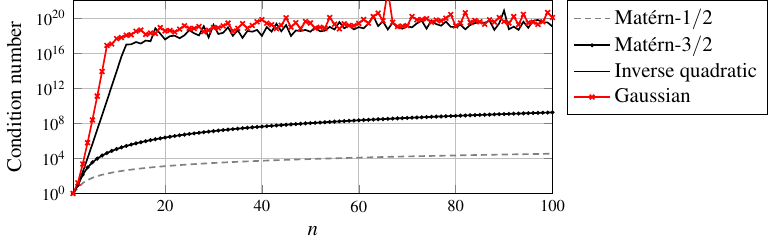}
  \end{center}
  \caption{Condition number [see~\eqref{eq:condition-number}] of the kernel matrix $\b{K}_n = (k(x_i, x_j))_{i,j=1}^n \in \R^{n \times n}$ for Matérn-$1/2$, Matérn-$3/2$ [see~\eqref{eq:matern}], inverse quadratic [see~\eqref{eq:inverse-quadratic}], and Gaussian [see~\eqref{eq:gaussian-intro}] kernels on $\R$. 
  All kernels have lengthscale $\ell = 2$.
  For each $n$ the input points are equispaced on the unit interval. 
  Computations were done in double-precision arithmetic.  
  Results for the Gaussian and inverse quadratic kernels are unreliable after $n \approx 10$ due to limited precision.
  The largest condition number that can be resolved in double-precision arithmetic is somewhere between $10^{16}$ and $10^{20}$. 
  For $n \geq 10$ the actual condition numbers for the Gaussian and inverse quadratic kernels are significantly larger than shown in the figure.
  For example, a \emph{lower bound} on the condition number of $\b{K}_n$ for $n = 21$ and the Gaussian kernel given by Corollary~\ref{thm:ill-conditioning-gaussian} is approximately $1.28 \times 10^{24}$.}
  \label{fig:intro-condition-number}
\end{figure}    

The Gaussian kernel is not the only kernel subject to such problems.
Section~\ref{sec:smoothness-matern} takes a more general approach and identifies classes of stationary kernels that suffer from Problems~1 and~2. 
In short, a stationary kernel whose spectral density decays at least exponentially is analytic and suffers from these problems (Theorem~\ref{thm:analytic}).  
The spectral density of the Gaussian kernel is proportional to $\exp(- c_1 \lVert \omega \rVert_2^2)$ for certain $c_1 > 0$; the inverse quadratic kernel whose behaviour mimics that of the Gaussian in Figure~\ref{fig:intro-condition-number} has spectral density proportional to $\exp(-c_2 \lvert \omega \rvert)$ for $c_2 > 0$ if $d = 1$.
The problems are avoided if the spectral density decays sub-exponentially (Theorems~\ref{thm:matern} and~\ref{thm:gevrey}). 
Matérns, which have little in common with the Gaussian in Figures~\ref{fig:intro-variance} and~\ref{fig:intro-condition-number}, have polynomially decaying spectral densities.
Kernels that we call \emph{Gevrey kernels} have sub-exponential spectral densities of the form $\exp(-\rho \lVert \omega \rVert_2^{\eta})$ for $\rho > 0$ and $\eta \in (0, 1)$.
These kernels are notable in being infinitely differentiable but non-analytic.
Few Gevrey kernels are available in closed form. 
The most important exception is the case $\eta = 2/3$ and $d = 1$, which yields a kernel that can be expressed in terms of a Whittaker function.
To the best of our knowledge, Gevrey kernels have not previously appeared in the literature on Gaussian processes.

In this article we prefer readability over optimality and have strived to provide a rather self-contained account. 
While many results we present could be slightly improved by invoking existing results in the literature, in no case would this alter their interpretation.
For example, it is straightforward to use results in \cite{Yarotsky2013} and \cite{Karvonen2022-power-series} to optimise the exponential term in \Cref{thm:interpolation-1d,thm:interpolation} and Corollary~\ref{thm:ill-conditioning-gaussian}.
\cite{Karvonen2026} gives an account that is similar in flavour to \Cref{sec:smoothness-matern} but based on much more advanced tools from harmonic analysis.

\subsection{Case against the Gaussian in the literature} \label{sec:literature}

We are not the first to take note of Problems~1 and~2.
Although \citet[pp.\@~30, 55 and 69--70]{Stein1999} has laid out a strong case against the use of the Gaussian kernel in modelling due to its infinite smoothness, his arguments need not be persuasive to all practitioners, being either empirical (pp.\@~69--70) or applying to a setting that, as he freely admits, rarely occurs in practice (p.\@~30): ``That is, it is possible to predict $Z(t)$ perfectly for all $t > 0$ based on observing $Z(s)$ for all $s \in (-\epsilon, 0]$ for any $\epsilon > 0$.''
See also \citet[p.\@~406]{HandcockStein1993}.
Such results on ``predicting the future based on the past'' have a long history and go back at least to the works of \citet{Kolmogorov1941} and \citet[p.\@~54]{Wiener1949}.\footnote{See \citet{Poltoratski2012} for a concise description of connections to harmonic analysis.}
A more recent tradition is to be found in kriging and optimisation literature, where the Gaussian kernel has been proved to lack the \emph{no-empty-ball} property~\citep{VazquezBect2010b, VazquezBect2010a, Yarotsky2013, PetitBectVazquez2022}. 
A kernel is said to have this property if the convergence to zero of the conditional variance as the number of data increases is equivalent to the denseness of the sequence of sampling locations in the domain of interest.
In particular, variants of some of our main results could be derived directly from Theorem~2 in \citet{Yarotsky2013}.
The \emph{screening effect} is a related phenomenon discussed in geostatistical literature.
The screening effect is said to hold for a given stochastic process if the prediction at a point $x$ depends ``mostly'' only on observations close to $x$.
\citet{Stein2002, Stein2011, Stein2015} has made this notion precise and shown that the screening effect does not hold if the kernel of a Gaussian process is Gaussian or inverse quadratic.

One attempting to invert the kernel matrix $\b{K}_n = (k(x_i, x_j))_{i,j=1}^n$ when $k$ is Gaussian is bound to observe its extreme ill-conditioning.
This observation has been made repeatedly in the literature.
For example, see Example~1 in \citet{DiamondArmstrong1984}, Section~4.3 in \citet{Lewis1987}, the work of \citet{Posa1989} whose conclusions\footnote{\citet[p.\@~764]{Posa1989}: ``When no physical reason is invoked for the choice of a particular covariance model, a more robust covariance, with respect to some parameters and number of data points, is preferred.''} are much in the spirit of this article, pages~101--2 in \citet{Ababou1994}, Section~2 in \citet{PengWu2014}, page~3063 in \citet{GuWangBerger2018}, and \citet{Belkin2018}.
How condition numbers of kernel matrices behave is relatively well understood in the literature on radial basis functions~\citep{Schaback1995, Diederichs2019}.
See Chapter~12 in \citet{Wendland2005} for a review.
The general picture is that the smoother the kernel, the larger the condition number~\citep[Guideline~3.13]{SchabackWendland2006}.
It is interesting to note that, except when it comes to numerical ill-conditioning, few remarks on the disadvantages of the Gaussian kernel seem to exist in the radial basis function literature, which serves as a foundation for most convergence theory of Gaussian process interpolation.
Convergence results are almost invariably expressed in terms of the fill-distance $h = \sup_{x \in D} \min_{i=1,\ldots,n} \lVert x - x_i \rVert_2$ of the input points $x_1, \ldots, x_n$ in a bounded domain of interest $D \subset \R^d$.
The fill-distance tends to zero if the sequence of input points, $(x_i)_{i=1}^\infty$, is dense in $D$.
Standard references include \citet[Section~11.4]{Wendland2005} and \citet{RiegerZwicknagl2010}.
\citet{Yarotsky2013b} is one of the few exceptions known to us that does not use the fill-distance.
We also remark that \citet{Platte2011} has established interesting connections between approximation by polynomials and Gaussians.

\subsection{What then?} \label{sec:what-then}

To attempt to give a general recommendation of which kernel to use is largely futile.
But let us assume that one has resolved to use a stationary kernel.
Whether or not that is a reasonable thing to do is not a debate that we wish to enter into in this article.\footnote{It probably is not.}
Our main message is that the Gaussian kernel should be avoided. 
Likewise, analytic kernels (i.e., the spectral density decays exponentially or faster) are to be avoided.
We still find the old recommendation by \citet{Stein1999} to use a finitely smooth Matérn a sensible course of action and consider a low-order Matérn, such as one of order $\nu = 3/2$ or $\nu = 5/2$, a reasonable default in the absence of real prior information.
If one is convinced that the latent function is infinitely differentiable but wants to steer clear of the Gaussian kernel (as one should), inverse multiquadrics in~\eqref{eq:multiquadric}, which are ``barely'' analytic, and Gevrey kernels in \Cref{sec:gevrey} are noteworthy options.
As closed form expressions for Gevrey kernels are unavailable when $d > 1$, one would have to resort to product kernels. 
However, this need not be a problem even if one were averse to product kernels, for smooth product kernels tend to look very much like their isotropic cousins (see Figure~\ref{fig:isotropic-product}). 

\begin{center}
  \begin{figure}[t]
    \includegraphics[width=\textwidth]{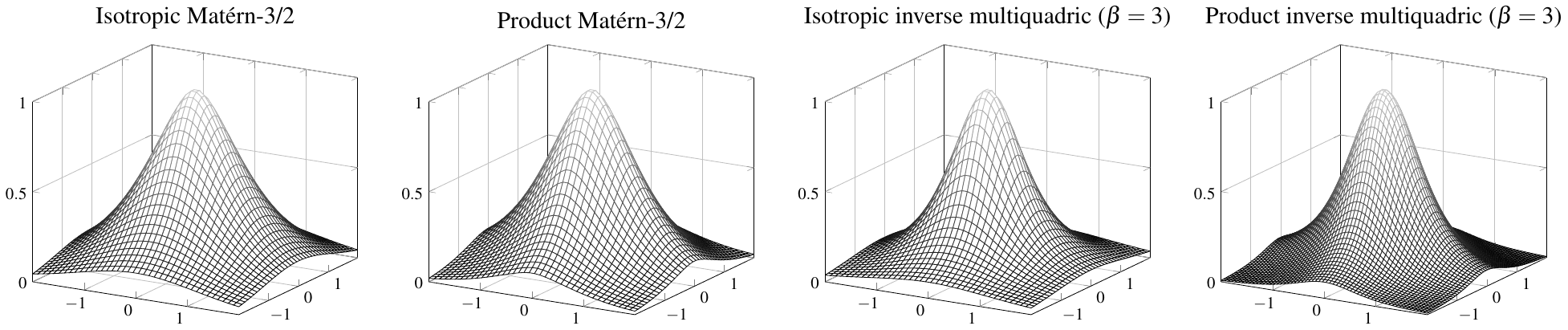}    
    \caption{Isotropic and product forms on $\R^2$ of the Matérn-$3/2$ kernel in~\eqref{eq:matern} and the inverse multiquadric kernel in~\eqref{eq:multiquadric}. 
    The plots depict kernel translates at the origin, $k(0, \cdot)$.
    Isotropic kernels have the form $k(x, y) = \phi(\lVert x - y \rVert_2)$ for $\phi \colon [0, \infty) \to \R$ and product kernels the form $k(x, y) = \phi(\lvert x_1 - y_1 \rvert) \cdots \phi(\lvert x_d - y_d \rvert)$, where $x = (x_1, \ldots, x_d) \in \R^d$ and $y = (y_1, \ldots, y_d) \in \R^d$. All kernels use lengthscale $\ell = 1$.}
    \label{fig:isotropic-product}
  \end{figure}    
\end{center}


\section{Gaussian processes} \label{sec:gps}

\citet{RasmussenWilliams2006} provide a now-standard treatment of Gaussian processes for machine learning.
Other useful references include Chapter~2 in \citet{Berlinet2004}; \citet{Gramacy2020}; and \citet{Kanagawa2018}. 

\subsection{Kernels and priors}

Let $m \colon \R^d \to \R$ be a function and $k \colon \R^d \times \R^d \to \R$ a symmetric and strictly positive-definite kernel, which is to say that
\begin{equation} \label{eq:pd-kernel}
  \sum_{i=1}^n \sum_{j=1}^n a_i a_j k(x_i, x_j) > 0
\end{equation}
for any $n \geq 1$, any pairwise distinct points $x_1, \ldots, x_n \in \R^d$, and any non-zero vector $(a_1, \ldots, a_n) \in \R^n$.
The Gaussian kernel
\begin{equation*}
  k(x, y) = \exp\bigg( \! - \frac{ \norm[0]{x - y}_2^2}{2 \ell^2} \bigg)
\end{equation*}
with lengthscale $\ell > 0$ is a basic example of a strictly positive-definite kernel, which follows from Bochner's theorem~\cite[Sec.\@~6.2]{Wendland2005} and the fact that the Fourier transform of a Gaussian function is Gaussian.
Matérn kernels constitute another important class of strictly positive-definite kernels.
Let $\nu > 0$ be a smoothness parameter and $\mathcal{K}_\nu$ the modified Bessel function of the second kind.
The Matérn kernel of smoothness $\nu$ is given by
\begin{equation} \label{eq:matern}
  k_\nu(x, y) = \frac{2^{1-\nu}}{\Gamma(\nu)} \bigg( \frac{\sqrt{2\nu} \norm[0]{x - y}_2}{\ell} \bigg)^\nu \mathcal{K}_\nu \bigg( \frac{\sqrt{2\nu} \norm[0]{ x - y }_2}{\ell} \bigg) .
\end{equation}
Every Matérn kernel of half-integer order (i.e., $\nu = s + 1/2$ for $s \in \N$) can be written as a product of a polynomial and an exponential.
For example,
\begin{equation*}
  k_{\nu = 1/2}(x, y) = \exp\bigg(\! -\frac{r}{\ell} \bigg) \quad \text{ and } \quad k_{\nu = 3/2}(x, y) = \bigg(1 + \frac{\sqrt{3} r}{\ell} \bigg) \exp\bigg(\! - \frac{\sqrt{3} r}{\ell} \bigg) ,
\end{equation*}
where $r = \lVert x - y \rVert_2$.
Matérns converge pointwise to the Gaussian as $\nu \to \infty$.
We return to this phenomenon in Section~\ref{sec:matern-gaussian-limit}.

A Gaussian process $\fGP \sim \GP(m, k)$ on $\R^d$ with mean $m$ and covariance kernel $k$ is a stochastic process whose finite-dimensional distributions are multivariate Gaussians with mean and covariance determined by $m$ and $k$.
That is, for any $n \geq 1$ and any collection of points $X_n = \{ x_1, \ldots, x_n \} \subset \R^d$,
\begin{equation*}
  \begin{bmatrix}
    \fGP(x_1) \\ \vdots \\ \fGP(x_n)
  \end{bmatrix}
  \sim
  \mathrm{N}
  \left(
  \begin{bmatrix}
    m(x_1) \\ \vdots \\ m(x_n)
  \end{bmatrix},
  \begin{bmatrix}
    k(x_1, x_1) & \cdots & k(x_1, x_n) \\
    \vdots & \ddots & \vdots \\
    k(x_n, x_1) & \cdots & k(x_n, x_n)
  \end{bmatrix}
  \right)
  \eqqcolon
  \mathrm{N}( \b{m}_n, \b{K}_n ).
\end{equation*}
The covariance matrix $\b{K}_n$, henceforth called \emph{kernel matrix}, is strictly positive-definite and thus invertible whenever the points in $X_n$ are pairwise distinct.
In particular,
\begin{equation*}
  \mathbb{E}[ \fGP(x) ] = m(x) \quad \text{ and } \quad \mathrm{Cov}[ \fGP(x), \fGP(y) ] = k(x, y)
\end{equation*}
for all $x, y \in \R^d$.
The covariance kernel determines what the samples from the Gaussian process look like, or how they fluctuate.
In particular, the smoothness (i.e., the degree of differentiability) of the kernel is inherited by the samples.
This is illustrated in Figure~\ref{fig:kernels-and-priors}.

\begin{center}
  \begin{figure}[t]
    \includegraphics[width=0.3\textwidth]{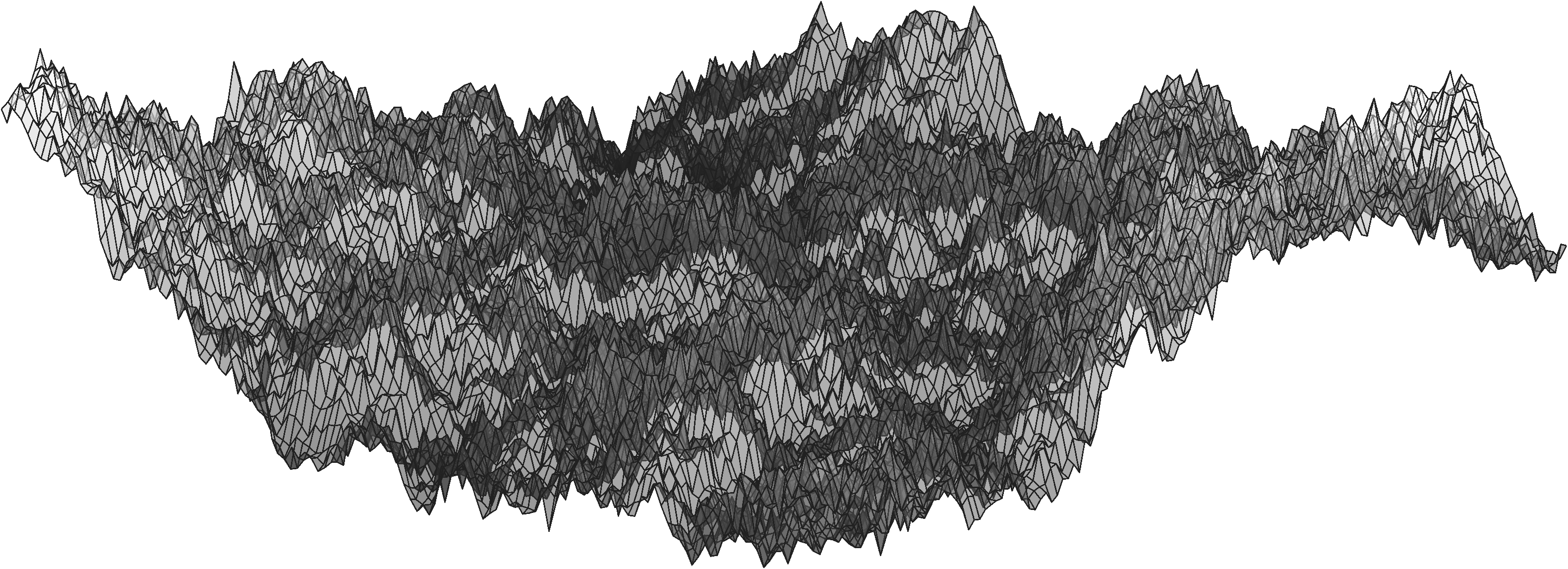}
    \hspace{0.2cm}
    \includegraphics[width=0.3\textwidth]{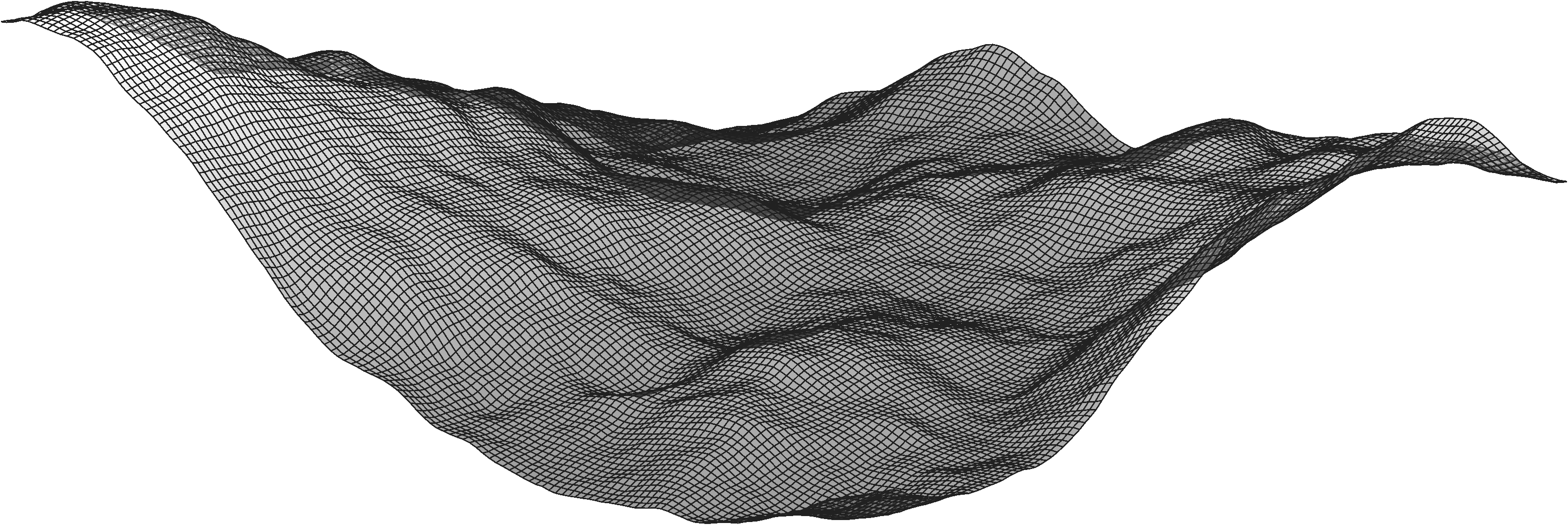}
    \hspace{0.2cm}
    \includegraphics[width=0.3\textwidth]{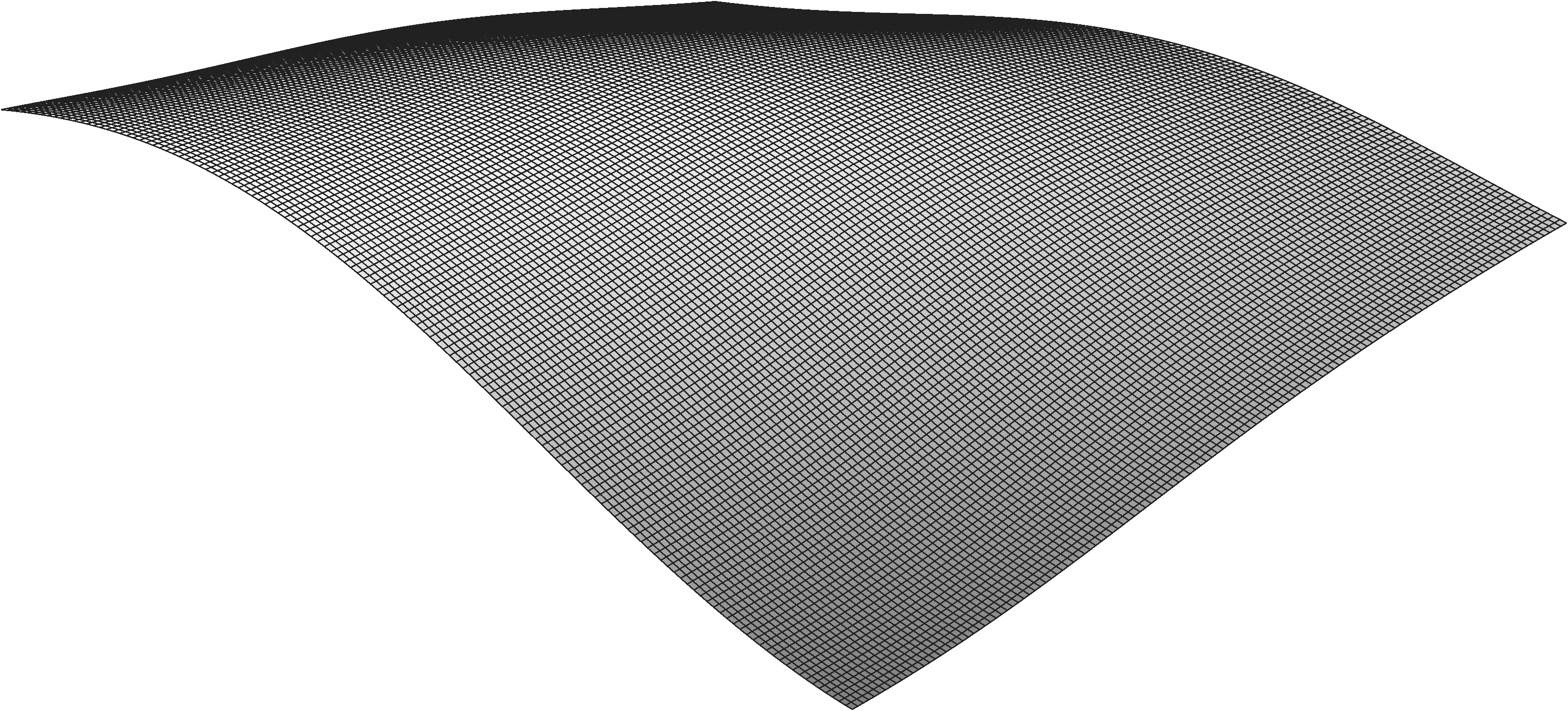}
    \caption{Samples from Gaussian processes on $\R^2$ with three different kernels. \emph{Left:} Matérn kernel with $\nu = 1/2$ (continuous but non-differentiable samples). \emph{Middle:} Matérn kernel with $\nu = 3/2$ (once differentiable samples). \emph{Right:} Gaussian kernel (infinitely differentiable samples).}
    \label{fig:kernels-and-priors}
  \end{figure}    
\end{center}

\subsection{Gaussian process regression and interpolation} \label{sec:gp-regression}

Let $D$ be a non-empty subset of $\R^d$ and $f_0 \colon D \to \R$ an unknown latent function.
In Gaussian process regression, the unknown function $f_0$ is modelled as a Gaussian process $\fGP \sim \GP(m, k)$ that is then conditioned on a set of training data $\mathcal{D}_n = \{(x_i, y_i)\}_{i=1}^n$ which consists of potentially noisy observations $y_i$ of $f_0$ at a collection of $n$ input points, $X_n = \{x_1, \ldots, x_n\} \subset D$.
We always tacitly assume that the input points are pairwise distinct.
It is typical to assume that the observations are corrupted by Gaussian noise with standard deviation $\sigma \geq 0$:
\begin{equation*}
  y_i = f_0(x_i) + \varepsilon_i, \quad \text{ where } \quad \varepsilon_i \sim \mathrm{N}(0, \sigma^2) \text{ are i.i.d.}
\end{equation*}
The conditional process, $\fGP \mid \mathcal{D}_n$, is a Gaussian process.
Its mean and variance are
\begin{equation*}
  \mu_\sigma( x \mid \mathcal{D}_n ) = \mathbb{E}[ \fGP(x) \mid \mathcal{D}_n ] = m(x) + \b{k}_n(x)^\T (\b{K}_n + \sigma^2 \b{I}_n)^{-1} ( \b{y}_n - \b{m}_n )
\end{equation*}
and
\begin{equation} \label{eq:gp-regression-var}
  \V_\sigma( x \mid \mathcal{D}_n ) = \mathrm{Var}[ \fGP(x) \mid \mathcal{D}_n ] = k(x, x) - \b{k}_n(x)^\T (\b{K}_n + \sigma^2 \b{I}_n)^{-1} \b{k}_n(x),
\end{equation}
where $\b{k}_n(x) = (k(x, x_1), \ldots, k(x, x_n))$ and $\b{y}_n = (y_1, \ldots, y_n)$ are column vectors and $\b{I}_n$ the $n \times n$ identity matrix.
If $\sigma = 0$, the latent function is observed without noise, so that $y_i = f_0(x_i)$.
In this case we use the more concise notation
\begin{equation} \label{eq:gp-interpolation}
    \mu( x \mid \mathcal{D}_n ) = m(x) + \b{k}_n(x)^\T \b{K}_n^{-1} ( \b{y}_n - \b{m}_n ) \:\: \text{ and } \:\: \V( x \mid \mathcal{D}_n ) = k(x, x) - \b{k}_n(x)^\T \b{K}_n^{-1} \b{k}_n(x).
\end{equation}    
To distinguish the cases $\sigma > 0$ and $\sigma = 0$, we speak of Gaussian process \emph{interpolation} when $\sigma = 0$.
Most of our arguments against the Gaussian kernel apply to interpolation.
The conditional mean and variance for both $\sigma = 0$ and $\sigma > 0$ are depicted in Figure~\ref{fig:gp-regression}.
Observe that for $\sigma = 0$ the conditional mean interpolates the data, in that $\mu(x_i \mid \mathcal{D}_n) = f_0(x_i)$ and $\V(x_i \mid \mathcal{D}_n) = 0$ for each $i \in \{ 1, \ldots, n \}$.
The conditional mean provides point estimates for the unobserved outputs $f_0(x)$ while the conditional variance quantifies uncertainty in these estimates via, for example, credible intervals as in Figure~\ref{fig:gp-regression}.
Note that the conditional variance in~\eqref{eq:gp-regression-var} and~\eqref{eq:gp-interpolation} does not depend on the observations $y_i$. 
The gist of our case against the Gaussian kernel is that it gives rise to a conditional variance that is far too small, a phenomenon that is to some extent observable in Figure~\ref{fig:gp-regression}.

\begin{center}
  \begin{figure}[t]
    \includegraphics[width=\textwidth]{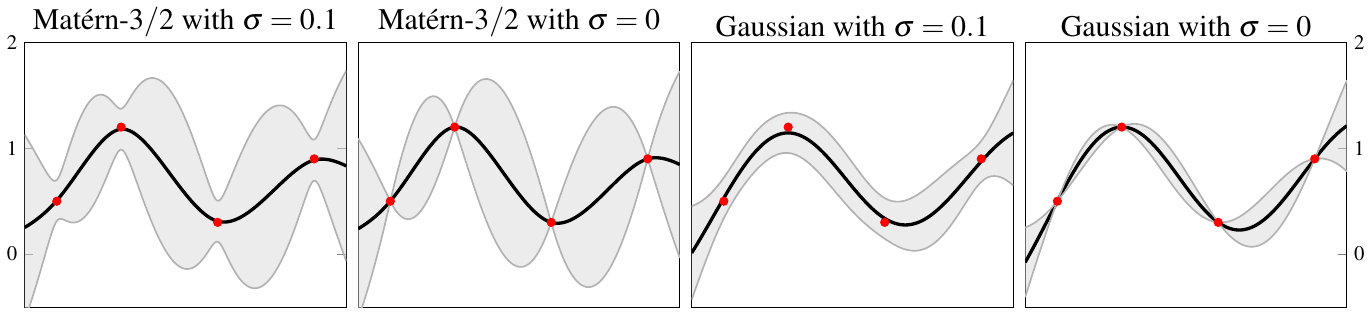}
    \caption{The conditional mean (black) and the pointwise 95\% credible intervals (shaded) obtained from the conditional variance over the interval $[0, 1]$ for the Matérn-$3/2$ and Gaussian kernels with $\ell = 0.3$. The data consist of four observations that are indicated in red. Observe that the mean interpolates the data when $\sigma = 0$.}
    \label{fig:gp-regression}
  \end{figure}    
\end{center}

\section{Why not to use the Gaussian kernel} \label{sec:main-sec}

The gist of our case against the Gaussian kernel 
\begin{equation} \label{eq:gaussian}
  k(x, y) = \exp\bigg( \! - \frac{ \norm[0]{x - y}_2^2}{2 \ell^2} \bigg),
\end{equation}
where $x, y \in \R^d$, is that it is by far the \emph{brittlest} among all commonly used kernels for Gaussian process regression in machine learning and spatial statistics.
The brittleness is manifested via (a) overconfident uncertainty quantification and (b) numerical ill-conditioning.
These issues are discussed in \Cref{sec:uq} and \Cref{sec:ill-conditioning}, respectively.

\subsection{Overconfident uncertainty quantification} \label{sec:uq}

A Gaussian process model is \emph{overconfident} if the conditional standard deviation, $\V_\sigma(\cdot \mid \mathcal{D}_n)^{1/2}$, is ``much'' smaller than the true prediction error, $\lvert f_0 - \mu_\sigma(\cdot \mid \mathcal{D}_n) \rvert$, that it purports to quantify.
We refer to \citet{Karvonen2020-MLE} for a more thorough discussion on overconfidence.
Typically it is unreasonable to expect that a latent function can be predicted well over the entire domain~$D$ if observations are available only on a small sub-domain.
The Gaussian kernel does \emph{not} behave as this basic intuition suggests.
Recall from \Cref{sec:gp-regression} that $\sigma \geq 0$ is the standard deviation of the observation noise.

\begin{theorem}[Overconfidence for $d=1$] \label{thm:interpolation-1d}
  Let $d = 1$ and $\sigma = 0$.
  Consider the Gaussian kernel in~\eqref{eq:gaussian} and $D = [a, b]$ for $a < b$.
  If $X_n = \{ x_1, \ldots, x_n \} \subset D$ are any points, then
  \begin{equation*} 
    \sup_{x \in D} \V( x \mid \mathcal{D}_n ) \leq 2 \bigg( \frac{\sqrt{2}(b-a)}{\ell} \bigg)^{2n} \frac{1}{n!} .
  \end{equation*}
\end{theorem}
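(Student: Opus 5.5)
The plan is to bound $\V(x \mid \mathcal{D}_n)$ by the squared RKHS error of one explicit, suboptimal approximation of $k(x,\cdot)$, and to control that error through the power series of the Gaussian kernel together with the polynomial interpolation remainder.

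\emph{Reduction to a variational problem.} We use the standard characterisation
\begin{equation*}
  \V(x \mid \mathcal{D}_n) = \min_{a \in \R^n} \Big\lVert k(x,\cdot) - \sum_{i=1}^n a_i k(x_i,\cdot) \Big\rVert_{H}^2
  = \min_{a \in \R^n} \Big( k(x,x) - 2 \sum_{i} a_i k(x,x_i) + \sum_{i,j} a_i a_j k(x_i,x_j) \Big),
\end{equation*}
where $H$ is the RKHS of $k$. This follows by minimising the quadratic in $a$. Hence any explicit choice of $a = a(x)$ gives an upper bound. Since $k$ is stationary, we may translate and assume $D = [-L/2, L/2]$ with $L = b-a$; the centring will matter for the constants.

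\emph{Feature expansion and choice of weights.} Expanding $e^{xy/\ell^2}$ gives
\begin{equation*}
  k(x,y) = \sum_{m=0}^\infty \varphi_m(x)\varphi_m(y), \qquad
  \varphi_m(x) = e^{-x^2/(2\ell^2)} \psi_m(x), \qquad
  \psi_m(x) = \frac{x^m}{\ell^m \sqrt{m!}}.
\end{equation*}
The quadratic form above therefore becomes $\sum_m \big(\varphi_m(x) - \sum_i a_i \varphi_m(x_i)\big)^2$. We choose
\begin{equation*}
  a_i = e^{-x^2/(2\ell^2)}\, e^{x_i^2/(2\ell^2)}\, L_i(x),
\end{equation*}
where $L_i$ are the Lagrange basis polynomials of the nodes $X_n$. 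With this choice the $m$-th term becomes $e^{-x^2/\ell^2}\big(\psi_m(x) - \sum_i L_i(x)\psi_m(x_i)\big)^2$. This is the squared polynomial interpolation error of the monomial $\psi_m$, and it vanishes for $m < n$.

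\emph{Bounding the tail.} For $m \ge n$, the Cauchy remainder formula gives the error as $\psi_m^{(n)}(\xi)\prod_i (x-x_i)/n!$ for some $\xi \in D$. On the centred interval this yields
\begin{equation*}
  \Big|\psi_m(x) - \sum_i L_i(x)\psi_m(x_i)\Big| \le \binom{m}{n} \frac{(L/2)^{m-n} L^n}{\ell^m \sqrt{m!}} .
\end{equation*}
Write $z = (L/\ell)^2$ and $m = n + j$. Summing the squares over $m \ge n$ and using $(n+j)! \le 2^{n+j} n!\, j!$ leads to a bound of the form
\begin{equation*}
  \frac{(2z)^n}{n!} \sum_{j \ge 0} \frac{c^j z^j}{j!}
\end{equation*}
for some constant $c$, which is exactly $(\sqrt{2}L/\ell)^{2n}/n!$ times a correction factor. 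Dropping $e^{-x^2/\ell^2} \le 1$ and taking the supremum over $x \in D$ completes the argument up to constants.

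\emph{Main obstacle.} The hard step is getting the correction factor down to the constant $2$ claimed in Theorem~\ref{thm:interpolation-1d}, rather than a factor like $e^{z/2}$. I would first try to bound $\binom{n+j}{n}^2 / (n+j)!$ more sharply. A concrete option is to exploit $\prod_i|x - x_i| \le L^n$ together with $|\xi| \le L/2$ in a way that makes the tail ratio at most $1/2$, so that the tail sums to a geometric series with total at most $2$. If that fails, I would split into two cases:
\begin{itemize}
  \item[(i)] when $2(\sqrt{2}L/\ell)^{2n}/n! \ge 1$, the claim is trivial because $\V(x \mid \mathcal{D}_n) \le k(x,x) = 1$;
  \item[(ii)] otherwise $n$ is large relative to $z$, and I would use that regime to absorb the exponential factor into the constant.
\end{itemize}
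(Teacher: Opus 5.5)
\textbf{Gap: the constant $2$ is not reached.} Your reduction, the reweighted Lagrange weights and the term-by-term remainder estimate are correct. With $z=(b-a)^2/\ell^2$ they give
\[
\sup_{x\in D}\V(x\mid\mathcal{D}_n)\le\frac{z^n}{n!}\sum_{j\ge0}\binom{n+j}{j}\frac{(z/4)^j}{j!}\le\frac{(2z)^n}{n!}\,e^{z/2},
\]
where the last step is your $(n+j)!\le 2^{n+j}n!\,j!$. This is the theorem with $2$ replaced by $e^{(b-a)^2/(2\ell^2)}$. That is strictly weaker as soon as $(b-a)^2>2\ln(2)\,\ell^2$, so the stated bound is not yet proved, and neither proposed repair closes the gap.

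The geometric-series idea fails because the ratio of consecutive terms grows with $z$. In the exact sum it is $(n+j+1)z/(4(j+1)^2)$, which equals $(n+1)z/4$ at $j=0$; after your crude step it is $z/(2(j+1))$. So it exceeds $1/2$ once $z$ is moderately large, yet the claim is non-trivial for $z$ up to roughly $n/(2e)$.

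In case (ii), the condition $2(2z)^n<n!$ still allows $z$ of order $n$, so $e^{z/2}$ cannot be absorbed into a constant. Concretely, for $z=2$ and $n=10$ the claim gives about $0.578$, while $(2z)^ne^{z/2}/n!$ is about $0.785$.

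\textbf{How to complete your route.} The loss comes from $\binom{n+j}{j}\le 2^{n+j}$. The target allows a factor $2^{n+1}$ over the leading term $z^n/n!$, and that crude bound uses up $2^n$ while still leaving $e^{z/2}$. What you need is $\sum_j\binom{n+j}{j}(z/4)^j/j!\le 2^{n+1}$ whenever the claim is non-trivial.
\begin{itemize}
  \item Use $\binom{N}{j}\le(3/2)^N2^j$ instead. This bounds the sum by $(3/2)^n e^{3z/4}$.
  \item If $2(2z)^n<n!$, then AM--GM ($n!\le((n+1)/2)^n$) gives $z<(n+1)/4$.
  \item Hence $(3/2)^ne^{3z/4}\le(3/2)^ne^{3(n+1)/16}\le 2^{n+1}$.
  \item Together with $\V\le k(x,x)=1$ in your case (i), this finishes the proof.
\end{itemize}

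\textbf{The paper's route.} The paper sidesteps the issue. It keeps the plain Lagrange weights and works in the dual form of Lemma~\ref{lemma:worst-case-2}. It applies the remainder formula to $f$ itself rather than feature by feature. It then bounds $\lvert f^{(n)}(\xi)\rvert\le\lVert f\rVert_{H(k)}\,\ell^{-n}\sqrt{(2n)!/(2^n n!)}$ uniformly in $\xi$, using the derivative reproducing property and stationarity. Stirling gives $(2n)!/(2^n(n!)^2)\le 2^n e/\sqrt{\pi n}\le 2\cdot 2^n$, so the constant $2$ comes out directly, with no case split and no centring of the interval.
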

\begin{proof}
  See \Cref{sec:proofs-main-sec}.
  The proof is based on the classical estimate for the error of polynomial interpolation and the connection between the conditional variance and the pointwise worst-case error in the reproducing kernel Hilbert space of $k$.
\end{proof}

\begin{theorem}[Overconfidence for $d \geq 1$] \label{thm:interpolation}
  Let $d \geq 1$ and $\sigma = 0$.
  Consider the Gaussian kernel in~\eqref{eq:gaussian} and a bounded $D \subset \R^d$.
  If the set $X_n = \{x_1, \ldots, x_n\} \subset D$ contains a Cartesian product $X_1' \times \cdots \times X_d'$ of $d$ sets of $m$ pairwise distinct reals, then
  \begin{equation*}
    \sup_{x \in D} \V( x \mid \mathcal{D}_n ) \leq 2d \bigg( \frac{\sqrt{2}(b-a)}{\ell} \bigg)^{2m} \frac{1}{m!} ,
  \end{equation*}
  where $a < b$ are such that $D \subset [a, b]^d$.
  In particular, if $X_n = X_1' \times \cdots \times X_d'$, then
  \begin{equation*}
    \sup_{x \in D} \V( x \mid \mathcal{D}_n ) \leq  2d \bigg( \frac{\sqrt{2}(b-a)}{\ell} \bigg)^{2n^{1/d}} \frac{1}{(n^{1/d})!} .
  \end{equation*}  
\end{theorem}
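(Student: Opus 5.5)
The plan is to reduce the $d$-dimensional statement to the one-dimensional mechanism behind Theorem~\ref{thm:interpolation-1d}, using tensor-product polynomial interpolation. Throughout I use the standard characterisation of the interpolation variance as a squared worst-case error in the reproducing kernel Hilbert space $H$ of $k$:
\begin{equation*}
  \V(x \mid \mathcal{D}_n) = \sup_{\lVert f \rVert_H \leq 1} \Big( f(x) - \sum_{i=1}^n u_i(x) f(x_i) \Big)^2 ,
\end{equation*}
which holds for \emph{any} choice of weights as an upper bound, since the kriging weights minimise this quantity. Enlarging the point set can only decrease the variance, so I may replace $X_n$ by the grid $G = X_1' \times \cdots \times X_d'$. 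The second claim then follows from the first by taking $m = n^{1/d}$.

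\textbf{Choice of weights.} For each coordinate $j$, let $P_j$ denote univariate Lagrange interpolation on the $m$ nodes of $X_j' \subset [a,b]$; without loss of generality $X_j' \subset [a,b]$, which I can arrange because $G \subset D \subset [a,b]^d$. Acting on functions of $d$ variables in the $j$th coordinate, $P_j$ maps a function to one determined by its values at $X_j'$ in that coordinate. The tensor interpolant $P = P_1 \cdots P_d$ uses only values on $G$. The telescoping identity
\begin{equation*}
  I - P_1 \cdots P_d = \sum_{j=1}^d P_1 \cdots P_{j-1}(I - P_j)
\end{equation*}
splits the error into $d$ terms. This is where the factor $d$ should come from; by Cauchy--Schwarz, squaring naively would give $d^2$ instead.

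\textbf{The main obstacle.} I must bound each term, at $x$ and for $\lVert f \rVert_H \leq 1$, without letting the Lebesgue constants of the $P_i$ (which grow) destroy the estimate. To keep the clean constant $2d$, I would instead bound the variance by a sum of one-dimensional variances, as follows.

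\begin{itemize}
  \item Order the coordinates and interpolate sequentially. For $x = (t_1, \ldots, t_d)$, first compare $x$ with points $(s, t_2, \ldots, t_d)$, $s \in X_1'$, where the error is exactly a one-dimensional problem.
  \item The Gaussian kernel is a product kernel, $k = \prod_j k_1$. For a fixed $y$ in the remaining coordinates, the map $f \mapsto f(\cdot, y)$ has norm at most one from $H$ to the one-dimensional space $H_1$ (the restriction of the RKHS).
  \item By the proof of Theorem~\ref{thm:interpolation-1d}, $\sup_{\lVert g \rVert_{H_1} \leq 1} \lvert g(t) - P_1 g(t) \rvert^2 \leq \varepsilon_m := 2\big(\sqrt{2}(b-a)/\ell\big)^{2m}/m!$.
\end{itemize}

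Then I would use the variance decomposition of the conditioned process. Because of the product structure, conditioning $\fGP$ on $G$ factorises. Concretely, $1 - \prod_j (1 - \V_j(t_j))$, where $\V_j$ is the one-dimensional variance with nodes $X_j'$, gives exactly the $d$-dimensional variance on a grid for a product kernel with $k(x,x) = 1$. Indeed, the conditional covariance for a product kernel on a tensor grid is $\prod k_1 - \prod(k_1 - \mathrm{cond}_j)$, with kriging weights that are tensor products.

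Finally, applying the elementary inequality $1 - \prod_{j}(1 - v_j) \leq \sum_j v_j$ for $v_j \in [0,1]$ together with Theorem~\ref{thm:interpolation-1d} on $[a,b]$ for each $\V_j$ yields
\begin{equation*}
  \sup_{x \in D} \V(x \mid \mathcal{D}_n) \leq \sum_{j=1}^d \sup_{t \in [a,b]} \V_j(t) \leq d\, \varepsilon_m ,
\end{equation*}
which is the claimed bound.

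The step I expect to need the most care is verifying the tensor factorisation $\V = 1 - \prod_j (1 - \V_j)$. I would check it via $\b{K}_G = \bigotimes_j \b{K}_j$ and $\b{k}_G(x) = \bigotimes_j \b{k}_j(t_j)$, so that
\begin{equation*}
  \b{k}_G(x)^\T \b{K}_G^{-1} \b{k}_G(x) = \prod_j \b{k}_j(t_j)^\T \b{K}_j^{-1} \b{k}_j(t_j) = \prod_j \big(1 - \V_j(t_j)\big).
\end{equation*}
This identity is direct from Kronecker algebra, so the telescoping and Lebesgue-constant worries above become unnecessary.
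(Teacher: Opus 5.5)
Your proof is correct and is essentially the paper's. The paper also reduces to the grid by monotonicity, uses the Kronecker structure to write $\V(x \mid \mathcal{D}_n) = \prod_j r_j - \prod_j q_j$ with $q_j = 1 - \V_j(x_j) \leq r_j = 1$, bounds this by $\sum_j (r_j - q_j)$ via a telescoping sum (your inequality $1 - \prod_j (1 - v_j) \leq \sum_j v_j$), and applies Theorem~\ref{thm:interpolation-1d} with $n = m$ in each coordinate; your preliminary discussion of operator telescoping, Lebesgue constants and restriction maps is unnecessary and can be dropped.
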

\begin{proof}
  See \Cref{sec:proofs-main-sec}.
  The proof consists of a standard tensor product argument combined with \Cref{thm:interpolation-1d}.
\end{proof}

\begin{center}
  \begin{figure}[t]
    \includegraphics[width=\textwidth]{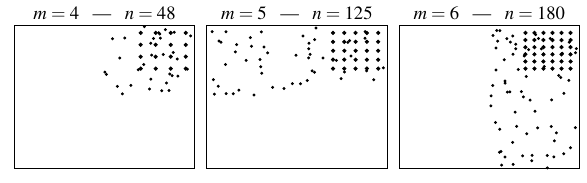}
    \caption{Three input point sets $X_n$ on $D = [-1, 1]^2 \subset \R^2$ that Theorem~\ref{thm:interpolation} applies to. The Cartesian grids $X_1' \times X_2' \subset X_n$ contained in the sets are emphasised.
    A two-dimensional Cartesian grid consists of $m \times m$ points.}    
    \label{fig:grids}
  \end{figure}
\end{center}

\Cref{thm:interpolation-1d,thm:interpolation} state that the conditional variance tends to zero with a factorial rate on the entire domain $D$ essentially regardless of where the input points are placed.
These theorems are variants of the results discussed in \Cref{sec:literature} on predicting the future of a process from past only~\citep{Kolmogorov1941} and the no-empty-ball property~\citep{VazquezBect2010b}. 
The conclusion is stronger if $d = 1$, as then the conditional variance tends to zero for \emph{any} sequence of sampling points.
For $d > 1$ the ``bad'' point configurations are not artificial as it suffices that $X_n$ contains a $d$-fold Cartesian product (see Figure~\ref{fig:grids}).\footnote{We believe that the estimates of \Cref{thm:interpolation} remain true in some form for ``almost any'' $X_n$. However, it does not appear easy to prove this.}

Overconfidence of uncertainty quantification with the Gaussian kernel was illustrated already in \Cref{fig:intro-variance}, which showed that the Gaussian kernel produces a much smaller variance than Matérns. 
\Cref{fig:variance} shows how the conditional variance tends to zero everywhere on the domain $D = [-2, 2]^2$ as more data are obtained even though all sampling points are contained in the small sub-domain $D_0 = [1, 2]^2$.
In a Gaussian process model based on the Gaussian kernel \emph{local information is therefore global information}: the model thinks that it can learn the latent function globally from local information alone and expresses this belief by shrinking the conditional variance everywhere.
This behaviour is akin to (and, as we shall discuss in \Cref{sec:analytic}, a direct consequence of) the fact that an analytic function $f \colon \R^d \to \R$ with infinite radius of convergence can be recovered at any $x \in \R^d$ from the values of its derivatives at a single point $x_0 \in \R^d$ via its Taylor expansion
\begin{equation*}
  f(x) = \sum_{\alpha \in \N_0^d} \frac{\mathrm{D}^\alpha f (x_0)}{\alpha!} (x - x_0)^\alpha .
\end{equation*}
None of this needs to be a problem if the user truly believes that their latent function can be learned from local information.
But more often than not the Gaussian kernel is used simply as a convenient default.
In such a situation one runs the serious risk of unintentional overconfident uncertainty quantification if, for one reason or another, sampling leaves large holes in which the conditional variance is nevertheless very small.
Even when the sampling points are space-filling, the conditional variance is likely to decay much faster than is desirable.
While Matérns and other finitely smooth kernels can induce overconfidence, it is much less severe than under the Gaussian kernel. 
Moreover, data-driven estimation of hyperparameters for Matérns and related kernels often suffices to fix the problem~\citep{Szabo2015, Karvonen2020-MLE, Naslidnyk2025, KarvonenBachoc2025}.

\begin{remark} \label{remark:parameter-estimation}
Hyperparameter estimation cannot fix the global decay of the variance, the fundamental problem associated to the Gaussian kernel that was demonstrated by \Cref{thm:interpolation-1d,thm:interpolation}.
The problem cannot be circumvented by estimating $\ell$ (or a multiplicative scaling parameter in front of the kernel) from the data.
For selecting a lengthscale $\ell_n$, data-driven or not, such that the variance decays with a polynomial rate $n^{-2s}$ for some $s > 0$, or any other rate more modest than factorial, in the sub-domain that contains the inputs (e.g., $[1, 2]^2$ in \Cref{fig:variance}) simply forces the variance to decay with the same rate everywhere on the domain.
For example, setting $\ell_n = \sqrt{2e}(b-a) n^{-1/2 + s/n - 1/(4n)}$ in \Cref{thm:interpolation-1d} and using Stirling's approximation for the factorial yields
\begin{equation*}
  \sup_{x \in D} \V(x \mid \mathcal{D}_n) = O(n^{-2s}).
\end{equation*}
This is the contraction rate associated to a Matérn kernel of order $\nu = s$.
However, for Matérns this rate is obtained only when the input sequence is dense in $D$ (see \Cref{thm:matern}); for the Gaussian kernel with estimated $\ell$ there is no such requirement. 
One could, of course, use a spatially varying lengthscale (or scaling parameter), but this brings about a host of complications common to such non-stationary models. 
We return to hyperparameter estimation and more general kernel learning in \Cref{sec:kernel-learning}.
\end{remark}

\begin{center}
  \begin{figure}[t]
    \includegraphics[height=0.3\textwidth]{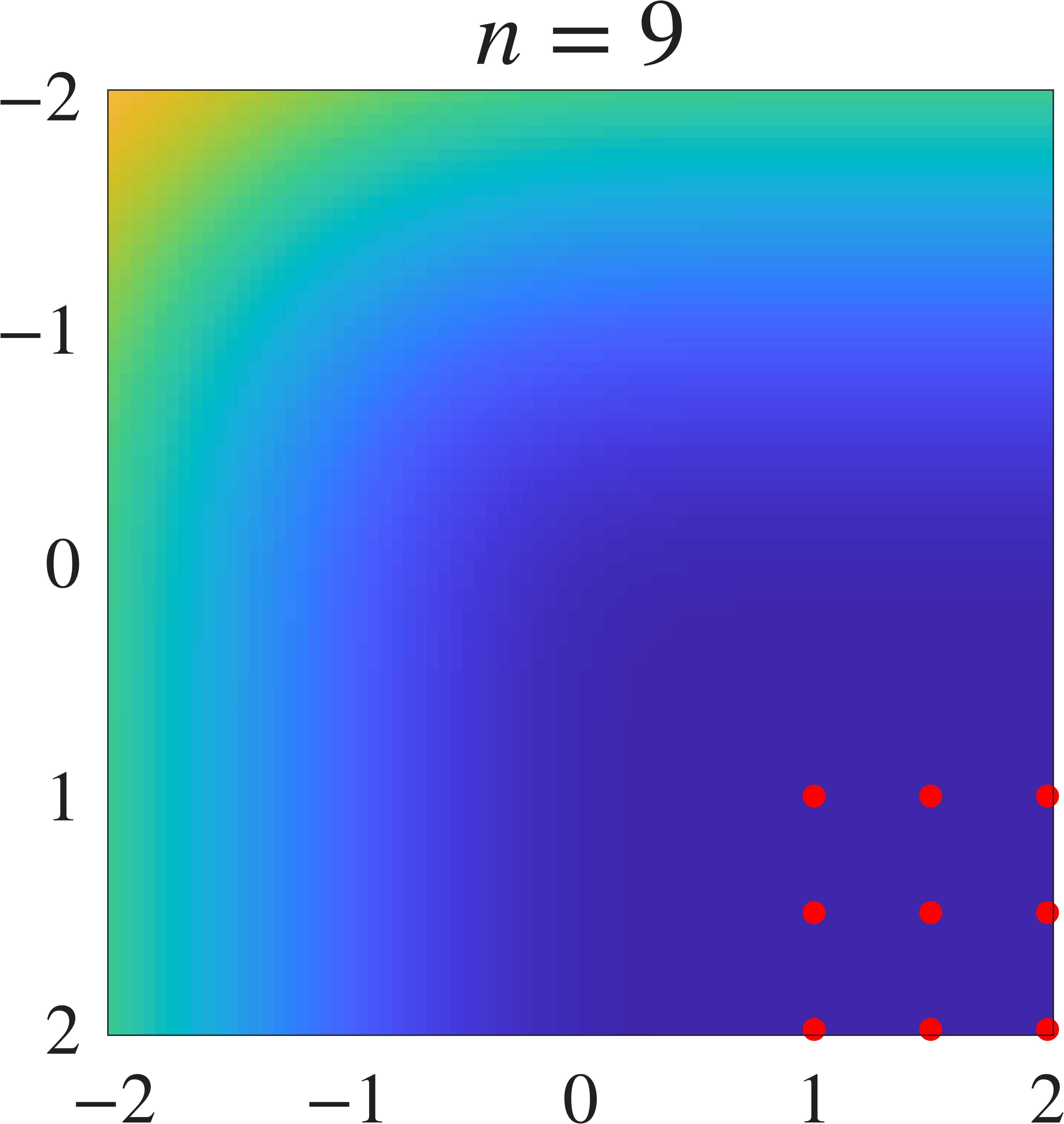}
    \hspace{0.2cm}
    \includegraphics[height=0.3\textwidth]{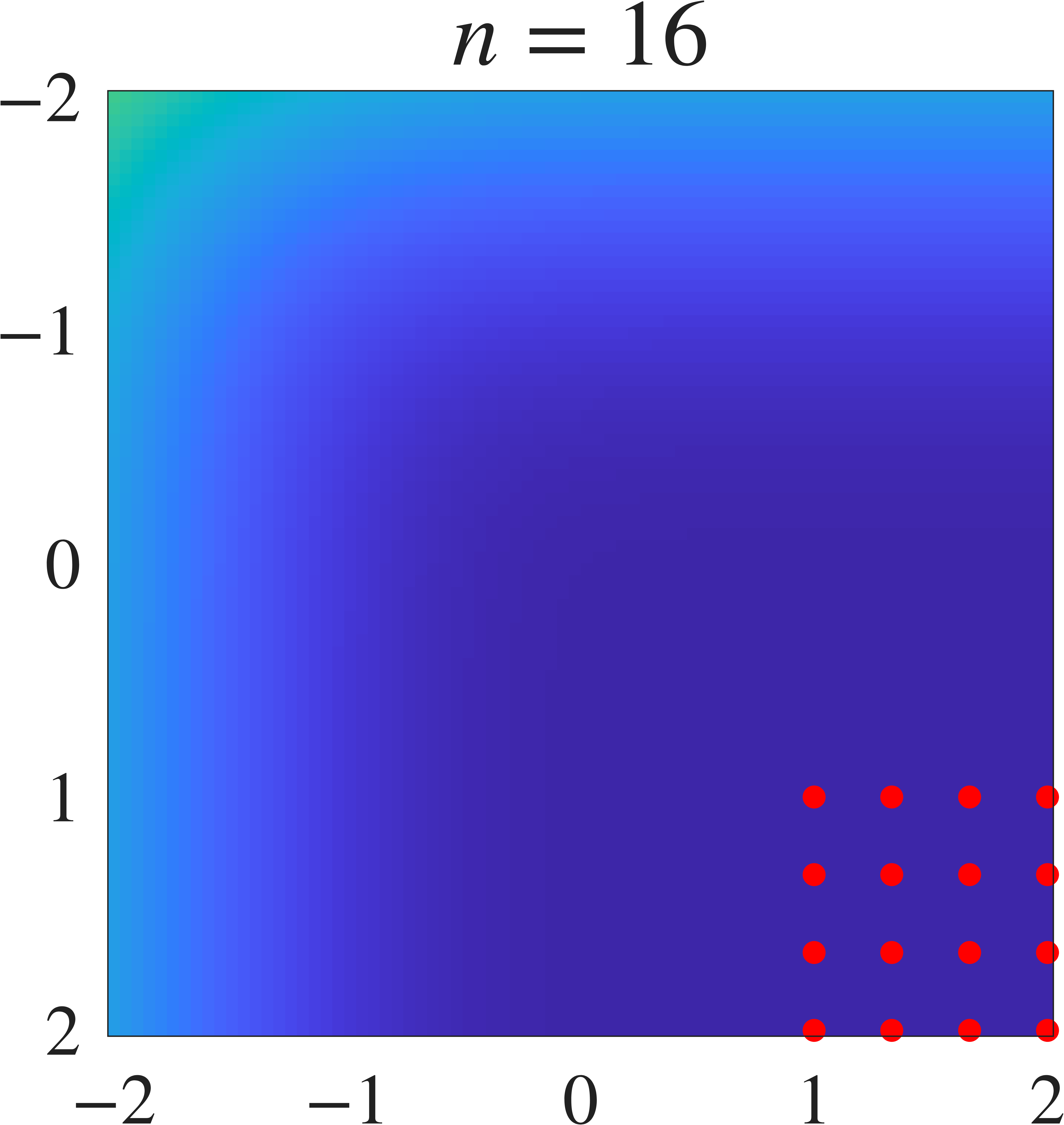}
    \hspace{0.2cm}
    \includegraphics[height=0.3\textwidth]{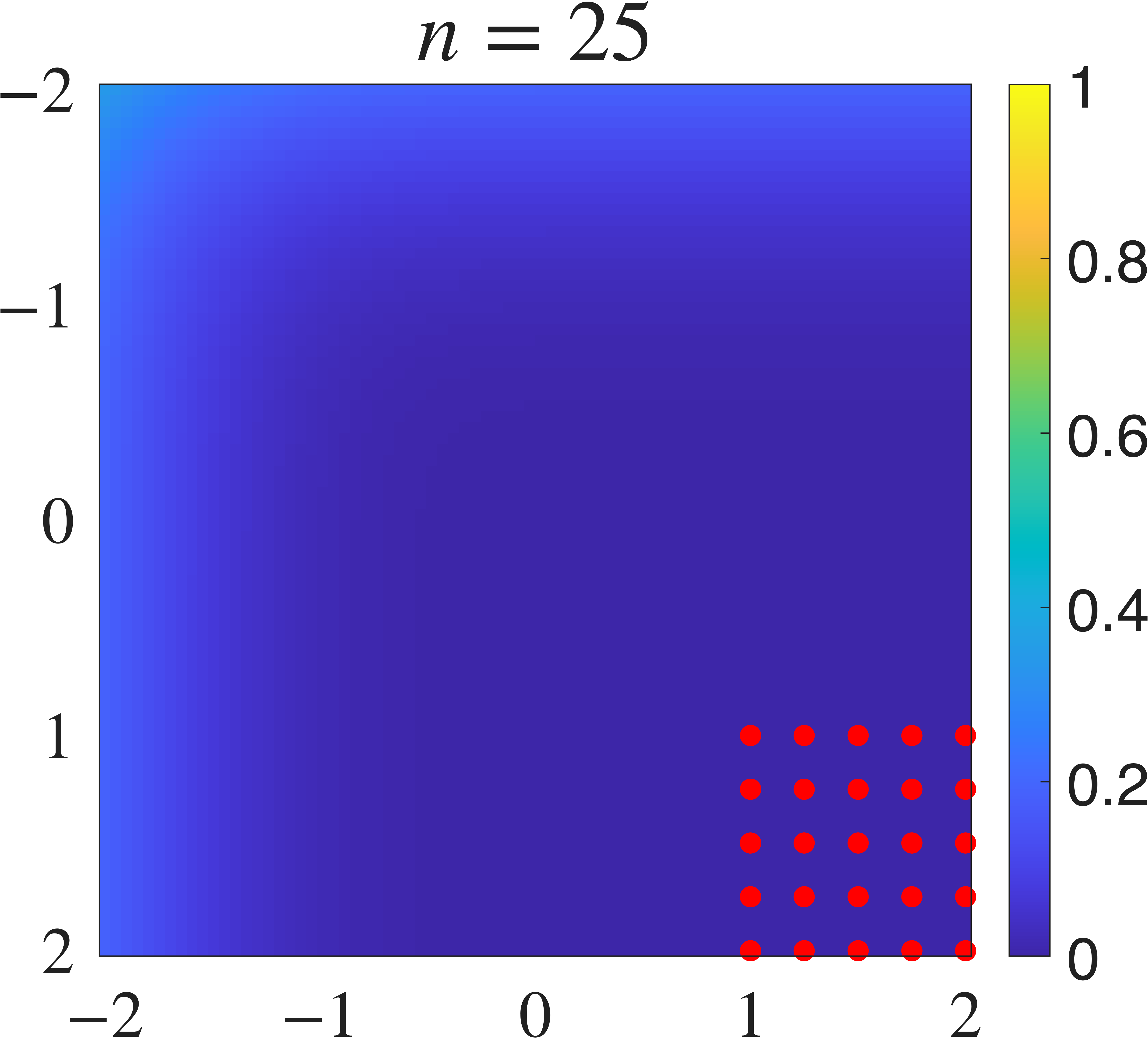}
    \caption{The conditional variance over $[-2, 2]^2 \subset \R^2$ for the Gaussian kernel in~\eqref{eq:gaussian} with $\ell = 2$ when the input points are located in the sub-domain $[1, 2]^2$. As the number of points increases, the variance decreases rapidly over the entire domain even though the points cover only the small sub-domain.}
    \label{fig:variance}
  \end{figure}    
\end{center}

\Cref{thm:interpolation} requires that the set of sampling points contain a tensor grid.
The proof is based on standard tensor product arguments for transferring one-dimensional bounds to higher dimensions.
The conclusion that the variance tends to zero remains true whenever the sequence of points is dense in some open subset of $D$ (see \Cref{thm:regression}).
However, in that case we cannot deduce any rate for $d > 1$.
Moreover, the following proposition shows that there are point configurations for which the variance remains bounded from below when $d > 1$.

\begin{proposition} \label{prop:gaussian-lower-bound}
  Let $d \geq 1$ and $\sigma \geq 0$.
  Consider the Gaussian kernel in~\eqref{eq:gaussian}.
  Let $R_0 \subset \R^d$ be the set of roots of a non-trivial $d$-variate polynomial.
  If $X_n = \{x_1, \ldots, x_n\}$ for any sequence $\{ x_i \}_{i=1}^\infty \subset R_0$ of pairwise distinct points, then for every $x \notin R_0$ there is $c_x > 0$ such that
  \begin{equation*}
    \V_\sigma(x \mid \mathcal{D}_n) \geq c_x
  \end{equation*}
  for every $n \in \N$.
\end{proposition}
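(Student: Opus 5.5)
Let $p$ be the non-trivial polynomial with zero set $R_0$. The idea is to build a single function in the reproducing kernel Hilbert space $H_k$ of the Gaussian kernel that vanishes on all of $R_0$ (hence on every $X_n$) but not at $x$. We then combine this with the worst-case-error characterisation of the conditional variance that the paper already uses in the proof of \Cref{thm:interpolation-1d}.

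\textbf{Step 1: reduce to a worst-case-error bound.} For $\sigma = 0$, the variance has the standard characterisation
\begin{equation*}
  \V(x \mid \mathcal{D}_n) = \sup\Set[\big]{ \lvert f(x) - \textstyle\sum_{i} u_i(x) f(x_i) \rvert^2 }{ \lVert f \rVert_{H_k} \leq 1 },
\end{equation*}
where $u_i(x)$ are the entries of $\b{K}_n^{-1}\b{k}_n(x)$. It follows that if $g \in H_k$ satisfies $g(x_i) = 0$ for all $i$, then $\V(x \mid \mathcal{D}_n) \geq g(x)^2/\lVert g \rVert_{H_k}^2$.

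For $\sigma > 0$, the matrix inequality $(\b{K}_n + \sigma^2 \b{I}_n)^{-1} \preceq \b{K}_n^{-1}$ gives $\V_\sigma(x \mid \mathcal{D}_n) \geq \V(x \mid \mathcal{D}_n)$, so the noiseless case suffices. Alternatively, one can use the analogous regularised characterisation
\begin{equation*}
  \V_\sigma(x \mid \mathcal{D}_n) = \sup_f \frac{\bigl(f(x) - \sum_i w_i f(x_i)\bigr)^2}{\lVert f \rVert^2_{H_k} + \dots}
\end{equation*}
and plug in the same $g$; the monotonicity argument is cleaner.

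\textbf{Step 2: construct $g$.} Take $g(y) = p(y)\,k(x, y)$, i.e.\ $p(y)\exp(-\lVert x-y\rVert_2^2/(2\ell^2))$. This vanishes on $R_0$, so $g(x_i) = 0$ for every $i$. Since $x \notin R_0$, we have $g(x) = p(x) \neq 0$. Once we know $g \in H_k$, we can set $c_x = p(x)^2/\lVert g \rVert_{H_k}^2 > 0$, which is independent of $n$ and of the sequence.

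\textbf{Step 3 (main obstacle): show $g \in H_k$.} It suffices to show that $H_k$ is closed under multiplication by each coordinate monomial $y_j$, since then closure under $p$ follows by finitely many applications. Two routes are available.
\begin{itemize}
  \item \emph{Fourier route.} Use the Fourier characterisation $H_k = \{ f : \int \lvert \hat f(\omega)\rvert^2 e^{\ell^2\lVert\omega\rVert^2/2}\,\mathrm{d}\omega < \infty\}$. The Fourier transform of $y^\alpha e^{-\lVert x-y\rVert^2/(2\ell^2)}$ is a polynomial times a modulated Gaussian $e^{-\ell^2\lVert\omega\rVert^2/2}$, up to constants. Its squared modulus times $e^{\ell^2\lVert\omega\rVert^2/2}$ is then a polynomial times $e^{-\ell^2\lVert\omega\rVert^2/2}$, which is integrable.
  \item \emph{Direct route.} Use the known explicit description of the Gaussian RKHS (Steinwart--Christmann) and note that polynomial multiples of $k(x,\cdot)$ are derivatives in $x$ of kernel translates: for example, $\partial_{x_j} k(x, y) = (y_j - x_j)\ell^{-2} k(x, y)$. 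Derivatives of $k(x,\cdot)$ with respect to $x$ lie in $H_k$ because $k$ is smooth. Iterating, $(y - x)^\alpha k(x, y) \in H_k$, and expanding $p$ around $x$ writes $g$ as a finite combination of such terms.
\end{itemize}
I would use the direct route, since the fact that $\partial^\alpha_x k(x,\cdot) \in H_k$ for smooth kernels is standard and avoids Fourier normalisation constants.
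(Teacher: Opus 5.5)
Your proof is correct and is essentially the paper's argument: you reduce to $\sigma = 0$ by monotonicity in $\sigma$ and then use the worst-case characterisation with a nonzero $g \in H(k)$ that vanishes on $R_0$. The paper's witness $e^{-\lVert z\rVert_2^2/(2\ell^2)}P(z) = P(z)\,k(0,z)$ is your construction with the translate centred at the origin rather than at $x$, and the paper gets its membership in one line from Minh's power-series description of $H(k)$, rather than from derivatives of translates or Fourier transforms.

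One correction: drop the claim that it suffices to show $H(k)$ is closed under multiplication by $y_j$, because that closure is false. For $d = 1$, multiplication by $y$ maps the orthonormal basis function $\ell^{-p}(p!)^{-1/2}y^p e^{-y^2/(2\ell^2)}$ to $\ell\sqrt{p+1}$ times the next one, so it is unbounded and not defined on all of $H(k)$. What both of your routes actually establish, and all you need, is that polynomial multiples of the single translate $k(x,\cdot)$ lie in $H(k)$. In the direct route, the ``iteration'' should be an induction on degree, since $\partial_x^\alpha k(x,\cdot)$ equals $\ell^{-2\lvert\alpha\rvert}(\cdot - x)^\alpha k(x,\cdot)$ plus lower-degree polynomial multiples of $k(x,\cdot)$.
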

\begin{proof}
  See \Cref{sec:proofs-main-sec}.
\end{proof}

When $d = 1$, the set $R_0$ of roots of a non-trivial polynomial is finite. 
However, when $d > 1$, the set of roots is in most cases infinite.
Easy examples of infinite $R_0$ are hyperplanes and the unit sphere $\Set{x \in \R^d}{ \lVert x \rVert_2 = 1}$, which is the set of roots of the polynomial $1 - (x_1^2 + \cdots + x_d^2)$ of total degree 2.
More examples are given in \Cref{fig:polynomial-roots}.

\begin{center}
  \begin{figure}[t]
    \includegraphics[width=\textwidth]{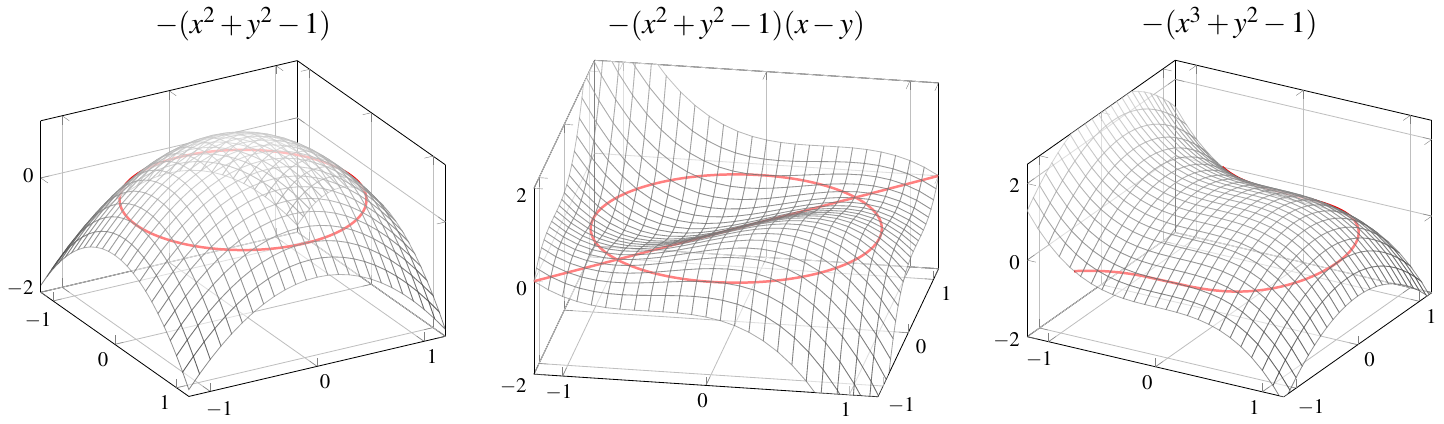}
    \caption{The three bivariate polynomials $-(x^2 + y^2 - 1)$, $-(x^2 + y^2 - 1)(x - y)$, and $-(x^3 + y^2 - 1)$ along with their roots (in red). Observe that in each case the set of roots is infinite.}
    \label{fig:polynomial-roots}
  \end{figure}    
\end{center}

\Cref{thm:interpolation-1d,thm:interpolation} show that the variance tends to zero very fast in the interpolatory setting (i.e., $\sigma = 0$).
The following theorem applies to both interpolation and regression ($\sigma > 0$).

\begin{theorem}[Regression for $d \geq 1$] \label{thm:regression}
  Consider the Gaussian kernel in~\eqref{eq:gaussian} for $d \geq 1$ and $\sigma \geq 0$.  
  If $X_n = \{x_1, \ldots, x_n\}$ for any sequence $\{ x_i \}_{i=1}^\infty \subset \R^d$ of pairwise distinct points whose closure has non-empty interior, then
  \begin{equation*}
    \sup_{x \in K} \V_\sigma( x \mid \mathcal{D}_n ) \to 0 \quad \text{ as } \quad n \to \infty \quad \text{ for any compact } \quad K \subset \R^d .
  \end{equation*}
\end{theorem}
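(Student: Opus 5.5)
We use the worst-case error characterisation of the conditional variance together with analyticity of functions in the reproducing kernel Hilbert space $H$ of the Gaussian kernel. Write $P_n$ for the orthogonal projection in $H$ onto $V_n = \mathrm{span}\{k(\cdot, x_i)\}_{i=1}^n$. In the interpolation case, $\V(x \mid \mathcal{D}_n) = \norm[0]{k(\cdot,x) - P_n k(\cdot,x)}_H^2$. The spaces $V_n$ are nested, so for each fixed $x$ the variance is non-increasing in $n$. For $\sigma > 0$ we use the standard identity
\begin{equation*}
  \V_\sigma(x \mid \mathcal{D}_n) = \min_{a \in \R^n} \Big( \norm[0]{k(\cdot,x) - \textstyle\sum_i a_i k(\cdot,x_i)}_H^2 + \sigma^2 \norm[0]{a}_2^2 \Big).
\end{equation*}
This is again non-increasing in $n$, because one may pad $a$ with zeros.

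\emph{Step 1 (pointwise convergence, $\sigma = 0$).} Let $V_\infty$ be the closure of $\bigcup_n V_n$. Then $\V(x\mid\mathcal{D}_n) \to \norm[0]{k(\cdot,x) - P_\infty k(\cdot,x)}_H^2$. We claim $V_\infty = H$. Suppose $f \in H$ is orthogonal to $V_\infty$. By the reproducing property, $f(x_i) = 0$ for all $i$, so by continuity $f$ vanishes on the closure of $\{x_i\}$, which contains an open ball. Every element of the Gaussian RKHS extends to an entire function on $\C^d$ (this follows from its Fourier characterisation, i.e.\ $\hat f$ decaying like a Gaussian). Hence $f$ is real-analytic on $\R^d$, and vanishing on an open set forces $f \equiv 0$ by the identity theorem. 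Thus $V_\infty = H$ and $\V(x\mid\mathcal{D}_n)\to 0$ for every $x$.

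\emph{Step 2 (noisy case).} Fix $x$ and $\varepsilon>0$. By Step 1, for $N$ large enough there is $a \in \R^N$ with $\norm[0]{k(\cdot,x) - \sum_{i \le N} a_i k(\cdot,x_i)}_H^2 < \varepsilon$, but this $a$ has a fixed norm, so the penalty $\sigma^2\norm{a}_2^2$ does not vanish. To fix this we exploit repetition: the closure of $\{x_i\}$ has non-empty interior, so infinitely many $x_i$ accumulate near each point of that interior. I would replace each $k(\cdot,x_i)$, $i\le N$, by an average $\frac1M\sum_{j=1}^M k(\cdot, x_{i,j})$ over $M$ distinct later points $x_{i,j}$ very close to $x_i$. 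Continuity of $y\mapsto k(\cdot,y)$ in $H$ keeps the approximation error below $2\varepsilon$, while the coefficient norm becomes $\sum_i a_i^2/M \to 0$ as $M\to\infty$. Hence $\limsup_n \V_\sigma(x\mid\mathcal{D}_n) \le 2\varepsilon$, and since $\varepsilon$ is arbitrary the limit is zero. Subtlety: the chosen later points must lie within a given distance of $x_i$, which requires each $x_i$ used to be an accumulation point of the sequence. So in Step 1 I would take the approximants from among the $x_i$ inside the interior $U$ of the closure, noting that a dense subset of $U$ consists of accumulation points. Equivalently, I would run Step 1 with the subsequence of points lying in $U$; its closure still contains $U$.

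\emph{Step 3 (uniformity on compact $K$).} We have $\V_\sigma(\cdot\mid\mathcal{D}_n)\ge 0$, it is continuous in $x$, and it is non-increasing in $n$, with pointwise limit $0$. Dini's theorem then gives uniform convergence on $K$. Alternatively, one can argue directly: $x \mapsto k(\cdot,x)$ is continuous into $H$, and the map $g \mapsto$ (penalised distance of $g$ to the span) is $1$-Lipschitz-type for $\sigma=0$ and continuous in general, so a finite-net argument works too. The main obstacle I expect is Step 2, namely making the averaging argument rigorous and handling the choice of accumulation points. Step 1 relies on the known fact that Gaussian RKHS functions are analytic.
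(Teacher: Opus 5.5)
Your proposal is correct and follows essentially the same route as the paper, which proves the more general \Cref{thm:analytic}: density of the kernel translates at inputs lying in an open set (orthogonal complement plus the identity theorem), replacing each translate by an average over $L$ nearby inputs so the noise penalty shrinks like $\sigma^2\lVert \b{v}\rVert_2^2/L$, and Dini's theorem for uniformity on compacts. The only difference is how analyticity is obtained. You use the entire extension of Gaussian RKHS functions; note this comes from the weighted bound $\int \lvert \mathcal{F}f\rvert^2 e^{\ell^2\lVert\omega\rVert_2^2/2}\,\mathrm{d}\omega<\infty$ rather than pointwise Gaussian decay of $\mathcal{F}f$. The paper instead derives it from bounds on $\lVert \mathrm{D}^\alpha_x k(\cdot,x)\rVert_{H(k)}$, which hold for any exponentially decaying spectral density.
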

\begin{proof}
  This is a special case of \Cref{thm:analytic}.
\end{proof}

There are some noteworthy differences between \Cref{thm:interpolation-1d,thm:interpolation} and \Cref{thm:regression}.
\Cref{thm:regression} provides no \emph{rate} of convergence even in the interpolatory case ($\sigma = 0$) in which the former theorems imply super-exponential rates. 
However, unlike \Cref{thm:interpolation}, which requires that the set of input points contain a grid, \Cref{thm:regression} makes minimal assumptions about the inputs, only requiring that they be dense in some set.

\subsection{Numerical ill-conditioning} \label{sec:ill-conditioning}

Our second point is that the Gaussian kernel is numerically brittle.
The computation of posterior distributions in Gaussian process regression inevitably involves the kernel covariance matrix $\b{K}_n = (k(x_i, x_j))_{i,j=1}^n$ (see \Cref{sec:gp-regression}).
As we saw in \Cref{sec:literature}, it has been repeatedly observed that to use the Gaussian kernel can be challenging due to exceptionally poor conditioning of the covariance matrix.
If it is assumed that the observations are corrupted by zero-mean Gaussian noise with variance $\sigma^2$, the matrix that one needs to invert to compute the conditional mean and variance is $\b{K}_n + \sigma^2 \b{I}_n$, where $\b{I}_n$ is the identity matrix.
In this case numerical stability is a non-issue unless $\sigma$ is very small because the condition number of $\b{K}_n + \sigma^2 \b{I}_n$ is bounded from above by $(n + \sigma^2)/\sigma^2$.

However, in tasks such as emulation of computer experiments~\citep{Sacks1989} or probabilistic numerical computation~\citep{Hennig2022} it is natural to assume that the data are noiseless (i.e., $\sigma = 0$).
Recall that the condition number of a positive-definite matrix $\b{A}$ is
\begin{equation} \label{eq:condition-number}
  \mathrm{cond}(\b{A}) = \frac{\lambda_{\max}(\b{A})}{\lambda_{\min}(\b{A})},
\end{equation}
where $\lambda_{\max}(\b{A})$ and $\lambda_{\min}(\b{A})$ are the largest and smallest eigenvalues of $\b{A}$.
The magnitude of the condition number of a covariance matrix is connected to the magnitude of the conditional variance.

\begin{theorem} \label{thm:ill-conditioning-general}
  Consider a strictly positive-definite stationary kernel $k(x, y) = \Phi(x - y)$. If $\sigma = 0$ and $X_{n+1} = \{x_1, \ldots, x_{n+1}\} \subset \R^d$, then
  \begin{equation*}
    \mathrm{cond}( \b{K}_{n+1} ) \geq \frac{\Phi(0)}{\V(x_{n+1} \mid \mathcal{D}_{n})} .
  \end{equation*}
\end{theorem}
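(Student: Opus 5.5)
We bound the two eigenvalues separately, using only the block structure of $\b{K}_{n+1}$ and a Schur complement identity. For the largest eigenvalue, the Rayleigh quotient with the standard basis vector $e_{n+1}$ gives
\begin{equation*}
  \lambda_{\max}(\b{K}_{n+1}) \geq e_{n+1}^\T \b{K}_{n+1} e_{n+1} = k(x_{n+1}, x_{n+1}) = \Phi(0).
\end{equation*}
It therefore suffices to show that $\lambda_{\min}(\b{K}_{n+1}) \leq \V(x_{n+1} \mid \mathcal{D}_n)$.

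For this, partition $\b{K}_{n+1}$ with $\b{K}_n$ in the top-left block, $\b{k}_n(x_{n+1})$ in the off-diagonal blocks and $\Phi(0)$ in the bottom-right corner. The Schur complement of $\b{K}_n$ is then exactly
\begin{equation*}
  s = \Phi(0) - \b{k}_n(x_{n+1})^\T \b{K}_n^{-1} \b{k}_n(x_{n+1}) = \V(x_{n+1} \mid \mathcal{D}_n),
\end{equation*}
by~\eqref{eq:gp-interpolation}. Since the points are pairwise distinct and $k$ is strictly positive-definite, both $\b{K}_n$ and $\b{K}_{n+1}$ are invertible, and $s > 0$. By the block inverse formula, the $(n+1,n+1)$ entry of $\b{K}_{n+1}^{-1}$ equals $1/s$.

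Now apply the Rayleigh quotient again, this time to the inverse matrix:
\begin{equation*}
  \frac{1}{\lambda_{\min}(\b{K}_{n+1})} = \lambda_{\max}(\b{K}_{n+1}^{-1}) \geq e_{n+1}^\T \b{K}_{n+1}^{-1} e_{n+1} = \frac{1}{\V(x_{n+1} \mid \mathcal{D}_n)}.
\end{equation*}
This gives $\lambda_{\min}(\b{K}_{n+1}) \leq \V(x_{n+1} \mid \mathcal{D}_n)$.

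Alternatively, one can test the vector $a = (-\b{K}_n^{-1}\b{k}_n(x_{n+1}), 1)$. A direct computation gives $a^\T \b{K}_{n+1} a = \V(x_{n+1} \mid \mathcal{D}_n)$ and $\lVert a \rVert_2 \geq 1$, so the Rayleigh quotient of $a$ is at most the variance, which again bounds $\lambda_{\min}$.

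Combining the bounds on $\lambda_{\max}$ and $\lambda_{\min}$ in~\eqref{eq:condition-number} yields the claimed inequality. The only step needing care is identifying the Schur complement with the conditional variance, including checking invertibility and $s > 0$; the rest is routine linear algebra. If the indexing of $X_n$ versus $X_{n+1}$ differs, one can permute the points, since the condition number is invariant under symmetric permutations.
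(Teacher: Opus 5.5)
Your proof is correct and has the same structure as the paper's. The paper gets $\lambda_{\max}(\b{K}_{n+1}) \geq \Phi(0)$ from $\mathrm{tr}(\b{K}_{n+1}) = (n+1)\Phi(0)$; your single diagonal entry is the same observation. It then cites Section~12.1 of Wendland for $\lambda_{\min}(\b{K}_{n+1}) \leq \V(x_{n+1} \mid \mathcal{D}_n)$, and the argument there is exactly your test-vector one with $a = (-\b{K}_n^{-1}\b{k}_n(x_{n+1}), 1)$. Your Schur-complement route, via $(\b{K}_{n+1}^{-1})_{n+1,n+1} = 1/\V(x_{n+1} \mid \mathcal{D}_n)$, is simply an equivalent self-contained proof of that cited estimate.
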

\begin{proof}
  Estimates in Section~12.1 of \citet{Wendland2005} give $\lambda_{\min}(\b{K}_{n+1}) \leq \V(x_{n+1} \mid \mathcal{D}_{n})$.
  Because the trace is the sum of eigenvalues and $\mathrm{tr}(\b{K}_{n+1}) = (n+1) \Phi(0)$ by stationarity, not all eigenvalues of $\b{K}_{n+1}$ can be less than $\Phi(0)$.
\end{proof}

By plugging in the estimates from \Cref{thm:interpolation-1d,thm:interpolation} we obtain the following result on the condition number of the covariance matrix of the Gaussian kernel:

\begin{corollary} \label{thm:ill-conditioning-gaussian}
  Consider the Gaussian kernel in~\eqref{eq:gaussian} and $X_{n+1} = X_n \cup \{x_{n+1}\}$.
  Then
  \begin{equation*}
    \mathrm{cond}( \b{K}_{n+1} ) \geq \frac{1}{2} \bigg( \frac{\sqrt{2}(b-a)}{ \ell} \bigg)^{-2n} n! \quad \text{ and } \quad \mathrm{cond}( \b{K}_{n+1} ) \geq \frac{1}{2d} \bigg( \frac{\sqrt{2}(b-a)}{ \ell} \bigg)^{-2m} m!
  \end{equation*}
  in the settings of \Cref{thm:interpolation-1d} ($d = 1$) and \Cref{thm:interpolation} ($d \geq 1$), respectively.
\end{corollary}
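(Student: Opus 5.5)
The corollary follows by combining \Cref{thm:ill-conditioning-general} with the variance bounds of \Cref{thm:interpolation-1d,thm:interpolation}. For the Gaussian kernel in~\eqref{eq:gaussian} we have $k(x,y) = \Phi(x-y)$ with $\Phi(z) = \exp(-\lVert z\rVert_2^2/(2\ell^2))$. This kernel is stationary and strictly positive-definite, and $\Phi(0) = 1$. \Cref{thm:ill-conditioning-general} therefore gives
\begin{equation*}
  \mathrm{cond}(\b{K}_{n+1}) \geq \frac{1}{\V(x_{n+1} \mid \mathcal{D}_n)} ,
\end{equation*}
where $\mathcal{D}_n$ refers to the first $n$ points $X_n$. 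We need $\V(x_{n+1} \mid \mathcal{D}_n) > 0$, which holds because $x_{n+1} \notin X_n$ (the points are tacitly pairwise distinct) and the kernel is strictly positive-definite.

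Next we bound this pointwise variance from above by the supremum over $D$. In the setting of \Cref{thm:interpolation-1d}, all points $x_1, \ldots, x_{n+1}$ lie in $D = [a,b]$, so in particular $x_{n+1} \in D$. Hence
\begin{equation*}
  \V(x_{n+1} \mid \mathcal{D}_n) \leq \sup_{x \in D} \V(x \mid \mathcal{D}_n) \leq 2 \bigg( \frac{\sqrt{2}(b-a)}{\ell} \bigg)^{2n} \frac{1}{n!} .
\end{equation*}
Taking reciprocals gives the first inequality,
\begin{equation*}
  \mathrm{cond}(\b{K}_{n+1}) \geq \frac{1}{2}\bigg(\frac{\sqrt{2}(b-a)}{\ell}\bigg)^{-2n} n! .
\end{equation*}

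The $d \geq 1$ case is identical. We read the hypotheses of \Cref{thm:interpolation} as applying to $X_n$: it contains a Cartesian product of $d$ sets of $m$ distinct reals, and $x_{n+1} \in D \subset [a,b]^d$. The bound $2d\,(\sqrt{2}(b-a)/\ell)^{2m}/m!$ on $\sup_{x \in D}\V(x \mid \mathcal{D}_n)$ then yields the second inequality after taking reciprocals.

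There is no real obstacle. The only point needing care is the bookkeeping in the statement: the theorems must be applied to $X_n$, not $X_{n+1}$, and $x_{n+1}$ must lie in $D$ so that the supremum bound covers it. If $x_{n+1}$ is not assumed to be in $D$, we would instead enlarge $D$ to an interval or box containing it, which replaces $b-a$ by the corresponding diameter.
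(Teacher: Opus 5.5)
Your proposal is correct and matches the paper's argument, which simply plugs $\Phi(0)=1$ and the supremum bounds of \Cref{thm:interpolation-1d,thm:interpolation} (applied to $X_n$) into \Cref{thm:ill-conditioning-general}. Your remark that $x_{n+1}$ must lie in $D$ for the supremum bound to cover $\V(x_{n+1} \mid \mathcal{D}_n)$ is a point the paper leaves implicit.
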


The condition number thus grows with a factorial rate.
This means that the limited numerical precision prevents the implementation of Gaussian process interpolation if the Gaussian kernel is used.
In the dominant double-precision floating-point arithmetic a condition number of about $10^{16}$ is the most that can be handled without a loss of significant precision.
In \Cref{fig:intro-condition-number} we saw that surprisingly few points are needed to lose numerical precision.
The condition number $\mathrm{cond} (\b{K}_n) \approx 10^{16}$ was exceeded with only $n = 8$ points.
The same figure showed that finitely smooth Matérns are much less brittle in this sense.
Next we explain why Matérns are different.

\section{Smoothness matters} \label{sec:smoothness-matern}

Many commonly used kernels are stationary, which means that $k(x, y) = \Phi(x - y)$ for a function $\Phi \colon \R^d \to \R$ and all $x, y \in \R^d$.
A stationary kernel is strictly positive-definite if the Fourier transform of $\Phi$, or the \emph{spectral density},
\begin{equation*}
  (\mathcal{F} \Phi)(\omega) = \frac{1}{(2\pi)^{d/2}} \int_{\R^d} \Phi(x) e^{-i x \cdot \omega} \dif x 
\end{equation*}
exists and is everywhere positive.
The Fourier transform is real because $\Phi$ is symmetric about the origin.
The rate of decay of the spectral density determines the smoothness of the kernel.
In the univariate case,
\begin{equation*}
  \Phi^{(2m)}(0) = \frac{1}{\sqrt{2\pi}} (-1)^m \int_{\R} \omega^{2m} (\mathcal{F} \Phi)(\omega) \dif \omega,
\end{equation*}
which exists only if the mapping $\omega \mapsto \omega^{2m} (\mathcal{F} \Phi)(\omega)$ is integrable.
The larger $m$ is, the faster the spectral density must tend to zero as $\lvert \omega \rvert \to \infty$ if this mapping is to be integrable.
That is, the more differentiable one wishes the kernel to be, the faster its spectral density must decay.
The spectral density 
\begin{equation*}
  (\mathcal{F}\Phi)(\omega) = \ell^d \exp(-\ell^2 \lVert \omega \rVert_2^2 / 2 )
\end{equation*}
of the Gaussian kernel~\eqref{eq:gaussian} is Gaussian and thus decays faster than any polynomial, meaning that the Gaussian kernel is infinitely differentiable (a fact for which it is of course wholly unnecessary to invoke Fourier analysis).

\subsection{Matérns and finite smoothness}

In contrast to the Gaussian, the spectral density of a Matérn kernel 
\begin{equation*}
  k_\nu(x, y) = \Phi_\nu(x - y) = \frac{2^{1-\nu}}{\Gamma(\nu)} \bigg( \frac{\sqrt{2\nu} \norm[0]{x - y}_2}{\ell} \bigg)^\nu \mathcal{K}_\nu \bigg( \frac{\sqrt{2\nu} \norm[0]{ x - y }_2}{\ell} \bigg)
\end{equation*}
on $\R^d$ has polynomial decay~\citep[Theorem~6.13]{Wendland2005}:
\begin{equation} \label{eq:matern-spectral}
  (\mathcal{F} \Phi_\nu)(\omega) = \frac{2^{d/2} \Gamma(\nu+d/2)}{\Gamma(\nu)} \bigg( \frac{2\nu}{\ell^2} \bigg)^\nu
\bigg(\frac{2\nu}{\ell^2}+ \lVert \omega \rVert_2^2\bigg)^{-(\nu+d/2)} .
\end{equation}
There is a sharp contrast between the behaviour of the conditional variance for the Gaussian kernel and kernels whose spectral density decays polynomially.

\begin{theorem} \label{thm:matern}
  Consider a stationary strictly positive-definite kernel $k(x, y) = \Phi(x - y)$ for a continuous and integrable $\Phi \colon \R^d \to \R$. Suppose that there are $c > 0$ and $s > 0$ such that $(\mathcal{F}\Phi)(\omega) \geq c (1 + \lVert \omega \rVert_2^2)^{-s}$ for all $\omega \in \R^d$.
  Let $\sigma \geq 0$ and $X_n = \{x_1, \ldots, x_n \} \subset \R^d$.
  If $\delta(x, X_n) = \min_{x_i \in X_n} \lVert x - x_i \rVert_2 \leq 1$, then
  \begin{equation} \label{eq:matern-lower-bound}
    \V_\sigma(x \mid \mathcal{D}_n) \geq C \delta(x, X_n)^{2s - d},
  \end{equation}
  where $C > 0$ is a constant that does not depend on $\sigma$, $x$, or $X_n$.
\end{theorem}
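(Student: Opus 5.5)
The plan is to reduce to the noiseless case and then test the worst-case characterisation of the conditional variance against a rescaled bump function centred at $x$.

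\emph{Step 1: reduction to $\sigma = 0$.} For $\sigma \geq 0$ we have $\b{K}_n + \sigma^2 \b{I}_n \succeq \b{K}_n \succ 0$ in the Loewner order. Hence $(\b{K}_n + \sigma^2\b{I}_n)^{-1} \preceq \b{K}_n^{-1}$, and from~\eqref{eq:gp-regression-var} it follows that $\V_\sigma(x \mid \mathcal{D}_n) \geq \V(x \mid \mathcal{D}_n)$. It therefore suffices to prove~\eqref{eq:matern-lower-bound} for $\sigma = 0$ with a constant independent of $\sigma$.

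\emph{Step 2: worst-case characterisation.} We use the standard identity (the connection already exploited in the proof of \Cref{thm:interpolation-1d})
\begin{equation*}
  \V(x \mid \mathcal{D}_n) = \sup\Set[\big]{ f(x)^2 }{ f \in \mathcal{H}_k,\ \lVert f \rVert_{\mathcal{H}_k} \leq 1,\ f(x_i) = 0 \text{ for } i = 1, \ldots, n }.
\end{equation*}
Consequently, $\V(x \mid \mathcal{D}_n) \geq f(x)^2 / \lVert f \rVert_{\mathcal{H}_k}^2$ for any non-zero $f \in \mathcal{H}_k$ that vanishes on $X_n$. Since $\Phi$ is continuous and integrable with positive Fourier transform, the RKHS norm is
\begin{equation*}
  \lVert f \rVert_{\mathcal{H}_k}^2 = \int_{\R^d} \frac{\lvert (\mathcal{F} f)(\omega) \rvert^2}{(\mathcal{F}\Phi)(\omega)} \dif \omega,
\end{equation*}
up to a normalising constant depending only on $d$. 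By the assumed lower bound on $\mathcal{F}\Phi$,
\begin{equation*}
  \lVert f \rVert_{\mathcal{H}_k}^2 \leq c^{-1} \int_{\R^d} \lvert (\mathcal{F} f)(\omega) \rvert^2 (1 + \lVert \omega \rVert_2^2)^s \dif \omega .
\end{equation*}

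\emph{Step 3: the bump.} Fix a smooth $\psi$ supported in the open unit ball with $\psi(0) = 1$. Write $\delta = \delta(x, X_n) \in (0, 1]$; we assume $x \notin X_n$, since otherwise there is nothing to prove when $2s > d$. Set $f(y) = \psi((y - x)/\delta)$. Every $x_i$ satisfies $\lVert x_i - x \rVert_2 \geq \delta$, so $f(x_i) = 0$, while $f(x) = 1$. Since $(\mathcal{F} f)(\omega) = \delta^d e^{-i x \cdot \omega} (\mathcal{F}\psi)(\delta \omega)$, the substitution $u = \delta\omega$ gives
\begin{equation*}
  \int_{\R^d} \lvert (\mathcal{F} f)(\omega) \rvert^2 (1 + \lVert \omega \rVert_2^2)^s \dif \omega = \delta^{d} \int_{\R^d} \lvert (\mathcal{F}\psi)(u) \rvert^2 (1 + \lVert u \rVert_2^2 / \delta^2)^s \dif u \leq \delta^{d - 2s} \lVert \psi \rVert_{H^s}^2 .
\end{equation*}
The last inequality uses $1 + t/\delta^2 \leq (1 + t)/\delta^2$ for $\delta \leq 1$; this is exactly where the hypothesis $\delta(x, X_n) \leq 1$ enters. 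Combining the steps,
\begin{equation*}
  \V_\sigma(x \mid \mathcal{D}_n) \geq \frac{c}{\lVert \psi \rVert_{H^s}^2} \, \delta^{2s - d}.
\end{equation*}
The constant $C = c / \lVert \psi \rVert_{H^s}^2$ depends only on $c$, $s$, $d$ and the fixed $\psi$, and not on $\sigma$, $x$ or $X_n$.

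\emph{Main obstacle.} The delicate points are bookkeeping rather than substance:
\begin{itemize}
  \item justifying the Fourier characterisation of $\lVert\cdot\rVert_{\mathcal{H}_k}$ (e.g.\ Theorem~10.12 in \citet{Wendland2005}) and that $f \in \mathcal{H}_k$, which follows because the upper bound is finite;
  \item the edge case $x \in X_n$ when $2s \leq d$, which should simply be excluded as degenerate.
\end{itemize}
Everything else is the elementary scaling computation above.
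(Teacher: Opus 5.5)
Your proposal is correct and follows essentially the same route as the paper's proof: reduce to $\sigma = 0$ via the Loewner ordering, test the worst-case characterisation of the variance against the rescaled bump $\psi((\cdot - x)/\delta)$, and bound its RKHS norm using the spectral lower bound and the scaling inequality $1 + t/\delta^2 \leq (1+t)/\delta^2$ for $\delta \leq 1$. The differences are cosmetic: you use a generic bump with $\psi(0) = 1$ instead of $\exp(-1/(1-\lVert y \rVert_2^2))$, and the sup-over-vanishing-functions form of the variance instead of the min-sup form. Your worry about the case $2s \leq d$ is moot, because a continuous, integrable, positive-definite $\Phi$ has integrable $\mathcal{F}\Phi$, so the assumed lower bound already forces $s > d/2$.
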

\begin{proof}
  See \Cref{sec:proofs-smoothness}.
\end{proof}

The quantity $\delta(x, X_n)$ appearing in~\eqref{eq:matern-lower-bound} is the distance from $x$ to the nearest point in $X_n$.
When $s > d/2$, a condition that any valid Matérn kernel satisfies, this theorem states that the variance at a point $x$ can tend to zero as $n \to \infty$ only if there are input points arbitrarily close to $x$. 
This is in stark contrast to \Cref{thm:interpolation-1d,thm:interpolation} in which the variance for the Gaussian kernel tends to zero (and does so extremely fast) even if the input points are far away from $x$.
These theorems make no reference to $\delta(x, X_n)$ or any other notions of closeness of $x$ to $X_n$.
The fundamental reason for this drastic difference is that the Gaussian kernel is analytic while Matérns are not.

\pgfplotsset{colormap={graybg}{rgb(0cm)=(0.6,0.6,0.6); rgb(1cm)=(0,0,0)}}
\begin{figure}
\centering
\begin{tikzpicture}
\begin{groupplot}[
    group style={group size=3 by 1, horizontal sep=0.5cm},
    width=6cm, height=6cm,
]

\nextgroupplot[
    hide axis,
    xmin=-2, xmax=2,
    ymin=-0.2, ymax=1.3,
    samples=200,
    domain=-2:2,
    smooth,
]
\addplot[ultra thick, black] {(abs(x)<1) * exp(-1/(1-x^2))*2.71828};

\nextgroupplot[
    hide axis,
    xmin=-2, xmax=2,
    ymin=-0.2, ymax=1.3,
    samples=200,
    domain=-2:2,
    smooth,
]
\addplot[ultra thick, black] {(abs(x)<1) * exp(-1/(1-x^6))*2.71828};

\nextgroupplot[
    hide axis,
    domain=-1.5:1.5,
    y domain=-1.5:1.5,
    samples=40,
    samples y=40,
    axis equal image,
    colormap name=graybg,
]
\addplot3[mesh, thin, shader=interp] {(x^2+y^2<1) * exp(-1/(1-(x^2+y^2)^12))*2.71828};

\end{groupplot}
\end{tikzpicture}
\caption{Infinitely differentiable but non-analytic bump functions on $\R$ and $\R^2$.}
\label{fig:bumps}
\end{figure}

\subsection{Analytic functions and kernels} \label{sec:analytic}

An infinitely differentiable function $f \colon D \to \R$ on an open set $D \subset \R^d$ is \emph{real analytic} if for every $x_0 \in D$ there is a neighbourhood $A \subset D$ of $x_0$ such that the Taylor series
\begin{equation*}
  T_{x_0}(x) = \sum_{\alpha \in \N_0^d} \frac{\mathrm{D}^\alpha f(x_0)}{\alpha!} (x - x_0)^\alpha
\end{equation*}
converges to $f(x)$ for every $x \in A$.
We use standard multi-index notations $x^\alpha = x_1^{\alpha_1} \cdots x_d^{\alpha_d}$, $\alpha! = \alpha_1! \cdots \alpha_d!$, and $\mathrm{D}^\alpha f(x) = \partial_1^{\alpha_1} \cdots \partial_d^{\alpha_d} f(x)$.
Every real analytic function is infinitely differentiable (for otherwise the Taylor series could not be defined) but an infinitely differentiable function need not be real analytic.
The standard example of such a function is the bump function given by 
\begin{equation*}
  f(x) = 
  \begin{dcases}
    \exp\bigg( \! -\frac{1}{1-\lVert x \rVert_2^2} \bigg) &\text{ if } \quad \lVert x \rVert_2 < 1, \\
    0 &\text{ if } \quad \lVert x \rVert_2 \geq 1 .
  \end{dcases}
\end{equation*}
\Cref{fig:bumps} shows other examples of infinitely differentiable yet non-analytic functions.
An infinitely differentiable function is real analytic if and only if $\mathrm{D}^\alpha f(x)$ does not grow too fast as $\alpha$ increases.
Namely, if for every compact $K \subset D$ there is $C \geq 0$ such that 
\begin{equation} \label{eq:analytic-derivatives}
  \sup_{x \in K} \lvert \mathrm{D}^\alpha f(x) \rvert \leq C^{\lvert \alpha \rvert + 1} \alpha! \quad \text{ for every } \quad \alpha \in \N_0^d,
\end{equation}
then $f$ is real analytic and vice versa.
A crucial property of non-trivial real analytic functions is that they cannot vanish on large sets.
See \citet[Cor.\@~1.2.7]{KrantzParks2002} and \citet{Mityagin2020} for the following theorem.\footnote{We give a version of the theorem that is as straightforward as possible. A more general version states that it suffices for $f$ to vanish on a sequence that has an accumulation point if $d = 1$ and on a set with positive Lebesgue measure if $d \geq 2$.}

\begin{theorem}[Identity theorem] \label{thm:identity-theorem}
Let $f \colon D \to \R$ be a real-analytic function on an open and connected set $D \subset \R^d$. 
If $f$ vanishes on an open subset of $D$, then $f = 0$ on $D$.
\end{theorem}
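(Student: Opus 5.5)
The plan is to prove the identity theorem by a connectedness argument, with the one-point Taylor expansion supplying the local step. Define
\begin{equation*}
  Z = \Set{x \in D}{\mathrm{D}^\alpha f(x) = 0 \text{ for every } \alpha \in \N_0^d}.
\end{equation*}
I will show that $Z$ is non-empty, closed in $D$ and open. Since $D$ is connected, this forces $Z = D$, and in particular $f = \mathrm{D}^0 f$ vanishes on $D$.

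Non-emptiness is immediate. If $f$ vanishes on an open set $U \subset D$, then every partial derivative of $f$ also vanishes on $U$, so $U \subset Z$.

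Closedness in $D$ is equally direct. A real analytic function is infinitely differentiable, so each $\mathrm{D}^\alpha f$ is continuous on $D$. Hence each zero set $\Set{x \in D}{\mathrm{D}^\alpha f(x) = 0}$ is relatively closed, and $Z$ is the intersection of these countably many closed sets.

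Openness is the step that actually uses analyticity, and it is where the main care is needed. Take $x_0 \in Z$. By the definition of real analyticity there is a neighbourhood $A \subset D$ of $x_0$ on which the Taylor series $T_{x_0}$ converges to $f$. All coefficients $\mathrm{D}^\alpha f(x_0)/\alpha!$ vanish, so $f = 0$ on $A$. Shrink $A$ to an open ball $B$ around $x_0$. On the open set $B$ the function $f$ is identically zero, so every derivative $\mathrm{D}^\alpha f$ also vanishes on $B$. Thus $B \subset Z$ and $Z$ is open.

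The subtlety is that knowing only $f = 0$ near $x_0$ would not give openness of $Z$. One has to record that all derivatives vanish on an open neighbourhood, and this is exactly why $Z$ is defined through all derivatives rather than through the values of $f$ alone.

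Finally, $Z$ is a non-empty subset of the connected set $D$ that is both open and relatively closed, so $Z = D$ and $f \equiv 0$ on $D$. If one prefers to avoid the abstract connectedness argument, an alternative is to join a point of $U$ to an arbitrary $x \in D$ by a path in $D$. Covering the compact path by finitely many neighbourhoods on which Taylor series converge, one propagates the vanishing of all derivatives along the path. This is the same argument made concrete.
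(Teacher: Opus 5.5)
Your proof is correct. $Z$ is relatively closed in $D$ because each $\mathrm{D}^\alpha f$ is continuous. It is open because at a point of $Z$ the Taylor series has only zero coefficients, so $f$, and hence every derivative, vanishes on a ball. It is non-empty because it contains $U$, which the statement tacitly assumes to be non-empty. Connectedness of $D$ then gives $Z = D$.

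The paper does not prove the theorem; it defers to \citet[Cor.\@~1.2.7]{KrantzParks2002} and \citet{Mityagin2020}. So your standard open--closed argument is a self-contained substitute for a citation rather than an alternative to an argument in the text. It is also all the paper needs: in the proof of \Cref{thm:analytic}, a function vanishing on a dense subset of a ball vanishes on the whole ball by continuity, and your version then applies.

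What the citation gives beyond your argument is the stronger statement in the footnote: vanishing on a sequence with an accumulation point when $d=1$, or on a set of positive Lebesgue measure when $d \geq 2$. Your scheme covers the $d=1$ case after one extra step. Factor the power series at the accumulation point to see that all derivatives vanish there, which gives a point of $Z$. The $d \geq 2$ case needs genuine measure-theoretic input, such as the Lebesgue-nullity of zero sets of non-trivial analytic functions established in \citet{Mityagin2020}. The reason is that vanishing on a set of positive measure does not directly produce a point at which all derivatives vanish.

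A minor quibble: your remark that knowing only $f = 0$ near $x_0$ would not give openness of $Z$ is misleading. Vanishing of $f$ on an open ball already forces all derivatives to vanish there, which is exactly the step you use. Defining $Z$ through all derivatives merely places the use of analyticity in the openness step. If you instead took $Z$ to be the interior of the zero set of $f$, openness would be trivial and analyticity would enter in the closedness step.
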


An immediate corollary of the identity theorem is that two real analytic functions that coincide on an open and connected set must coincide everywhere.
In other words, a real analytic function is fully determined by its values on any given open set. 
This explains the observations in \Cref{sec:main-sec}: With the Gaussian kernel the latent function is modelled as a real analytic function and thus having observations in any given open subset suffices to fully learn it.
All analytic kernels induce similar behaviour.

A square-integrable function $f$ is real analytic if its Fourier transform tends to zero exponentially fast, meaning that there is $a > 0$ such that $(\mathcal{F}f)(\omega) = O(\exp(-a \lVert \omega \rVert_2))$.
This is easy to verify with the help of the Fourier identity $(\mathcal{F} \, \mathrm{D}^\alpha f)(\omega) = (i\omega)^\alpha (\mathcal{F} f)(\omega)$ and~\eqref{eq:analytic-derivatives}.
A more general version of this result is known as the Paley--Wiener theorem~\citep[Thm.\@~IX.13]{ReedSimon1975}.
For stationary kernels we obtain the following theorem, which states that the conditional variance tends to zero everywhere essentially regardless of the distribution of the input points if the spectral density decays exponentially.

\begin{theorem} \label{thm:analytic}
  Consider a stationary strictly positive-definite kernel $k(x, y) = \Phi(x - y)$ for a continuous and integrable $\Phi \colon \R^d \to \R$.
  Suppose that there are $C \geq 0$ and $a > 0$ such that $(\mathcal{F}\Phi)(\omega) \leq C \exp(-a \lVert \omega \rVert_2)$ for all $\omega \in \R^d$.
  Let $\sigma \geq 0$.
  If $X_n = \{x_1, \ldots, x_n \}$ for a sequence $\{x_i\}_{i=1}^\infty \subset \R^d$ of pairwise distinct points whose closure has non-empty interior, then
  \begin{equation*}
     \sup_{x \in K} \V_\sigma(x \mid \mathcal{D}_n) \to 0 \quad \text{ as } \quad n \to \infty \quad \text{ for any compact } \quad K \subset \R^d.
  \end{equation*}
\end{theorem}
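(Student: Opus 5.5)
We argue through the reproducing kernel Hilbert space $H$ of $k$ and the standard variational characterisation of the conditional variance. For $\sigma \ge 0$,
\begin{equation*}
  \V_\sigma(x \mid \mathcal{D}_n) = \sup_{\lVert f \rVert_H^2 + \sigma^{-2}\sum_{i} f(x_i)^2 \le 1} f(x)^2 ,
\end{equation*}
with the constraint $f(x_i)=0$ for all $i$ when $\sigma=0$. Equivalently, $\V_\sigma(x\mid\mathcal{D}_n)$ is the squared distance from $k(\cdot,x)$ to the span of $\{k(\cdot,x_i)\}$ in the augmented space $H \oplus \ell^2$ (feature maps $x\mapsto(k(\cdot,x),0)$ and $x_i \mapsto (k(\cdot,x_i),\sigma e_i)$).

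The plan has three steps.

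\textbf{Pointwise convergence.} We show $\V_\sigma(x\mid\mathcal{D}_n)\to 0$ for each fixed $x$. Because $\V_\sigma$ is nonincreasing in $n$, the limit $v(x)$ exists. Suppose $v(x)>0$. For each $n$ let $f_n$ be the maximiser, normalised by $\lVert f_n\rVert_H^2+\sigma^{-2}\sum_{i\le n} f_n(x_i)^2\le 1$ and $f_n(x)^2=\V_\sigma(x\mid\mathcal{D}_n)$. In the noiseless case one may take $f_n=\V^{-1/2}(k(\cdot,x)-\text{projection})$. The $f_n$ are bounded in $H$, so a subsequence converges weakly to some $f\in H$. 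Since point evaluation is continuous, $f(x)^2=v(x)>0$.

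For each fixed $i$ and all $n\ge N\ge i$,
\begin{equation*}
  \sigma^{-2}\sum_{j\le N} f_n(x_j)^2\le 1 .
\end{equation*}
Letting $n\to\infty$ and then $N\to\infty$ gives $\sum_j f(x_j)^2\le\sigma^2$. In the case $\sigma=0$ this says $f(x_j)=0$ for all $j$.

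\textbf{Analyticity of $H$.} Elements of $H$ satisfy $\mathcal{F}f=g\,\mathcal{F}\Phi$ with $\int |g|^2\mathcal{F}\Phi<\infty$. By Cauchy--Schwarz,
\begin{equation*}
  \int|\omega^\alpha||\mathcal{F}f| \le \lVert f\rVert_H\Big(\int \omega^{2\alpha}\mathcal{F}\Phi\Big)^{1/2} \le \lVert f\rVert_H\, C'^{|\alpha|+1}\alpha! ,
\end{equation*}
using the exponential bound $\mathcal{F}\Phi\le Ce^{-a\lVert\omega\rVert}$. By \eqref{eq:analytic-derivatives}, every $f\in H$ is real analytic on all of $\R^d$, which is connected.

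\textbf{Conclusion for $\sigma=0$.} Here $f$ vanishes on the sequence, hence by continuity on its closure, which contains an open set. The identity theorem (Theorem~\ref{thm:identity-theorem}) gives $f\equiv0$, contradicting $f(x)\neq0$.

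\textbf{Main obstacle: the case $\sigma>0$.} Now the limit $f$ only satisfies $\sum_j f(x_j)^2\le\sigma^2$, not $f(x_j)=0$. We exploit that the sequence has infinitely many points in a ball $B$ contained in the closure. Because $f$ is continuous and $\sum_j f(x_j)^2<\infty$, we get $f(x_j)\to0$ along the sequence. Every point $y$ of the open set $U$ in the closure is a limit of infinitely many distinct $x_j$: $U$ is open and the points are distinct, so there are no isolated points to worry about. Hence $f=0$ on $U$, and the identity theorem again forces $f\equiv0$.

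The subtle point is verifying that $f_n(x)^2$ equals the variance with the noise-augmented normalisation. We will do this via the dual formula in $H\oplus\ell^2$, which also handles $\sigma=0$ uniformly.

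\textbf{Uniformity on compact $K$.} The functions $v_n(x)=\V_\sigma(x\mid\mathcal{D}_n)$ are continuous, as they are built from continuous $\Phi$, and monotonically decreasing to $0$ pointwise. Dini's theorem then gives uniform convergence on compact $K$.
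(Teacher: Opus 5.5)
Your proof is correct, but it works on the dual side where the paper's works on the primal side.

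The analyticity step is essentially shared. You bound $\int \lvert \omega^\alpha \rvert \lvert \mathcal{F}f \rvert$ directly, while the paper bounds $\lVert \mathrm{D}^\alpha_x k(\cdot, x) \rVert_{H(k)}$; both reduce to the same integral. The Dini step at the end is also the same.

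After that the routes diverge. The paper uses the identity theorem to show that the translates $k^{x_i}$ with $x_i$ in a ball $B$ span a dense subspace of $H(k)$. It then approximates $k^x$ by some $\sum_j v_j k^{z_j}$ in the minimisation of Lemma~\ref{lemma:worst-case}. To neutralise the noise penalty, it splits each $k^{z_j}$ into an average of $L$ translates at nearby inputs, so that $\sigma^2 \lVert \b{u} \rVert_2^2 = \sigma^2 \lVert \b{v} \rVert_2^2 / L \to 0$.

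You instead use the dual formula $\V_\sigma(x \mid \mathcal{D}_n) = \sup \{ f(x)^2 : \lVert f \rVert_{H(k)}^2 + \sigma^{-2} \sum_{i \leq n} f(x_i)^2 \leq 1 \}$. This is correct for $\sigma > 0$: it is the squared norm of the component of $(k^x, 0)$ orthogonal to the vectors $(k^{x_i}, \sigma e_i)$. You then take a weak limit of the maximisers and argue by contradiction. The paper's averaging trick is replaced by the inherited constraint $\sum_j f(x_j)^2 \leq \sigma^2$. This forces $f(x_j) \to 0$ along the sequence, and hence $f = 0$ on the interior of the closure.

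What each route buys: yours is shorter and treats $\sigma = 0$ and $\sigma > 0$ in one framework, but it is non-constructive because it relies on weak compactness. The paper's route produces explicit weights and isolates a reusable density statement.

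One justification should be fixed. A point $y$ of the interior $U$ is approached by $x_j$ with $j \to \infty$, but not because the points are pairwise distinct. The real reason is this: if some ball $B(y, r) \subset U$ contained only finitely many $x_j$, the ball would lie in the closure of a finite set and would therefore be finite.
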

\begin{proof}
  See \Cref{sec:proofs-smoothness}.
\end{proof}

See \cite{SunZhou2008} for related results.
\Cref{thm:analytic} is a generalisation of \Cref{thm:regression}, a corresponding result for the Gaussian kernel.
The message of \Cref{thm:matern,thm:analytic} is that the rate of decay of the spectral density determines whether or not a Gaussian process model is liable to overconfidence resulting from defectively distributed input points.
If the spectral density does not decay faster than some polynomial (e.g., Matérns), the conditional variance cannot tend to zero everywhere unless the input points are space-filling; if the spectral density decays at least exponentially, the variance tends to zero whenever the inputs cover \emph{some} open set.

There are well-known kernels that lie on the ``boundary'' between the two cases.
An \emph{inverse multiquadric kernel}
\begin{equation} \label{eq:multiquadric}
  k(x, y) = \bigg(1 + \frac{\lVert x - y \rVert_2^2}{\ell^2} \bigg)^{-\beta/2} 
\end{equation}
is strictly positive-definite and integrable on $\R^d$ whenever $\beta > d$~\citep[Thm.\@~6.13]{Wendland2005}.
The special case $\beta = 2$ gives rise to the \emph{inverse quadratic kernel}
\begin{equation} \label{eq:inverse-quadratic}
  k(x, y) = \bigg(1 + \frac{\lVert x - y \rVert_2^2}{\ell^2} \bigg)^{-1} . 
\end{equation}
From~\eqref{eq:matern-spectral} we see that the spectral density of a Matérn is an inverse multiquadric, so by Fourier inversion the spectral density of an inverse multiquadric is a Matérn.
It is then straightforward to work out that the spectral density of an inverse multiquadric is
\begin{equation*}
  (\mathcal{F} \Phi)(\omega) = \ell^d \frac{2^{1-\beta/2}}{\Gamma(\beta/2)} (\ell \lVert \omega \rVert_2)^{(\beta-d)/2} \mathcal{K}_{(\beta-d)/2}(\ell \lVert \omega \rVert_2) .
\end{equation*}
Because the Bessel function has the asymptotics $\mathcal{K}_\nu(r) = \Theta(r^{-1/2} e^{-r})$ as $r \to \infty$ for any $\nu > 0$~\citep[Sec.\@~5.1]{Wendland2005}, the spectral density satisfies 
\begin{equation*}
  (\mathcal{F}\Phi)(\omega) = \Theta(\lVert \omega \rVert_2^{(\beta-d)/2-1/2} \exp(-\ell \lVert \omega \rVert_2)) .
\end{equation*}
The spectral density decays exponentially fast and the kernel falls under \Cref{thm:analytic}.

\subsection{Infinitely smooth non-analytic kernels} \label{sec:gevrey}

\Cref{thm:matern} covers finitely smooth kernels and \Cref{thm:analytic} real analytic kernels.
\emph{Gevrey kernels} are infinitely differentiable but non-analytic and therefore fall between the two classes.
To the best of our knowledge these kernels have not appeared before in the Gaussian process literature.
We say that a stationary kernel on $\R^d$ is a Gevrey kernel of order $\eta \in (0, 1)$ if its spectral density is
\begin{equation*}
  (\mathcal{F} \Phi_\eta)(\omega) = C_\eta \exp(- \rho \lVert \omega \rVert_2^\eta),
\end{equation*}
where the constant $C_\eta = (2\pi)^{d/2} / \int_{\R^d} \exp(- \rho \lVert \omega \rVert_2^\eta) \dif \omega$ is there to ensure that $\Phi_\eta(0) = 1$.
Note that, formally, $\eta = 1$ corresponds to an inverse multiquadric with $\beta = d + 1$ and $\eta = 2$ to the Gaussian kernel.
Because spectral densities of Gevrey kernels decay faster than any polynomial, these kernels are infinitely differentiable.
However, Gevrey kernels are not analytic.
One can verify this for example by studying how fast 
\begin{equation*}
  \Phi_\eta^{(2\alpha)}(0) = \frac{C_\eta}{(2\pi)^{d/2}} (-1)^{\lvert \alpha \rvert} \int_{\R^d} \omega^{2\alpha} \exp(-\rho \lVert \omega \rVert_2^\eta) \dif \omega
\end{equation*}
blows up as $\alpha$ increases and using the analyticity characterisation in~\eqref{eq:analytic-derivatives}.
Gevrey kernels are closely related to Gevrey spaces of infinitely differentiable but non-analytic functions~\citep{Rodino1993} that appear in the analysis of partial differential equations. 
The following theorem shows that Gevrey kernels resemble Matérns in that the input points must be space-filling if the conditional variance is to vanish.

\begin{theorem} \label{thm:gevrey}
  Consider a Gevrey kernel with $\eta \in (0, 1)$. 
  Let $\sigma \geq 0$ and $X_n = \{x_1, \ldots, x_n \} \subset \R^d$.
  If $\delta(x, X_n) = \min_{x_i \in X_n} \lVert x - x_i \rVert_2$ is sufficiently small, then
  \begin{equation*}
    \V_\sigma(x \mid \mathcal{D}_n) \geq C \exp(- c \,\delta(x, X_n)^{-\kappa}) ,
  \end{equation*} 
  where $C, c, \kappa > 0$ are constants that do not depend on $\sigma$, $x$, or $X_n$.
\end{theorem}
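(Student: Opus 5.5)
The plan is to use the variational characterisation of the conditional variance via bump functions, mirroring what I expect is done for \Cref{thm:matern}. Recall that $\V_\sigma(x \mid \mathcal{D}_n) \geq \V(x \mid \mathcal{D}_n)$, because adding noise only increases the variance. So it suffices to treat $\sigma = 0$, and the constants will then automatically be independent of $\sigma$. In the interpolatory case, $\V(x \mid \mathcal{D}_n)^{1/2}$ equals the worst-case error
\[
\sup\bigl\{ \lvert f(x) \rvert : \lVert f \rVert_{H_k} \leq 1,\ f(x_i) = 0 \text{ for all } i \bigr\},
\]
the supremum being over the unit ball of the RKHS $H_k$. Hence for any $f \in H_k$ vanishing on $X_n$ we have $\V(x \mid \mathcal{D}_n) \geq f(x)^2 / \lVert f \rVert_{H_k}^2$. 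Write $\delta = \delta(x, X_n)$ and take
\[
f = \psi\bigl((\cdot - x)/\delta\bigr),
\]
where $\psi$ is a fixed compactly supported smooth function with $\psi(0) = 1$ and support in the open unit ball. This $f$ vanishes at every $x_i$ and has $f(x) = 1$. So everything reduces to proving
\[
\lVert f \rVert_{H_k}^2 \leq C^{-1} \exp(c\, \delta^{-\kappa}).
\]

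For a stationary kernel, $\lVert f \rVert_{H_k}^2 = \int \lvert \mathcal{F} f(\omega) \rvert^2 / (\mathcal{F}\Phi_\eta)(\omega) \, \dif \omega$ up to a constant. Substituting the Gevrey density gives
\[
\lVert f \rVert_{H_k}^2 \propto \int \lvert \mathcal{F} f \rvert^2 \exp(\rho \lVert \omega \rVert_2^\eta) \, \dif \omega .
\]
Scaling gives $\mathcal{F} f(\omega) = \delta^d (\mathcal{F}\psi)(\delta \omega)$, and the substitution $\xi = \delta \omega$ yields
\[
\lVert f \rVert_{H_k}^2 \propto \delta^d \int \lvert \mathcal{F}\psi(\xi) \rvert^2 \exp\bigl(\rho\, \delta^{-\eta} \lVert \xi \rVert_2^\eta\bigr) \, \dif \xi .
\]
The crux is to choose $\psi$ with Fourier transform decaying faster than $\exp(-b \lVert \xi \rVert_2^{\eta'})$ for some $\eta' \in (\eta, 1)$. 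Compactly supported Gevrey functions of class $s = 1/\eta' > 1$ have exactly this property; a standard example is $\exp(-\lVert x \rVert_2^{-p})$-type bumps built from $\exp(-1/(1 - \lVert x \rVert_2^2)^{p})$ with $p$ chosen so that $1/\eta' = 1 + 1/p$. This is the main obstacle. I would either cite the Gevrey class characterisation (e.g.\ \citet{Rodino1993}) or prove the Fourier bound directly. For the direct proof, bound $\lvert \xi^\alpha \mathcal{F}\psi(\xi) \rvert \leq \lVert \mathrm{D}^\alpha \psi \rVert_{L^1} \leq A^{\lvert \alpha \rvert + 1} (\alpha!)^{s}$ and optimise over $\lvert \alpha \rvert$ to get $\lvert \mathcal{F}\psi(\xi) \rvert \leq B \exp(-b \lVert \xi \rVert_2^{1/s})$.

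Given this bound, the integrand is at most $B^2 \exp\bigl(\rho \delta^{-\eta} r^\eta - 2 b r^{\eta'}\bigr)$, where $r = \lVert \xi \rVert_2$. Maximise the exponent over $r$. Since $\eta' > \eta$, the maximiser is $r_* \asymp \delta^{-\eta/(\eta' - \eta)}$, where the exponent equals a constant times $\delta^{-\eta \eta'/(\eta' - \eta)}$. Split the integral into the region $r \leq 2 r_*$, bounded by volume times the maximum, and the tail, where $-2b r^{\eta'}$ dominates and the contribution is bounded by a constant. This gives
\[
\lVert f \rVert_{H_k}^2 \leq C' \delta^{d} (1 + r_*^d) \exp\bigl(c'\, \delta^{-\eta \eta'/(\eta' - \eta)}\bigr).
\]
For $\delta$ sufficiently small, the polynomial prefactor is absorbed by slightly enlarging $c'$. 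This yields the claim with $\kappa = \eta \eta'/(\eta' - \eta)$ and constants depending only on $\eta$, $\rho$, $d$ and $\psi$.
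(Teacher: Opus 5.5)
Your proposal is correct and follows the paper's proof essentially step by step. You reduce to $\sigma = 0$, take a Gevrey bump of class $1/\eta'$ with $\eta' \in (\eta,1)$ (the paper's $\tau$, whose existence it cites from \citet{Rodino1993}), rescale it to radius $\delta$, and bound the RKHS norm by maximising $\rho\delta^{-\eta} r^{\eta} - 2b r^{\eta'}$, which gives the same $\kappa = \eta\eta'/(\eta'-\eta)$. There is one small slip in the splitting step: at $r = 2r_*$ the exponent is still positive and of order $\delta^{-\kappa}$, so the tail beyond $2r_*$ is not bounded by a constant. Split instead at $R(\delta) = (\rho\delta^{-\eta}/b)^{1/(\eta'-\eta)} = (2\eta'/\eta)^{1/(\eta'-\eta)} r_*$, where the exponent is at most $-b r^{\eta'}$, as the paper does; since $R(\delta) \asymp r_*$, your final bound is unchanged.
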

\begin{proof}
  See \Cref{sec:proofs-smoothness}.
\end{proof}

Unfortunately, not many Gevrey kernels are available in closed form. 
A few somewhat tractable special cases exist because it turns out that Gevrey kernels correspond to density functions of stable distributions~\citep{Zolotarev1986, GaroniFrankel2002}.
Let $d = 1$.
The simplest special case is $\eta = 2/3$.
The resulting \emph{Whittaker kernel} is defined by
\begin{equation*}
  \Phi_{\eta=2/3}(r) = \frac{2}{3\sqrt{3} \, \gamma \lvert r \rvert} \exp\bigg(\frac{2}{27 \gamma^2 r^2} \bigg) \mathcal{W}_{-1/2, 1/6} \bigg(\frac{4}{27 \gamma^2 r^2} \bigg) .
\end{equation*}
Here $\gamma = \rho^{-1/\eta}$ and $\mathcal{W}_{\kappa, \mu}$ is the Whittaker function, which has the expression
\begin{equation*}
  \mathcal{W}_{\kappa, \mu}(z) = \frac{z^\kappa e^{-z/2}}{\Gamma(1/2 - \kappa + \mu)} \int_0^\infty t^{-\kappa-1/2+\mu} \bigg(1 + \frac{t}{z} \bigg)^{\kappa-1/2+\mu} e^{-t} \dif t
\end{equation*}
when $\kappa-1/2-\mu \in \C$ is not an integer and has non-positive real part.
\Cref{fig:whittaker} compares the conditional variance for the Whittaker kernel to a few popular kernels.

\begin{center}
  \begin{figure}[t]
    \includegraphics[width=\textwidth]{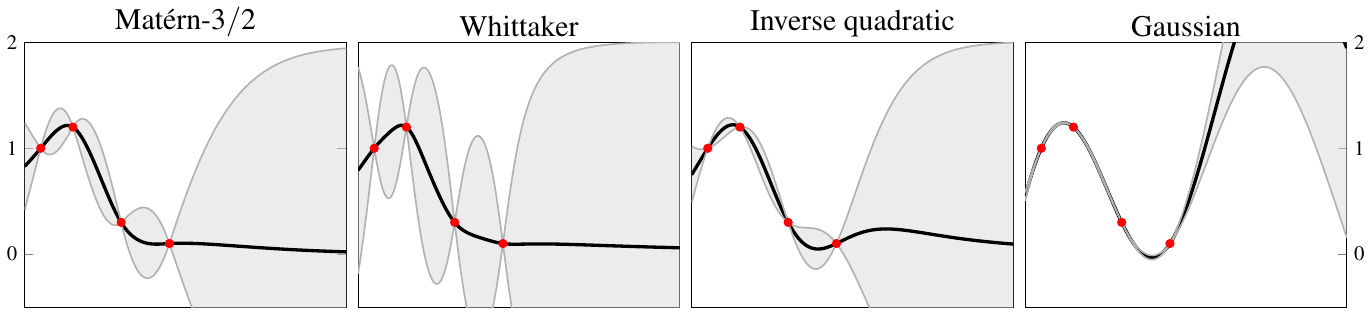}    
    \caption{The conditional mean (black) and the pointwise 95\% credible intervals (shaded) obtained from the conditional variance over the interval $[0, 1]$ for the Matérn-$3/2$, Whittaker, inverse quadratic, and Gaussian kernels with $\ell = 0.3$. The data consist of four observations located on the first half of the interval that are indicated in red. Despite being infinitely differentiable, the Whittaker and inverse quadratic kernels yield much larger variances than the Gaussian on the second half of the interval that contains no input points.}
    \label{fig:whittaker}
  \end{figure}
\end{center}

\section{Apology of the Gaussian kernel} \label{sec:apology}

Finally, it seems fitting to review some of the reasons, \emph{post hoc} or not, that have made the Gaussian kernel popular, or at the very least may have played a part in keeping it popular.

\subsection{The Gaussian kernel is Gaussian}

No doubt the main cause for the popularity of the Gaussian kernel is the fact that it is nothing but the Gaussian function that every student is familiar with.
Its algebraic form is simple and convenient.
No Bessel functions or other objects more advanced than those encountered on basic calculus courses are required for implementation and superficial understanding.
Moreover, because the Gaussian function is infinitely differentiable (and differentiating it is relatively easy) and decays very fast, one never has to worry about differentiability, integrability, or other such pesky technicalities.
As its derivatives are straightforward to compute, the Gaussian kernel can be easily applied to modelling with derivative data.
That many integrals involving the Gaussian kernel can be computed explicitly makes it a convenient kernel to use when working with Hilbert space embeddings of distributions~\citep{Muandet2017}.

\subsection{The only factorisable isotropic kernel} \label{sec:factorisable}

A kernel on $\R^d$ is \emph{isotropic} if there is a function $\phi \colon [0, \infty) \to \R$ such that $k(x, y) = \phi(\lVert x - y \rVert_2)$ for all $x, y \in \R^d$.
That is, an isotropic kernel only depends on the distance between the inputs.
Often one has to decide whether to use an isotropic kernel or a \emph{product-isotropic} kernel
\begin{equation*}
  k(x, y) = k_1(x_1, y_1) \times \cdots \times k_d(x_d, y_d) = \phi_1(\lvert x_1 - y_1 \rvert) \times \cdots \times \phi_d(\lvert x_d - y_d \rvert),
\end{equation*}
where $k_i(x_i, y_i) = \phi_i(\lvert x_i - y_i \rvert)$ are isotropic kernels on $\R$.
When using the Gaussian kernel one does not have to decide, for it factorises as a product of its one-dimensional versions:
\begin{equation*}
  k(x, y) = \exp\bigg( \! - \frac{ \norm[0]{x - y}_2^2}{2 \ell^2} \bigg) = \exp\bigg( \! - \frac{ (x_1-y_1)^2}{2 \ell^2} \bigg) \times \cdots \times \exp\bigg( \! - \frac{ (x_d-y_d)^2}{2 \ell^2} \bigg) .
\end{equation*}
It would be trivial to include and learn a different lengthscale for each dimension, which is sometimes referred to as automatic relevance determination.
This property is unique to the Gaussian kernel as under very mild assumptions there are no other isotropic kernels that are coordinate-wise products of their one-dimensional versions~\citep[pp.\@~55--56]{Stein1999}.\footnote{This follows from a characterisation of the exponential function as the only function $f \colon \R \to \R$ satisfying mild regularity properties and $f(1) = e$ that has the multiplicative property $f(x + y) = f(x) f(y)$; see Exercise~18.46 in \citet{Hewitt1965}.} 

\subsection{Simple series expansion}

Access to a series expansion of the form $k(x, y) = \sum_{p=0}^\infty \psi_p(x) \psi_p(y)$ is useful, be it to construct low-rank approximations by truncating the expansion~\citep{SolinSarkka2020} or for the purposes of theory~\citep{Minh2010}.
The Gaussian kernel has the simplest series expansion among commonly used stationary kernels, trivially derived from the Taylor expansion of the exponential function (here we set $d=1$ for simplicity):
\begin{equation} \label{eq:gaussian-expansion}
  \begin{split}
    k(x, y) = \exp\bigg( \! - \frac{ (x - y)^2}{2 \ell^2} \bigg) &= e^{-x^2/(2\ell^2)} e^{x y / \ell^2} e^{-y^2/(2\ell^2)} \\
    &= \sum_{p=0}^\infty \bigg( \frac{x^p}{\ell^p \sqrt{ p!}} \, e^{-x^2/(2\ell^2)} \bigg) \bigg( \frac{y^p}{\ell^p \sqrt{ p!}} \, e^{-y^2/(2\ell^2)} \bigg) .
  \end{split}
\end{equation}
The Gaussian also has a known Mercer expansion with respect to the normal distribution.
Though the Mercer expansion is more complicated than the expansion in~\eqref{eq:gaussian-expansion}, it only requires Hermite polynomials and exponentials.
Series expansions, Mercer or not, for other stationary kernels tend to be much more involved; examples can be found in \cite{TronarpKarvonen2024}.

\subsection{Limiting case in the Matérn class} \label{sec:matern-gaussian-limit}

The parameter $\nu > 0$ of the Matérn class
\begin{equation*}
  k_\nu(x, y) = \frac{2^{1-\nu}}{\Gamma(\nu)} \bigg( \frac{\sqrt{2\nu} \norm[0]{x - y}_2}{\ell} \bigg)^\nu \mathcal{K}_\nu \bigg( \frac{\sqrt{2\nu} \norm[0]{ x - y }_2}{\ell} \bigg)
\end{equation*}
controls the differentiability of the kernel and the corresponding Gaussian process.
It is well known that $k_\nu$ tends pointwise to the Gaussian kernel as $\nu \to \infty$~\citep[pp.\@~49--50]{Stein1999}.\footnote{There are other natural classes of kernels that converge to a Gaussian as their smoothness increases~\citep{Chernih2014}.}
It tends to be difficult to estimate differentiability from finite data.
In using the Gaussian kernel one in essence removes $\nu$, a difficult-to-specify parameter, from the Matérn model.
As discussed in \Cref{sec:factorisable}, this also removes the need to decide between an isotropic and product-isotropic kernel.
Because Matérns encode a full scale of Sobolev regularity assumptions, the Gaussian can be thought of as an ``infinitely smooth Sobolev prior'' (this statement is not fully devoid of mathematical meaning; see \citealp{NovakUllrich2018}).

\subsection{Member of the $\gamma$-exponential class}

Let $\gamma \in (0, 2]$.
Then the \emph{$\gamma$-exponential} kernel
\begin{equation} \label{eq:gamma-exponential}
  k_\gamma(x, y) = \exp\bigg(\! -\frac{\lVert x - y \rVert_2^\gamma}{\gamma \, \ell^\gamma }\bigg)
\end{equation}
is positive-definite.
These kernels were popularised by \citet{Sacks1989} and often appear in emulation literature.\footnote{We also highlight an observation by \citet[p.\@~413]{Sacks1989}: ``The case $p = 2$ gives a process with infinitely differentiable paths (mean square sense) and is useful when the response is analytic.''}
With parameters $\gamma = 2$ and $\gamma = 1$ we get the Gaussian and the Matérn-$1/2$ kernels $k_2(x, y) = \exp(-\lVert x - y \rVert_2^2/(2\ell^2))$ and $k_1(x, y) = \exp(-\lVert x - y \rVert_2 / \ell)$.
The $\gamma$-exponential therefore provides another ``natural'' connection between Gaussians and finitely smooth kernels.
However, when one looks any deeper it quickly becomes apparent that the connection is far from natural: For every $\gamma \in (0, 2)$ the $\gamma$-exponential is at most once differentiable with respect to both arguments, yet becomes infinitely differentiable when $\gamma = 2$.
That is, the class of kernels defined by~\eqref{eq:gamma-exponential} is continuous in $\gamma$ when it comes to the functional form of the kernel, but not when it comes to the properties of the kernel.
The only advantage the $\gamma$-exponential class enjoys over Matérns in~\eqref{eq:matern} is its simplicity (no Bessel functions).

\subsection{Universality}

One factor that may have led people to use the Gaussian kernel is that it is \emph{universal}. 
A kernel on a compact $D \subset \R^d$ is universal if its reproducing kernel Hilbert space (see \Cref{sec:rkhs}) is dense in $C(D)$, the space of continuous functions on $D$.
A universal kernel can therefore be used to approximate any continuous function, and this is a desirable property when no strong structural information is available on the unknown function being modelled.
However, universality has little use as a kernel decision criterion among kernels in everyday usage because every commonly used kernel is universal; for example, all stationary kernels that appear in this article are universal~\citep[Thm.\@~17]{Micchelli2006}.

\subsection{When to use the Gaussian}

The central message of this article is that the Gaussian kernel is not the kernel to default to.
It should be used only if there are compelling reasons to believe that the latent phenomenon one models is extremely smooth. 
And even then it may be safer to use other infinitely smooth kernels, such as inverse multiquadrics or Gevreys.
The Gaussian kernel is far less dangerous in regression ($\sigma > 0$) than in interpolation ($\sigma = 0$), where it can induce catastrophic failure of both numerics and uncertainty quantification.
Nevertheless, even in regression one should never intentionally use a model that is known to be further from the truth than almost any other standard model.
It should be noted that we have focussed on Gaussian processes, a perspective we admit is quite narrow.
There are related tasks in which the Gaussian kernel is known to shine.
For example, the support vector machine based on the Gaussian kernel is rate-adaptive over Sobolev spaces: It achieves (up to a sub-polynomial term) the minimax optimal learning rate $n^{-2s/(2s+d)}$ in a Sobolev space of order $s$ on a subset of $\R^d$ if the lengthscale is selected in an appropriate data-driven way~\citep{EbertsSteinwart2013}.

\section{What about kernel learning?} \label{sec:kernel-learning}

We have but touched upon kernel learning, the data-driven choice of the kernel.
Tuning a few parameters of a standard parametric family of kernels (e.g., the lengthscale as discussed in Remark~\ref{remark:parameter-estimation}) is often too inflexible.
Composing functions and kernels~\citep{lloyd2014automatic, Wilson2016} and learning a kernel in the spectral domain~\citep{lazaro2010sparse, remes2017non} are the two main approaches to flexible kernel learning.
Our results as such do not apply to kernel learning.
However, our work opens an interesting research direction to inform the design of kernel learning architectures.
Let us consider deep kernel learning~\citep{Wilson2016} as a concrete example.
In deep kernel learning, flexibility is achieved by using a composite kernel
\begin{equation*}
  k(x, y) = k_0( g(x, w), g(y, w) ),
\end{equation*}
where $k_0$ is a potentially parametric base kernel and $g$ a neural network with weights or other parameters $w$.
The base kernel $k_0$ is often Gaussian, and data are used to learn the weights and other parameters.
As compositions of analytic functions are analytic, our work, when combined with the well-understood properties of composite kernels~\citep[Ch.\@~5]{Paulsen2016}, suggests that it may be best to avoid analytic activation functions (e.g., the swish function) in the deep kernel learning framework of \citet{Wilson2016}.

\section{Proofs} \label{sec:proofs}

This section contains the proofs of the theorems appearing in this article.

\subsection{Reproducing kernel Hilbert spaces} \label{sec:rkhs}

Every positive-semidefinite kernel $k \colon D \times D \to \R$ on a set $D$ induces a unique \emph{reproducing kernel Hilbert space} (RKHS).
This Hilbert space, denoted $H(k)$, consists of certain functions $f \colon D \to \R$ and is characterised by $k(\cdot, x) \in H(k)$ for every $x \in D$ and the important \emph{reproducing property}:
\begin{equation} \label{eq:reproducing-property}
  f(x) = \langle f, k(\cdot, x) \rangle_{H(k)} \quad \text{ for all } \quad f \in H(k) \: \text{ and } \: x \in D.
\end{equation}
We refer to \cite{Berlinet2004} and \cite{Paulsen2016} for the theory of RKHSs.
From the functional form of the kernel alone it tends to be difficult to deduce what $H(k)$ and its inner product look like.
Fortunately, if the kernel is stationary on $\R^d$, meaning that $k(x, y) = \Phi(x - y)$ for $\Phi \colon \R^d \to \R$, and $\Phi$ is continuous and integrable, the RKHS admits the convenient spectral characterisation
\begin{equation} \label{eq:rkhs-spectral}
  H(k) = \Set[\bigg]{f \in L^2(\R^d)}{ \lVert f \rVert_{H(k)}^2 = \frac{1}{(2\pi)^{d/2}} \int_{\R^d} \frac{\lvert (\mathcal{F}f)(\omega) \rvert^2}{(\mathcal{F} \Phi)(\omega)} \dif \omega < \infty}
\end{equation}
in terms of the spectral density, $\mathcal{F} \Phi$ (recall \Cref{sec:smoothness-matern}).
See Theorem~10.12 in \citet{Wendland2005} or \citet{KimeldorfWahba1970} for the spectral characterisation.
The faster the spectral density tends to zero (i.e., the smoother the kernel), the smaller the RKHS.

The proofs are based on the following equivalence between the conditional variance and worst-case error. 
More such equivalences can be found in Chapters~5 and~6 of \cite{Kanagawa2018}.
We write $k^x = k(\cdot, x)$ for a kernel translate.

\begin{lemma} \label{lemma:worst-case}
  Let $\sigma \geq 0$.
  Then
  \begin{equation} \label{eq:worst-case}
    \V_\sigma(x \mid \mathcal{D}_n) = \min_{\b{u} \in \R^n} \bigg\{ \bigg\lVert k^x - \sum_{i=1}^n u_i k^{x_i} \bigg\rVert_{H(k)}^2 + \sigma^2 \norm[0]{ \b{u} }_2^2 \bigg\}
  \end{equation}
  for every $x \in D$.
  Moreover, $\V_\sigma(x \mid \mathcal{D}_{n+1}) \leq \V_\sigma(x \mid \mathcal{D}_n)$ for every $n \geq 1$ and $x \in D$.
\end{lemma}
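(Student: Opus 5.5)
The plan is to minimise a quadratic in $\b{u}$ explicitly, after using the reproducing property to rewrite the RKHS norm in terms of kernel matrices. Then I will show that the minimum value equals~\eqref{eq:gp-regression-var}. Monotonicity will then follow directly from the variational form.

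First I expand the objective. By the reproducing property~\eqref{eq:reproducing-property}, $\langle k^{x_i}, k^{x_j} \rangle_{H(k)} = k(x_i, x_j)$ and $\langle k^x, k^{x_i}\rangle_{H(k)} = k(x, x_i)$. Hence
\begin{equation*}
  J(\b{u}) := \bigg\lVert k^x - \sum_{i=1}^n u_i k^{x_i} \bigg\rVert_{H(k)}^2 + \sigma^2 \lVert \b{u} \rVert_2^2 = k(x,x) - 2 \b{u}^\T \b{k}_n(x) + \b{u}^\T (\b{K}_n + \sigma^2 \b{I}_n) \b{u}.
\end{equation*}
Since the points are pairwise distinct and $k$ is strictly positive-definite, $\b{K}_n + \sigma^2 \b{I}_n$ is positive-definite for every $\sigma \geq 0$. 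So $J$ is a strictly convex quadratic with unique minimiser $\b{u}^* = (\b{K}_n + \sigma^2 \b{I}_n)^{-1} \b{k}_n(x)$. Substituting $\b{u}^*$ back (equivalently, completing the square) gives $J(\b{u}^*) = k(x,x) - \b{k}_n(x)^\T (\b{K}_n + \sigma^2 \b{I}_n)^{-1} \b{k}_n(x)$. This is exactly $\V_\sigma(x \mid \mathcal{D}_n)$ in~\eqref{eq:gp-regression-var}, and for $\sigma = 0$ it reduces to~\eqref{eq:gp-interpolation}.

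For monotonicity, let $J_n$ denote the objective for $n$ points. Take any $\b{u} \in \R^n$ and pad it to $(\b{u}, 0) \in \R^{n+1}$. The padded vector yields the same function $\sum_i u_i k^{x_i}$ and the same Euclidean norm, so $J_{n+1}((\b{u},0)) = J_n(\b{u})$. Therefore
\begin{equation*}
  \V_\sigma(x \mid \mathcal{D}_{n+1}) = \min_{\b{v}\in\R^{n+1}} J_{n+1}(\b{v}) \leq \min_{\b{u}\in\R^n} J_n(\b{u}) = \V_\sigma(x \mid \mathcal{D}_n).
\end{equation*}

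There is no real obstacle. The only point needing care is invertibility of $\b{K}_n + \sigma^2\b{I}_n$ in the case $\sigma = 0$, which the distinctness of the inputs and strict positive-definiteness~\eqref{eq:pd-kernel} guarantee.
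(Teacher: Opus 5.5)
Your proof is correct and follows the paper's argument essentially step for step: you expand the norm via the reproducing property into the quadratic $k(x,x) - 2\b{u}^\T\b{k}_n(x) + \b{u}^\T(\b{K}_n + \sigma^2\b{I}_n)\b{u}$ and minimise it at $\b{u}^* = (\b{K}_n + \sigma^2\b{I}_n)^{-1}\b{k}_n(x)$, and you obtain monotonicity from $J_{n+1}((\b{u},0)) = J_n(\b{u})$. You also justify invertibility of $\b{K}_n$ when $\sigma = 0$ from the distinctness of the inputs and strict positive-definiteness, which the paper leaves tacit.
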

\begin{proof}
  By the reproducing property~\eqref{eq:reproducing-property}, $\langle k^x, k^y \rangle_{H(k)} = k(x, y)$, so that expanding the square in~\eqref{eq:worst-case} gives the quadratic objective
  \begin{equation*}
    J_n(\b{u}) = k(x, x) - 2 \b{u}^\T \b{k}_n(x) + \b{u}^\T \b{K}_n \b{u} + \sigma^2 \lVert \b{u} \rVert_2^2 = k(x,x) - 2 \b{u}^\T \b{k}_n(x) + \b{u}^\T ( \b{K}_n + \sigma^2 \b{I}_n) \b{u} .
  \end{equation*}
  Because $\b{K}_n + \sigma^2 \b{I}_n$ is a positive-definite matrix, $J_n$ is strictly convex and its unique minimiser is $\b{u}_* = ( \b{K}_n + \sigma^2 \b{I}_n)^{-1} \b{k}_n(x)$, for which
  \begin{equation*}
    J_n(\b{u}_*) = k(x, x) - \b{k}_n(x)^\T ( \b{K}_n + \sigma^2 \b{I}_n)^{-1} \b{k}_n(x) = \V_\sigma(x \mid \mathcal{D}_n)
  \end{equation*}
  by~\eqref{eq:gp-regression-var}.
  Monotonicity in $n$ follows from $J_n(\b{u}) = J_{n+1}((\b{u},0))$.
\end{proof}

The reproducing property can be utilised to rewrite the variance as a worst-case error.

\begin{lemma} \label{lemma:worst-case-2}
  Let $\sigma \geq 0$.
  Then
\begin{equation} \label{eq:worst-case-2}
    \V_\sigma(x \mid \mathcal{D}_n) = \min_{\b{u} \in \R^n} \bigg\{ \sup_{ \norm[0]{f}_{H(k)} \leq 1} \, \bigg\lvert f(x) - \sum_{i=1}^n f(x_i) u_i \bigg\rvert^2 + \sigma^2 \lVert \b{u} \rVert_2^2 \bigg\}.
\end{equation}
\end{lemma}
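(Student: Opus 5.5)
The identity follows from Lemma~\ref{lemma:worst-case}: for each fixed $\b{u}$, the squared RKHS norm is the worst-case pointwise error over the unit ball of $H(k)$. Once that pointwise identity is in hand, adding $\sigma^2 \lVert \b{u} \rVert_2^2$ and minimising over $\b{u}$ on both sides gives~\eqref{eq:worst-case-2}. Since the minimum in~\eqref{eq:worst-case} is attained at $\b{u}_*$, the minimum in~\eqref{eq:worst-case-2} is attained at the same point, so writing $\min$ is justified.

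Fix $\b{u} \in \R^n$ and set $g_{\b{u}} = k^x - \sum_{i=1}^n u_i k^{x_i} \in H(k)$. For any $f \in H(k)$, the reproducing property~\eqref{eq:reproducing-property} and linearity of the inner product give
\begin{equation*}
  f(x) - \sum_{i=1}^n u_i f(x_i) = \langle f, k^x \rangle_{H(k)} - \sum_{i=1}^n u_i \langle f, k^{x_i} \rangle_{H(k)} = \langle f, g_{\b{u}} \rangle_{H(k)} .
\end{equation*}
We therefore need $\sup_{\lVert f \rVert_{H(k)} \leq 1} \lvert \langle f, g_{\b{u}} \rangle_{H(k)} \rvert^2 = \lVert g_{\b{u}} \rVert_{H(k)}^2$, which is just the dual characterisation of the norm in a Hilbert space. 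The upper bound $\lvert \langle f, g_{\b{u}} \rangle \rvert \leq \lVert f \rVert \lVert g_{\b{u}} \rVert \leq \lVert g_{\b{u}} \rVert$ is Cauchy--Schwarz. For the lower bound, take $f = g_{\b{u}} / \lVert g_{\b{u}} \rVert_{H(k)}$ when $g_{\b{u}} \neq 0$; when $g_{\b{u}} = 0$ both sides are zero. Hence the supremum is attained and equals $\lVert g_{\b{u}} \rVert_{H(k)}^2$.

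Substituting this into~\eqref{eq:worst-case} term by term completes the argument. There is no genuine obstacle here. The only care needed is to record that the identity holds for every fixed $\b{u}$ before minimising, so that the two objective functions coincide on all of $\R^n$ and therefore share their minimum value and minimiser.
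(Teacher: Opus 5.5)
Your proof is correct and is the paper's own argument: for each fixed $\b{u}$, the reproducing property plus Cauchy--Schwarz gives one inequality, the normalised function $g_{\b{u}}/\lVert g_{\b{u}} \rVert_{H(k)}$ attains it, and the identity then passes to the minimum via Lemma~\ref{lemma:worst-case}. Your explicit treatment of the degenerate case $g_{\b{u}} = 0$ and of why the minimum is attained only tidies up details that the paper leaves implicit.
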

\begin{proof}
  That the right-hand side in~\eqref{eq:worst-case-2} does not exceed that in~\eqref{eq:worst-case} follows directly from the reproducing property and the Cauchy--Schwarz inequality.
  That we obtain an equality is verified by considering the function $f = (k^x - \sum_{i=1}^n u_i k^{x_i})/\lVert k^x - \sum_{i=1}^n u_i k^{x_i} \rVert_{H(k)}$, which has unit norm in $H(k)$.
  For details, see for example Sections~5.4 and~6.6 in \cite{Kanagawa2018}. 
\end{proof}

The following results are also useful.

\begin{lemma} \label{lemma:sigma-increasing}
  If $\sigma \geq \varsigma \geq 0$, then $\V_\sigma(x \mid \mathcal{D}_n) \geq \V_\varsigma(x \mid \mathcal{D}_n)$ for every $x \in D$.
\end{lemma}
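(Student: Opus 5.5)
The plan is to use the variational characterisation in \Cref{lemma:worst-case}. For each fixed $\b{u} \in \R^n$, let $J_\sigma(\b{u})$ denote the objective in~\eqref{eq:worst-case}:
\begin{equation*}
  J_\sigma(\b{u}) = \bigg\lVert k^x - \sum_{i=1}^n u_i k^{x_i} \bigg\rVert_{H(k)}^2 + \sigma^2 \lVert \b{u} \rVert_2^2 .
\end{equation*}
The first term does not depend on the noise level. The second term satisfies $\sigma^2 \lVert \b{u} \rVert_2^2 \geq \varsigma^2 \lVert \b{u} \rVert_2^2$ because $\sigma \geq \varsigma \geq 0$. Hence $J_\sigma(\b{u}) \geq J_\varsigma(\b{u})$ pointwise in $\b{u}$.

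Taking the minimum over $\b{u}$ preserves this pointwise inequality. If $\b{u}_\sigma$ attains the minimum for $J_\sigma$, which it does since the minimiser exists by \Cref{lemma:worst-case}, then
\begin{equation*}
  \V_\sigma(x \mid \mathcal{D}_n) = J_\sigma(\b{u}_\sigma) \geq J_\varsigma(\b{u}_\sigma) \geq \min_{\b{u}} J_\varsigma(\b{u}) = \V_\varsigma(x \mid \mathcal{D}_n).
\end{equation*}

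The only point needing care is the case $\varsigma = 0$. There \Cref{lemma:worst-case} still applies, because $\b{K}_n$ is strictly positive-definite when the inputs are pairwise distinct, so the minimum is attained. There is therefore no real obstacle.

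As a sanity check, one could verify the same conclusion directly from~\eqref{eq:gp-regression-var}. The map $\sigma^2 \mapsto (\b{K}_n + \sigma^2 \b{I}_n)^{-1}$ is decreasing in the Loewner order, so $\b{k}_n(x)^\T (\b{K}_n + \sigma^2 \b{I}_n)^{-1} \b{k}_n(x)$ is nonincreasing in $\sigma$. The variational argument is cleaner, however, and it is the one I would write.
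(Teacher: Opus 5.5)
Your proof is correct. It reaches the result by a different route from the paper, although the underlying fact is the same.

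The paper works directly with the closed form~\eqref{eq:gp-regression-var}. Since $\b{K}_n + \varsigma^2 \b{I}_n \leq \b{K}_n + \sigma^2 \b{I}_n$ in the Loewner order, and inversion reverses that order on positive-definite matrices, the quadratic form $\b{k}_n(x)^\T (\b{K}_n + \sigma^2 \b{I}_n)^{-1} \b{k}_n(x)$ is non-increasing in $\sigma$. That is exactly the argument you list as your ``sanity check''.

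Your main argument instead compares the objectives of \Cref{lemma:worst-case} pointwise in $\b{u}$ and then minimises. The identity $\b{b}^\T \b{A}^{-1} \b{b} = \max_{\b{u}} \{ 2 \b{u}^\T \b{b} - \b{u}^\T \b{A} \b{u} \}$ behind \Cref{lemma:worst-case} is one standard proof that inversion is order-reversing. So your version is self-contained, while the paper's is a one-liner that takes the matrix fact as known.

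One small simplification: you need neither attainment of the minimum nor the separate discussion of $\varsigma = 0$. The pointwise inequality $J_\sigma \geq J_\varsigma$ already gives $\inf J_\sigma \geq \inf J_\varsigma$. \Cref{lemma:worst-case} identifies these infima with the variances for every $\sigma \geq 0$.
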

\begin{proof}
    Because $\b{K}_n + \varsigma^2 \b{I}_n \leq \b{K}_n + \sigma^2 \b{I}_n$ if $\sigma \geq \varsigma$ in the Loewner ordering of positive-semidefinite matrices, the claim follows from the expression for the variance in~\eqref{eq:gp-regression-var}.
\end{proof}

In the next proposition, which appears as Corollary~4.36 in \cite{Steinwart2008}, we use the notation
\begin{equation*}
  k^{\alpha,\beta}(x, y) = \mathrm{D}_v^\alpha \mathrm{D}_w^\beta k(v,w) \bigl|_{\substack{v = x \\ w = y}}
\end{equation*}
for kernel derivatives. Here the subscripts denote the variable with respect to which the kernel is being differentiated.

\begin{proposition} \label{prop:derivative-repro}
  If $k$ is a positive-semidefinite kernel on an open set $D \subset \R^d$ such that $k^{\alpha,\alpha}$ exists and is continuous for every $\alpha \in \N_0^d$ such that $\lvert \alpha \rvert \leq m$, then every $f \in H(k)$ is $m$ times continuously differentiable and
  \begin{equation} \label{eq:derivative-repro} 
    \mathrm{D}^\alpha f(x) = \langle f, \mathrm{D}_x^\alpha k(\cdot, x) \rangle_{H(k)} 
  \end{equation}
  for any $\alpha \in \N_0^d$ such that $\lvert \alpha \rvert \leq m$ and $x \in D$.
\end{proposition}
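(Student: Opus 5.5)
The argument is an induction on $m$ that works directly with the RKHS structure. Let $g_{\alpha,x} = \mathrm{D}_x^\alpha k(\cdot, x)$ denote the derivative of the kernel translate $k^x = k(\cdot,x)$ with respect to its second argument. I will show three things for each $|\alpha| \le m$:
\begin{itemize}
\item $g_{\alpha,x} \in H(k)$;
\item the difference quotients defining $g_{\alpha,x}$ converge to it in the norm of $H(k)$;
\item $\langle g_{\alpha,x}, g_{\beta,y}\rangle_{H(k)} = k^{\alpha,\beta}(x,y)$.
\end{itemize}
Once these hold, \eqref{eq:derivative-repro} follows immediately. Norm convergence in $H(k)$ implies convergence of inner products with $f$, and by the reproducing property \eqref{eq:reproducing-property} those inner products are exactly the corresponding difference quotients of $f$ (or of $\mathrm{D}^{\alpha-e_j} f$).

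\textbf{Base case and Cauchy argument.} The case $|\alpha|=0$ is the reproducing property itself. For the inductive step, fix $x$ and a coordinate direction $e_j$, and assume the claims hold for $\alpha$. Consider the difference quotients
\[
h_t = t^{-1}\bigl(g_{\alpha,x+te_j} - g_{\alpha,x}\bigr).
\]
By the inductive inner-product formula, $\lVert h_t - h_s\rVert_{H(k)}^2$ is a combination of values $k^{\alpha,\alpha}$ at the four pairs of points built from $x$, $x+te_j$ and $x+se_j$. Expanding gives a second-order mixed difference quotient of $k^{\alpha,\alpha}$ in the $j$th coordinate of both arguments. Since $k^{\alpha+e_j,\alpha+e_j}$ exists and is continuous, the mean value theorem applied twice shows this expression converges to
\[
k^{\alpha+e_j,\alpha+e_j}(x,x) - 2k^{\alpha+e_j,\alpha+e_j}(x,x) + k^{\alpha+e_j,\alpha+e_j}(x,x) = 0
\]
as $s,t\to 0$. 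Hence $(h_t)$ is Cauchy and has a limit $g$ in $H(k)$.

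\textbf{Identifying the limit.} Convergence in $H(k)$ implies pointwise convergence, because evaluation is continuous by Cauchy--Schwarz. Therefore $g(y) = \lim_t h_t(y)$, which is exactly $\mathrm{D}^{\alpha+e_j}_x k(y,x)$. So $g = g_{\alpha+e_j,x}$, and the inner-product formula for the new derivatives follows by passing to the limit in the old one.

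\textbf{Continuity of $\mathrm{D}^\alpha f$.} The bound
\[
\lvert \mathrm{D}^\alpha f(x) - \mathrm{D}^\alpha f(y)\rvert \le \lVert f\rVert_{H(k)}\,\lVert g_{\alpha,x}-g_{\alpha,y}\rVert_{H(k)}
\]
reduces this to continuity of the map $x\mapsto g_{\alpha,x}$ in $H(k)$. That continuity follows because
\[
\lVert g_{\alpha,x}-g_{\alpha,y}\rVert_{H(k)}^2 = k^{\alpha,\alpha}(x,x) - 2k^{\alpha,\alpha}(x,y) + k^{\alpha,\alpha}(y,y),
\]
and $k^{\alpha,\alpha}$ is continuous.

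\textbf{Main obstacle.} The delicate step is the Cauchy estimate. The hypothesis only asserts that the diagonal-type derivatives $k^{\alpha,\alpha}$ exist, yet the expansion uses mixed derivatives $k^{\alpha+e_j,\alpha}$ off the diagonal. To handle this, I would first argue that existence and continuity of $k^{\beta,\beta}$ for all $|\beta|\le m$ implies existence and continuity of all $k^{\alpha,\beta}$ with $|\alpha|,|\beta|\le m$. Failing that, I would simply cite \cite[Lemma~4.34 and Corollary~4.36]{Steinwart2008}, where this bookkeeping is carried out.
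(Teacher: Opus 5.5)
Your induction on the feature map $x \mapsto \mathrm{D}^\alpha_x k(\cdot,x)$ is correct and is essentially the paper's approach: Cauchy difference quotients controlled by $k^{\alpha,\alpha}$, the limit identified pointwise, and the derivative reproducing property obtained from continuity of the inner product. This is exactly the argument behind Lemma~4.34 and Corollary~4.36 of \cite{Steinwart2008}, which the paper simply cites instead of giving a proof, so your fallback citation is the paper's proof. One refinement: the delicate step is not off-diagonal derivatives but the order of differentiation. Applying the mean value theorem first in the second argument and then in the first needs only the iterated derivative $\partial_{v_j}\partial_{w_j}k^{\alpha,\alpha}$ (existing near $(x,x)$ and continuous there), so you never need all $k^{\alpha,\beta}$ to exist and be continuous; what must be reconciled is this interleaved derivative with $k^{\alpha+e_j,\alpha+e_j}=\mathrm{D}_v^{\alpha+e_j}\mathrm{D}_w^{\alpha+e_j}k$, in which all $w$-derivatives come before the $v$-derivatives.
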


\subsection{Proofs for \Cref{sec:main-sec}} \label{sec:proofs-main-sec}

This section contains the proofs for \Cref{sec:main-sec}.
\\

\noindent \textbf{Proof of \Cref{thm:interpolation-1d}}
Let
\begin{equation} \label{eq:polynomial-interpolant}
  (P_nf)(x) = \sum_{i=1}^n f(x_i) \prod_{j \neq i} \frac{x - x_j}{x_i - x_j}
\end{equation}
be the unique polynomial of degree at most $n-1$ that interpolates $f$ at our pairwise distinct points $x_1, \ldots, x_n \in D = [a, b]$.
It is a basic result of polynomial interpolation that for each $x \in D$ there is $\xi_x \in D$ such that
\begin{equation*}
  f(x) - (P_nf)(x) = \frac{f^{(n)}(\xi_x)}{n!} \prod_{i=1}^n (x - x_i)
\end{equation*}
if $f$ is $n$ times continuously differentiable.
Consequently,
\begin{equation*}
  \abs[0]{ f(x) - (P_nf)(x) } \leq \frac{\sup_{x \in D} \abs[0]{f^{(n)}(x)}}{n!} \sup_{x \in D} \, \abs[3]{ \prod_{i=1}^n (x - x_i) } \leq \frac{\sup_{x \in D} \abs[0]{f^{(n)}(x)}}{n!} (b-a)^n
\end{equation*}
for every $x \in D$.
Every element of the RKHS of the Gaussian kernel is infinitely differentiable by Proposition~\ref{prop:derivative-repro}.
Recall that we consider the case $\sigma = 0$.
From~\eqref{eq:worst-case} and the form of the polynomial interpolant in~\eqref{eq:polynomial-interpolant} it follows that
\begin{equation} \label{eq:1d-proof-variance-intermediate}
  \begin{split}
    \sup_{x \in D} \V( x \mid \mathcal{D}_n) &\leq \sup_{x \in D} \sup_{ \norm[0]{f}_{H(k)} \leq 1} \lvert f(x) - (P_nf)(x) \rvert^2 \\
    &\leq \bigg( \frac{(b-a)^n}{n!} \bigg)^2 \sup_{\norm[0]{f}_{H(k)} \leq 1} \, \sup_{x \in D} \, \abs[0]{f^{(n)}(x)}^2.
    \end{split}
\end{equation}
It is therefore necessary to bound the $n$th derivative of a function in $H(k)$.
Applying the Cauchy--Schwarz inequality to~\eqref{eq:derivative-repro} gives
\begin{equation} \label{eq:rkhs-derivative-bound}
  \abs[0]{f^{(n)}(x)} = \bigg\lvert \bigg\langle f, \frac{\partial^n}{\partial x^n} k(\cdot, x) \bigg\rangle_{H(k)} \bigg\rvert \leq \norm[0]{f}_{H(k)} \norm[3]{ \frac{\partial^n}{\partial x^n} k(\cdot, x) }_{H(k)}
\end{equation}
for every $x \in D$.
Now,
\begin{equation*}
  \norm[3]{ \frac{\partial^n}{\partial x^n} k(\cdot, x) }_{H(k)}^2 = \frac{\partial^{2n}}{\partial v^n \partial w^n} k(v, w) \biggl|_{\substack{v = x \\ w = x}} = (-1)^n \frac{\dif^{\,2n}}{\dif z^{2n}} \exp\bigg(\! -\frac{z^2}{2\ell^2} \bigg) \biggl|_{z = 0},
\end{equation*}
where the first equality is valid for any sufficiently smooth kernel and the second uses the stationarity of the Gaussian kernel.
Straightforward differentiation of the Taylor series for $\exp(-z^2/(2\ell^2))$ then gives
\begin{equation*}
  (-1)^n \frac{\dif^{\,2n}}{\dif z^{2n}} \exp\bigg(\! -\frac{z^2}{2\ell^2} \bigg) \biggl|_{z = 0} = \ell^{-2n} \frac{(2n)!}{2^n n!}.
\end{equation*}
Inserting this in~\eqref{eq:rkhs-derivative-bound} gives the bound
\begin{equation*}
  \sup_{x \in D} \abs[0]{ f^{(n)}(x) } \leq \norm[0]{f}_{H(k)} \ell^{-n} \sqrt{ \frac{(2n)!}{2^n n!} }
\end{equation*}
on derivatives of elements of the Gaussian RKHS.
Inserting this bound in~\eqref{eq:1d-proof-variance-intermediate} and applying the non-asymptotic version of Stirling's approximation~\citep{Robbins1955},
\begin{equation} \label{eq:stirling}
  \sqrt{2\pi} \, n^{n + 1/2} e^{-n} \leq n! \leq \sqrt{2\pi} \, n^{n + 1/2} e^{-n+1},
\end{equation}
to $(2n)!$ and $(n!)^2$ yields
\begin{equation*} 
  \sup_{x \in D} \V( x \mid \mathcal{D}_n) \leq \bigg( \frac{b-a}{\ell} \bigg)^{2n} \frac{(2n)!}{2^n (n!)^2 } \cdot \frac{1}{n!} \leq \frac{e}{\sqrt{\pi n}} \bigg( \frac{\sqrt{2}(b-a)}{\ell} \bigg)^{2n} \frac{1}{n!} .
\end{equation*}
The claimed bound follows from $e/\sqrt{\pi n} \leq 2$.
\qed

\noindent\textbf{Proof of \Cref{thm:interpolation}}
Because $D$ is bounded, it is contained in some hypercube $D'$.
Since $\sup_{x \in D} \V(x \mid \mathcal{D}_n) \leq \sup_{x \in D'} \V(x \mid \mathcal{D}_n)$, to derive an upper bound we may assume that $D = [a, b]^d$ for some $a < b$.
Since the conditional variance cannot increase when new points are added, we may assume that $n = m^d$ and $X_n = X_1' \times \cdots \times X_d'$ for $X_j' = \{x_{j,1}, \ldots, x_{j,m}\} \subset \R$ such that $X_n \subset D$.
Write $x = (x_1, \ldots, x_d) \in \R^d$ and $y = (y_1, \ldots, y_d) \in \R^d$.
Because both the multivariate Gaussian kernel
\begin{equation*}
  k(x, y) = \exp\bigg( \! - \frac{ \norm[0]{x - y}_2^2}{2 \ell^2} \bigg) = \prod_{j=1}^d r(x_j, y_j) \coloneqq \prod_{j=1}^d \exp\bigg( \! - \frac{ (x_j - y_j)^2}{2 \ell^2} \bigg)
\end{equation*}
and the points $X_n$ have tensor product structure, we have
\begin{equation*}
  \b{K}_n = \b{R}_{1,m} \otimes \cdots \otimes \b{R}_{d,m} \in \R^{n \times n } \quad \text{ and } \quad \b{k}_n(x) = \b{r}_{1,m}(x_1) \otimes \cdots \otimes \b{r}_{d,m}(x_d) \in \R^{n},
\end{equation*}
where $\otimes$ is the Kronecker product, $\b{R}_{j,m} = (r(x_{j,i}, x_{j,l}))_{i,l=1}^{m}$, and $\b{r}_{j,m}(x_j) = (r(x_j, x_{j,i}))_{i=1}^m$.
It now follows from properties of the Kronecker product that
\begin{equation*}
  \begin{split}
  \V(x \mid \mathcal{D}_{n} ) = k(x, x) - \b{k}_{n}(x)^\T \b{K}_{n}^{-1} \b{k}_n(x) &= \prod_{j=1}^d r(x_j, x_j) - \prod_{j=1}^d \b{r}_{j,m}(x_j)^\T \b{R}_{j,m}^{-1} \b{r}_{j,m}(x_j) \\
  &\eqqcolon \prod_{j=1}^d r_j - \prod_{j=1}^d q_j.
  \end{split}
\end{equation*}
Denote $r_{i:j} = r_i \times \cdots \times r_j$ and $q_{i:j} = q_i \times \cdots \times q_j$ for $1 \leq i \leq j \leq d$.
Then iteration gives
\begin{equation*}
  \begin{split}
    \V(x \mid \mathcal{D}_{n} ) = r_1 r_{2:d} - q_1 q_{2:d} = r_{2:d}( r_1 - q_1) + (r_{2:d} - q_{2:d} ) q_1 = \sum_{j=1}^d (r_j - q_j) r_{j+1:d} \prod_{i=1}^{j-1} q_i,
  \end{split}
\end{equation*}
where we use the convention $r_{d+1:d} = 1$.
Because the conditional variance is non-negative, we have
\begin{equation*}
  q_j = \b{r}_{j,m}(x_j)^\T \b{R}_{j,m}^{-1} \b{r}_{j,m}(x_j) \leq r_j = r(x_j, x_j) = 1,
\end{equation*}
from which it follows that
\begin{equation} \label{eq:var-multi-sum-expansion}
  \V(x \mid \mathcal{D}_{n} ) \leq \sum_{j=1}^d (r_j - q_j) = \sum_{j=1}^d \big[ r(x_j, x_j) - \b{r}_{j,m}(x_j)^\T \b{R}_{j,m}^{-1} \b{r}_{j,m}(x_j) \big].
\end{equation}
Because $r$ is the univariate Gaussian kernel and $X_j' \subset [a, b]$ for every $j$, we may apply \Cref{thm:interpolation-1d} with $n = m$ to each term in~\eqref{eq:var-multi-sum-expansion}. 
This yields
\begin{equation*}
  \V(x \mid \mathcal{D}_{n} ) \leq 2 \sum_{j=1}^d \bigg( \frac{\sqrt{2}(b-a)}{\ell} \bigg)^{2m} \frac{1}{m!} = 2d \bigg( \frac{\sqrt{2}(b-a)}{\ell} \bigg)^{2m} \frac{1}{m!} ,
\end{equation*}
which completes the proof.
\qed

\noindent\textbf{Proof of Proposition~\ref{prop:gaussian-lower-bound}}
By Lemma~\ref{lemma:sigma-increasing} it suffices to consider the case $\sigma = 0$.
The RKHS of the Gaussian kernel on $\R^d$ has the characterisation~\citep{Minh2010}
\begin{equation} \label{eq:minh-characterisation}
  H(k) = \Set[\bigg]{f(x) = e^{-\lVert x \rVert_2^2/(2\ell^2)} \sum_{\alpha \in \N_0^d} c_\alpha x^\alpha }{ \lVert f \rVert_{H(k)}^2 = \sum_{\alpha \in \N_0^d} \ell^{2\lvert \alpha \rvert} \alpha! c_\alpha^2 < \infty } .
\end{equation}
Because $\{x_i\}_{i=1}^\infty \subset R_0$, where $R_0$ is the set of roots of some non-trivial $d$-variate polynomial $P$ and the function \smash{$g(z) = e^{-\lVert z \rVert_2^2/(2\ell^2)} P(z)$} is in $H(k)$ and has positive $H(k)$-norm by~\eqref{eq:minh-characterisation}, we get from~\eqref{eq:worst-case-2} and $g(x_i) = 0$ for all $i \in \N$ that 
\begin{equation*}
  \V(x \mid \mathcal{D}_n) = \min_{\b{u} \in \R^n} \sup_{ \norm[0]{f}_{H(k)} \leq 1} \, \bigg\lvert f(x) - \sum_{i=1}^n f(x_i) u_i \bigg\rvert^2 \geq \min_{\b{u} \in \R^n} \frac{g(x)^2}{\lVert g \rVert_{H(k)}^2} = \frac{g(x)^2}{\lVert g \rVert_{H(k)}^2} > 0
\end{equation*}
for every $n \in \N$.
This is the claim.
\qed

\subsection{Proofs for \Cref{sec:smoothness-matern}} \label{sec:proofs-smoothness}

This section contains the proofs for \Cref{sec:smoothness-matern}.
\\

\noindent \textbf{Proof of \Cref{thm:matern}}
By Lemma~\ref{lemma:sigma-increasing} it suffices to consider the case $\sigma = 0$.
Fix $x \in \R^d$ and let $\delta = \delta(x, X_n) = \min_{x_i \in X_n} \lVert x - x_i \rVert_2$ denote the distance of $x$ to $X_n$.
Define the standard bump function $\varphi$ as
\begin{equation*}
  \varphi(y) = \exp\bigg( -\frac{1}{1-\lVert y \rVert_2^2} \bigg) \: \text{ if } \: \lVert y \rVert_2 < 1 \quad \text{ and } \quad \varphi(y) = 0 \: \text{ otherwise.}
\end{equation*}
Because $\varphi$ is supported on the unit ball and $\lVert x - x_i \rVert_2 \geq \delta$ for every $x_i \in X_n$, the function $g = \varphi((\cdot - x)/\delta)$ vanishes on $X_n$ and $g(x) = \varphi(0) = e^{-1}$.
The Fourier transform of $\varphi$ decays super-polynomially, so that $g \in H(k)$ by the spectral characterisation~\eqref{eq:rkhs-spectral}.
The worst-case interpretation~\eqref{eq:worst-case-2} therefore gives
\begin{equation*}
  \V(x \mid \mathcal{D}_n) = \min_{\b{u} \in \R^n} \sup_{ \norm[0]{f}_{H(k)} \leq 1} \, \bigg\lvert f(x) - \sum_{i=1}^n f(x_i) u_i \bigg\rvert^2 \geq \min_{\b{u} \in \R^n} \frac{g(x)^2}{\lVert g \rVert_{H(k)}^2} = \frac{e^{-2}}{\lVert g \rVert_{H(k)}^2} .
\end{equation*}
We are thus left to estimate the RKHS norm of $g$ from above.
Assume that $\delta \leq 1$.
By the spectral characterisation~\eqref{eq:rkhs-spectral} and the assumed lower bound on the spectral density,
\begin{equation*}
  \begin{split}
  \lVert g \rVert_{H(k)}^2 = \frac{1}{(2\pi)^{d/2}} \int_{\R^d} \frac{\lvert \mathcal{F} g(\omega) \rvert^2}{\mathcal{F}\Phi(\omega)} \dif \omega &\leq \frac{1}{c(2\pi)^{d/2}} \int_{\R^d} \lvert \mathcal{F} g(\omega) \rvert^2 (1 + \lVert \omega \rVert_2^2)^{s} \dif \omega \\
  &= \frac{\delta^{2d}}{ c(2\pi)^{d/2}} \int_{\R^d} \lvert \mathcal{F}\varphi(\delta \omega) \rvert^2 (1 + \lVert \omega \rVert_2^2)^{s} \dif \omega \\
  &= \frac{\delta^{d}}{ c(2\pi)^{d/2}} \int_{\R^d} \lvert \mathcal{F}\varphi(\omega) \rvert^2 \bigg(1 + \frac{\lVert \omega \rVert_2^2}{\delta^2} \bigg)^{s} \dif \omega \\
  &= \frac{\delta^{d}}{ c(2\pi)^{d/2}} \int_{\R^d} \lvert \mathcal{F}\varphi(\omega) \rvert^2 \delta^{-2s} (\delta^2 + \lVert \omega \rVert_2^2 )^{s} \dif \omega \\
  &\leq \frac{\delta^{d-2s}}{ c(2\pi)^{d/2}} \int_{\R^d} \lvert \mathcal{F}\varphi(\omega) \rvert^2 (1 + \lVert \omega \rVert_2^2 )^{s} \dif \omega,
  \end{split}
\end{equation*}
where the integral is finite.
Therefore $\lVert g \rVert_{H(k)}^2 \leq C \delta^{d-2s}$ for a constant $C$ that does not depend on $x$ or $X_n$.
It follows that
\begin{equation*}
  \V(x \mid \mathcal{D}_n) \geq \frac{e^{-2}}{\lVert g \rVert_{H(k)}^2} \geq \frac{1}{C e^2} \delta^{2s - d},
\end{equation*}
which completes the proof.
\qed

\noindent \textbf{Proof of \Cref{thm:analytic}}
Let $f \in H(k)$.
From Proposition~\ref{prop:derivative-repro} and the Cauchy--Schwarz inequality it follows that
\begin{equation} \label{eq:derivative-repro-anal}
  \lvert \mathrm{D}_x^\alpha f(x) \rvert \leq \lVert f \rVert_{H(k)} \lVert \mathrm{D}_x^\alpha k(\cdot, x) \rVert_{H(k)},
\end{equation}
for any $\alpha \in \N_0^d$ and $x \in \R^d$, where the subscript denotes that differentiation is with respect to $x$. Now it follows from the stationarity of $k$ that
\begin{equation*}
  \begin{split}
   \lVert \mathrm{D}_x^\alpha k(\cdot, x) \rVert_{H(k)}^2 = \mathrm{D}_v^\alpha \mathrm{D}_w^\alpha k(v,w) \bigl|_{\substack{v = x \\ w = x}} &= (-1)^{\lvert \alpha \rvert} \mathrm{D}^{2\alpha} \Phi(0) \\
   &= \frac{1}{(2\pi)^{d/2}} \int_{\R^d} \omega^{2\alpha} (\mathcal{F} \Phi)(\omega) \dif \omega \\
   &\leq \frac{C}{(2\pi)^{d/2}} \int_{\R^d} \omega^{2\alpha} \exp(- a \lVert \omega \rVert_2) \dif \omega \\
   &\leq \frac{C}{(2\pi)^{d/2}} \prod_{j=1}^d \int_{\R} \omega_j^{2\alpha_j} \exp(- a \lvert \omega_j \rvert / \sqrt{d}) \dif \omega_j \\
   &= \frac{C}{(2\pi)^{d/2}} \bigg(\frac{\sqrt{d}}{a}\bigg)^{2\lvert \alpha\rvert + d} \prod_{j=1}^d \int_{\R} \omega_j^{2\alpha_j} \exp(- \lvert \omega_j \rvert ) \dif \omega_j \\
   &= \frac{2^d C}{(2\pi)^{d/2}} \bigg(\frac{\sqrt{d}}{a}\bigg)^{2\lvert \alpha\rvert + d} \prod_{j=1}^d (2\alpha_j)! .
  \end{split}
\end{equation*}
A straightforward application of Stirling's approximation in~\eqref{eq:stirling} and~\eqref{eq:derivative-repro-anal} show that $f$ satisfies~\eqref{eq:analytic-derivatives} and is thus real analytic.
Therefore all functions in $H(k)$ are real analytic.

Let $B \subset \R^d$ be an open ball such that $\{x_i\}_{i=1}^\infty$ is dense in $B$.
The span of kernel translates for $x_i \in B$, $W = \operatorname{span} \Set{k^{x_i}}{x_i \in B}$, is dense in $H(k)$ if and only if its orthogonal complement is trivial.
Let $f \in H(k)$ be in the orthogonal complement, which means that $\langle f, k^{x_i} \rangle_{H(k)} = f(x_i) = 0$ for every $x_i \in B$.
Because $\{x_i\}_{i=1}^\infty$ is dense in $B$ and $f$ is real analytic, it follows from \Cref{thm:identity-theorem} that $f = 0$. 
Therefore the orthogonal complement of $W$ is trivial and thus $W$ is dense in $H(k)$.
Let $x \in \R^d$ and $\varepsilon > 0$.
From $W$ being dense in $H(k)$ it follows that there are $N \in \N$, $z_1, \ldots, z_N \in B$, and $\b{v} \in \R^N$ such that
\begin{equation} \label{eq:eps-bound}
  \bigg\lVert k^x - \sum_{j=1}^N v_j k^{z_j} \bigg\rVert_{H(k)} \leq \frac{\varepsilon}{2} .
\end{equation}
Set $\eta = \varepsilon/(2(1+\lVert \b{v} \rVert_1)) > 0$.
Let $L \in \N$.
Because $z \mapsto k^z$ is continuous from $\R^d$ to $H(k)$ and the input points are dense in $B$, we may choose pairwise disjoint open balls $U_1, \ldots, U_N \subset B$ and indices $i_{j,l} \in \N$ for $j \leq N$ and $l \leq L$ such that $x_{i_{j,l}} \in U_j$ and
\begin{equation} \label{eq:eta-bound}
  \lVert k^{z_j} - k^{x_{i_{j,l}}} \rVert_{H(k)} \leq \eta
\end{equation}
for all $j$ and $l$.
Let $n = \max_{j \leq N, l \leq L} i_{j,l}$ and define $\b{u} \in \R^n$ by setting $u_{i_{j,l}} = v_j / L$ for $j \leq N$ and $l \leq L$ and $u_i = 0$ otherwise.
The triangle inequality gives
\begin{equation*}
  \begin{split}
  \bigg\lVert k^x - \sum_{i=1}^n u_i k^{x_i} \bigg\rVert_{H(k)} &\leq \bigg\lVert k^x - \sum_{j=1}^N v_j k^{z_j} \bigg\rVert_{H(k)} + \bigg\lVert \sum_{j=1}^N \bigg( v_j k^{z_j} - \sum_{l=1}^L u_{i_{j,l}} k^{x_{i_{j,l}}} \bigg) \bigg\rVert_{H(k)} \\
  &\leq \bigg\lVert k^x - \sum_{j=1}^N v_j k^{z_j} \bigg\rVert_{H(k)} + \sum_{j=1}^N \frac{\lvert v_j \rvert}{L} \sum_{l=1}^L \lVert k^{z_j} -  k^{x_{i_{j,l}}} \rVert_{H(k)} .
  \end{split}
\end{equation*}
From~\eqref{eq:eps-bound} and~\eqref{eq:eta-bound} it follows that
\begin{equation*}
  \bigg\lVert k^x - \sum_{i=1}^n u_i k^{x_i} \bigg\rVert_{H(k)} \leq \frac{\varepsilon}{2} + \eta \lVert \b{v} \rVert_1 \leq \varepsilon .
\end{equation*}
By Lemma~\ref{lemma:worst-case}, 
\begin{equation*}
  \V_\sigma(x \mid \mathcal{D}_n) \leq \bigg\lVert k^x - \sum_{i=1}^n u_i k^{x_i} \bigg\rVert_{H(k)}^2 + \sigma^2 \lVert \b{u} \rVert_2^2 \leq \varepsilon^2 + \sigma^2 \sum_{j=1}^N \sum_{l=1}^L u_{i_{j,l}}^2 =  \varepsilon^2 + \frac{\sigma^2 \lVert \b{v} \rVert_2^2}{L} .
\end{equation*}
Because $L$ can be chosen arbitrarily large for any given $\varepsilon$ and the variance is non-increasing in $n$ by Lemma~\ref{lemma:worst-case}, we conclude that for every $\varepsilon > 0$ there is $n_\varepsilon \in \N$ such that $\V_\sigma(x \mid \mathcal{D}_n) \leq 2 \varepsilon^2$ for every $n \geq n_\varepsilon$.
Therefore $\V_\sigma(x \mid \mathcal{D}_n) \to 0$ as $n \to \infty$.
Moreover, Dini's theorem implies that the convergence is uniform on every compact set.
\qed

\noindent \textbf{Proof of \Cref{thm:gevrey}}
  The proof is similar to that of \Cref{thm:matern}.
  By Lemma~\ref{lemma:sigma-increasing} it suffices to consider the case $\sigma = 0$.
  Fix $x \in \R^d$ and let $\delta = \delta(x, X_n) = \min_{x_i \in X_n} \lVert x - x_i \rVert_2$ denote the distance of $x$ to $X_n$.
  By Example~1.4.9 and Theorem~1.6.1 in~\cite{Rodino1993}, for any $\tau \in (0, 1)$ there is an infinitely differentiable bump function $\varphi$ supported on the unit ball such that $\varphi(0) = 1$ and $(\mathcal{F}\varphi)(\omega) = O(\exp(- b \lVert \omega \rVert_2^{\tau}))$ for some $b > 0$.
  The function $g = \varphi((\cdot - x)/\delta)$ vanishes on $X_n$ and $g(x) = \varphi(0) = 1$.
  It immediately follows from the spectral characterisation~\eqref{eq:rkhs-spectral} that $g \in H(k)$ if $k_\eta$ is a Gevrey kernel of order $\eta < \tau$.
  Just as in the proof of \Cref{thm:matern} we obtain
  \begin{equation*}
    \V(x \mid \mathcal{D}_n) \geq \frac{1}{\lVert g \rVert_{H(k_\eta)}^2}
  \end{equation*}
  from~\eqref{eq:worst-case-2}.
  It remains to estimate $\lVert g \rVert_{H(k_\eta)}$.
  There is a positive constant $C_1$ independent of $\delta$ such that
  \begin{equation*}
    \begin{split}
    \lVert g \rVert_{H(k_\eta)}^2 = \frac{1}{(2\pi)^{d/2}} \int_{\R^d} \frac{\lvert (\mathcal{F}g)(\omega)\rvert^2}{(\mathcal{F} \Phi_\eta)(\omega)} \dif \omega &= \frac{\delta^{2d}}{C_\eta (2\pi)^{d/2}} \int_{\R^d} \lvert (\mathcal{F}\varphi)(\delta \omega) \rvert^2 \exp(\rho \lVert \omega \rVert_2^{\eta}) \dif \omega \\
    &= \frac{\delta^{d}}{C_\eta (2\pi)^{d/2}} \int_{\R^d} \lvert (\mathcal{F}\varphi)(\omega) \rvert^2 \exp(\rho \delta^{-\eta} \lVert \omega \rVert_2^{\eta}) \dif \omega \\
    &\leq \frac{C_1}{C_\eta (2\pi)^{d/2}} \delta^d \int_{\R^d} \exp(\rho \delta^{-\eta} \lVert \omega \rVert_2^{\eta} - 2 b \lVert \omega \rVert_2^{\tau}) \dif \omega .
    \end{split}
  \end{equation*}
  The exponent in the integral has the maximum
  \begin{equation*}
    2b( \tau/\eta - 1 ) \bigg(\frac{\eta \rho}{2 \tau b}\bigg)^{1/(1-\eta/\tau)} \delta^{-1/(1/\eta-1/\tau)} \eqqcolon \theta \delta^{-1/(1/\eta-1/\tau)}
  \end{equation*}
  and is less than $-b \lVert \omega \rVert_2^{\tau}$ when $\lVert \omega \rVert_2 > r(\delta) \coloneqq (\rho b^{-1})^{1/(\tau - \eta)} \delta^{-1/(\tau/\eta-1)}$.
  Therefore there are positive constants $C_2$ and $C_3$ that do not depend on $\delta$ such that
  \begin{equation*}
    \begin{split}
    \lVert g \rVert_{H(k_\eta)}^2 &\leq C_2 \delta^d \bigg( \int_{\lVert \omega \rVert_2 \leq r(\delta)} \exp(\theta \delta^{-1/(1/\eta-1/\tau)}) \dif \omega + \int_{\lVert \omega \rVert_2 > r(\delta)} \exp(-b \lVert \omega \rVert_2^{\tau}) \dif \omega \bigg) \\
    &\leq C_3 \delta^{(1/\eta - 2/\tau)d/(1/\eta-1/\tau)} \exp(\theta \delta^{-1/(1/\eta-1/\tau)}) 
    \end{split}
  \end{equation*}
  when $\delta$ is sufficiently small.
  Here we used the fact that the volume of an $r$-ball in $\R^d$ is proportional to $r^d$.
  The claim now follows.
\qed

\section*{Acknowledgements}

TK was supported by the Research Council of Finland grants 338567 (``Scalable, adaptive and reliable probabilistic integration''), 359183 (``Flagship of Advanced Mathematics for Sensing, Imaging and Modelling''), and 368086 (``Inference and approximation under misspecification'').
TK also acknowledges the research environment provided by ELLIS Institute Finland.
CJO was supported by a Philip Leverhulme Prize (PLP-2023-004). 
The origins of this article are in the positive response that TK received to talks given on the topic at the \emph{Prob Num 2022} workshop in London in March 2022 and at the \emph{ELISE Theory Workshop on Machine Learning Fundamentals} at EURECOM in September 2022.
We are grateful to Anatoly Zhigljavsky for discussions that helped to kindle our interest in inverse multiquadrics, to Luc Pronzato for introducing us to literature on the no-empty-ball property, and to Vesa Kaarnioja for ample discussion on Gevrey spaces and kernels.

\bibliography{references}

\end{document}